\documentclass[11pt]{article}
\usepackage[letterpaper, margin=1in]{geometry}
\usepackage{amsmath,amsthm}
\usepackage{newtx}
\usepackage{graphicx}
\usepackage{caption} 
\usepackage{xcolor}
\definecolor{linkcolor}{RGB}{40,70,120}
\usepackage[
    colorlinks=true,
    allcolors=linkcolor
]{hyperref}
\usepackage{authblk}
\usepackage{algorithm, algorithmic}
\usepackage[round, authoryear]{natbib}
\usepackage{cleveref}

\theoremstyle{plain}
\newtheorem{theorem}{Theorem}
\newtheorem{lemma}[theorem]{Lemma}
\newtheorem{corollary}[theorem]{Corollary}
\newtheorem{proposition}{Proposition}
\theoremstyle{definition}
\newtheorem{definition}{Definition}
\newtheorem{assumption}{Assumption}
\theoremstyle{remark}
\newtheorem{remark}{Remark}

\newcommand{\acks}[1]{\section*{Acknowledgments}#1}

\begin{document}

\title{Blackwell Approachability and Gradient Equilibrium are Equivalent}
\author[1]{Brian W. Lee}
\author[1]{Nika Haghtalab}
\author[1, 2]{Michael I. Jordan}
\author[1]{Ryan J. Tibshirani}

\affil[1]{University of California, Berkeley}
\affil[2]{Inria \& École Normale Supérieure}
\date{}

\maketitle

\begingroup
\renewcommand{\thefootnote}{}
\footnotetext{Accepted for presentation at the Conference on Learning Theory (COLT) 2026.}
\endgroup

\vspace{-0.8cm}
\begin{abstract}
Gradient equilibrium (GEQ) is a recently introduced online optimization framework that generalizes first-order stationarity from offline optimization and abstracts problems like online conformal prediction. 
While GEQ has curious similarities with known online learning frameworks, namely regret minimization, prior work has shown that GEQ error and regret are incomparable objectives, leaving open a precise understanding of how GEQ fits into the broader online learning landscape.

In this work, we show that GEQ is equivalent to Blackwell approachability in the algorithmic sense. 
That is, a Blackwell approachability problem can always be solved using queries to a black-box GEQ oracle, with no asymptotic loss in the oracle's error rate, and vice versa. 
Taken together with known equivalences between approachability, regret minimization, and calibration, these results imply that GEQ is equivalent to these frameworks, as well. 
Our reductions are efficient and can be used to transfer refined guarantees, such as optimism and strong adaptivity, from regret minimization to GEQ. 
Along the way, we also identify necessary and sufficient conditions for GEQ, and establish reductions between different notions of GEQ with unconstrained and constrained decision sets.
\end{abstract}

\vspace{-0.5cm}
\section{Introduction}\label{sec:intro}
\vspace{-0.3cm}
The many frameworks studied in online learning, from regret minimization to calibration, formalize different notions of what it means to make good decisions against an adaptive adversary. 
Though at first glance, these frameworks appear both syntactically and semantically different from one another, celebrated works have revealed structural connections between them that make oracle reductions possible \citep{FOSTER199740, foster_proof_1999, 2e2b81f3-680a-335e-852c-e26662afca5e, cesa-bianchi_prediction_2006, NIPS2006_f14bc21b, KAKADE2008115, 10.1287/moor.1100.0465, abernethy_blackwell_2011, perchet2013approachability, shimkin_online_2016}. 
These reduction techniques underlie recent algorithmic advances in calibration \citep{pmlr-v80-hebert-johnson18a, foster_forecast_2021, pmlr-v238-okoroafor24a}, multi-objective learning \citep{lee2022online, haghtalab2023unifying}, and related studies of ``universally useful" predictors \citep{okoroafor2025near, balakrishnan2025panprediction}.

\begin{figure*}[t]
    \centering
    \includegraphics[width=1\linewidth]{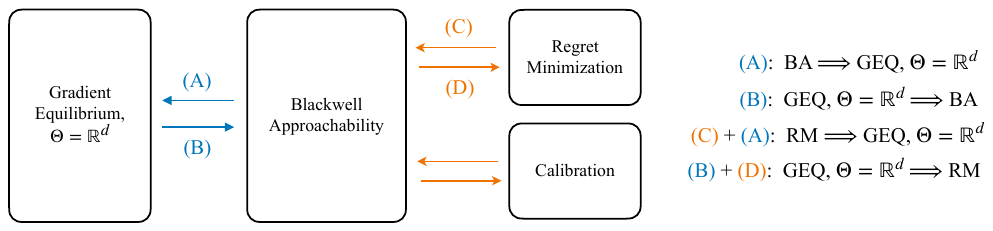}
    \caption{
        Our main contributions are black-box oracle reductions that establish an equivalence between gradient equilibrium (GEQ) and Blackwell approachability (BA); shown in blue. 
        Taken together with known reductions between BA, regret minimization (RM), and calibration (Cal) from \citet{abernethy_blackwell_2011} and \citet{perchet2013approachability}, shown in orange, our results clarify the connections between GEQ, RM, and Cal as well.  
    }
    \label{fig:flow}
\end{figure*} 

Recently, \citet{JMLR:v26:25-0356} introduced a new framework for online optimization.
Instead of selecting decisions to minimize regret relative to the best fixed decision in hindsight---the classical goal of online optimization algorithms---\citet{JMLR:v26:25-0356} asked whether it is possible to make decisions that drive the time-average of the subgradients of losses to zero, and whether this objective enjoys advantages over regret.
Formally, this objective, called \textit{gradient equilibrium} (GEQ), is defined as follows. 
Fix the decision set $\Theta = \mathbb{R}^d$ and a set of loss functions $\mathcal{L}$, where each loss $\ell \in \mathcal{L}$ takes the form $\ell: \Theta \to \mathbb{R}$ and admits a subgradient $g(\theta) \in \partial \ell(\theta)$.\footnote{Throughout Section~\ref{sec:intro}, we assume $\Theta = \mathbb{R}^d$. The general definition of GEQ with $\Theta \subseteq \mathbb R^d$ is given in Section~\ref{sec:def}.}
A sequence of decisions $\{ \theta_t \}_{t \geq 1}$, with $\theta_t \in \Theta$ for all $t \geq 1$, achieves GEQ on the sequence of losses  $\{ \ell_t \}_{t \geq 1}$, with $\ell_t \in \mathcal{L}$ for all $t \geq 1$, if
\[
\left\| \frac{1}{T} \sum_{t=1}^{T} g_t(\theta_t) \right\|_2 \to 0 \quad \text{as} \quad T \to \infty.
\]
GEQ offers a way to lift statistical problems, whose goals are not easily expressed as regret relative to~an~oracle decision, to the online setting. 
For example, when losses are instantiated with pinball losses, the corresponding GEQ problem is online quantile debiasing, which is closely related to online conformal~prediction. 

Though GEQ has curious similarities with known online learning frameworks, it has important differences, and a precise understanding of how GEQ fits into the broader online learning landscape has remained open.
A comparison of GEQ and regret minimization illustrates the tensions. 
To begin, both frameworks lift a first-order condition from offline optimization to the online setting. 
GEQ takes the stationarity condition, $g(\theta^\star) = 0$, which is meaningful for both convex and nonconvex losses, and requires that it is enforced \textit{on average over time} and \textit{in the limit}. 
Analogously, regret minimization (after reducing online convex optimization to online linear optimization) requires that the variational inequality for first-order optimality in offline convex optimization, $\sup_{\theta \in \Theta} \langle g(\theta^\star), \theta^\star - \theta \rangle \leq 0$, holds on average over time and in the limit, 
\[
\sup_{\theta \in \Theta} \frac{1}{T} \sum_{t=1}^{T} \langle g_t(\theta_t), \theta_t - \theta \rangle \to 0 \quad \text{as} \quad T \to \infty.
\]
Despite these parallels, there are notable differences between GEQ and regret minimization.
\begin{itemize}
    \item \textbf{Incomparable objectives}. 
    \citet{JMLR:v26:25-0356} showed that GEQ error and regret are incomparable objectives, in the sense that a sequence of decisions can achieve GEQ on a sequence of losses but not vanishing regret, and vice versa. 
    This is in contrast with how, in offline convex optimization, the stationarity and variational inequality conditions for optimality imply one another. 
    Furthermore, GEQ error is absolute, while regret is defined as error relative to an oracle decision. 
    
    \item \textbf{Incomparable sufficient conditions}. 
    The conditions on the loss class $\mathcal{L}$ and decision set $\Theta$ under which GEQ can be achieved are incomparable to the conditions under which vanishing regret can be achieved. 
    GEQ can be achieved when losses satisfy a regularity condition called restorativity, while regret minimization can be achieved when losses are convex. 
    These classes are incomparable, as there exist nonconvex losses that are restorative and convex losses that are not restorative. 
    Furthermore, GEQ can be achieved when the decision set is unconstrained, whereas regret minimization typically presumes bounded decision sets, as significant care is needed to control regret relative to all of $\mathbb R^d$.
    
    \item \textbf{Algorithmic differences}. 
    Algorithms for GEQ resemble regret minimization algorithms, but differ from them in important ways. 
    For example, online gradient descent (OGD) with a step size that is constant with respect to all problem parameters, including time horizon, is the canonical algorithm for GEQ.
    While OGD is a canonical algorithm for regret minimization as well, its step size must either be selected with knowledge of problem parameters, or tuned adaptively. 
\end{itemize}

\paragraph{Contributions.}
We clarify how GEQ fits into the broader online learning landscape by showing that it is equivalent to Blackwell approachability in the algorithmic sense.
It follows that GEQ is also equivalent to all frameworks known to be equivalent to approachability, such as regret minimization and calibration; see Figure~\ref{fig:flow}.
Hence, while the GEQ guarantee is semantically different from known online learning guarantees, GEQ algorithms are equally powerful as classical regret minimization and calibration algorithms. 
After reviewing relevant frameworks in Section~\ref{sec:prelims}, we present the following results. 
\begin{itemize}
    \item \textbf{Reducing GEQ to BA}. 
    In Section~\ref{sec:ba-to-geq}, we present a black-box oracle reduction that converts any algorithm that solves approachability problems into an algorithm that solves GEQ problems. 
    That is, using $T$ queries to a BA algorithm with error rate $\phi(T)$ alongside elementary operations, we construct a GEQ algorithm with error rate $O(\phi(T))$.
    Along the way, we identify Blackwell's condition---a one-step equalization condition from the approachability literature---as the necessary and sufficient condition for GEQ.
    Previously, only a sufficient condition, restorativity, was known for GEQ. 
    
    \item \textbf{Reducing BA to GEQ}.
    In Section~\ref{sec:geq-to-ba}, we present a black-box oracle reduction that converts any algorithm that solves GEQ problems into an algorithm that solves BA problems. 
    That is, using $T$ queries to a GEQ algorithm with error rate $\phi(T)$ alongside elementary operations, we construct a BA algorithm with error rate $O(\phi(T))$. 
    Along the way, we show how to solve any GEQ problem with a constrained decision set using an algorithm for unconstrained GEQ, which may be of independent~interest. 
    
    \item \textbf{Applications}.
    Taken together with known reductions between approachability, regret minimization, and calibration \citep{abernethy_blackwell_2011, perchet2013approachability}, our reductions enable the use of algorithms in one framework to solve problems in another. 
    In Section~\ref{sec:algo}, we use this insight to derive algorithms for both GEQ and regret minimization.
    First, by plugging existing regret minimization algorithms into one direction of our reduction, we derive GEQ algorithms with optimistic error bounds, which are small when gradients are predictable (\Cref{cor:optim}), and strongly adaptive error bounds, which hold uniformly over all sub-intervals of time (\Cref{cor:sar}).
    Second, by plugging simple GEQ algorithms, such as OGD with constant step size, into our reduction in the other direction, we derive regret minimization algorithms that bypass step size tuning and do not require knowledge of the loss vector norm bound $L$ or time horizon $T$. 
\end{itemize}

\paragraph{Technical overview.}
Our results stem from the elementary observation that GEQ error can be written as
\[
\left\| \frac{1}{T} \sum_{t=1}^{T} g_t(\theta_t) \right\|_2 
= \mathrm{dist}\left( -\frac{1}{T} \sum_{t=1}^{T} g_t(\theta_t), \{0\} \right)
= \max_{\theta \in B_2(1)} \left\langle -\frac{1}{T} \sum_{t=1}^{T} g_t(\theta_t), \theta \right\rangle,
\]
where $\mathrm{dist}(x, S) = \inf_{s \in S}\| x - s\|_2$ denotes the Euclidean distance between a point $x$ and a set $S$ and $B_2(1)$ denotes the Euclidean ball of radius one centered at the origin.

The first equality suggests that a GEQ problem can be interpreted as a Blackwell approachability problem, where negative subgradients serve as vector payoffs and the origin serves as the singleton target set. 
Our reduction from GEQ to approachability follows from formalizing how certain conditions on the GEQ problem ensure that the induced approachability problem can be solved. 

The second equality suggests that GEQ algorithms may have the latent ability to identify dual witnesses of GEQ error, that is, $\theta = \arg\max_{\theta \in B_2(1)} \langle -T^{-1} \sum_{t=1}^{T} g_t(\theta_t), \theta \rangle$.
Our reduction from approachability to GEQ follows from formalizing how this latent ability can be used to identify directions that separate the current time-average of vector payoffs from the target set in a general approachability problem. 

\subsection{Related work}
\paragraph{Equivalences with Blackwell approachability.}
\citet{blackwell_analog_1956}'s celebrated work introduced a model of two-player repeated games with bounded vector-valued payoffs, where the primal player aims to drive the time-average of payoffs into a target set, while the dual player aims to prevent such convergence.
Early work in online learning showed that many fundamental problems, such as regret minimization \citep{2e2b81f3-680a-335e-852c-e26662afca5e, cesa-bianchi_prediction_2006, NIPS2006_f14bc21b} and calibration \citep{foster_proof_1999, 10.1287/moor.1100.0465} can be understood as particular approachability problems, and hence can be solved with approachability algorithms. 
\citet{abernethy_blackwell_2011} later showed that while general approachability problems do not always ``type check" as regret minimization problems, they can nevertheless be solved using regret minimization algorithms, establishing a powerful algorithmic equivalence between approachability and regret minimization. 
\citet{perchet2013approachability} then proved an analogous equivalence between approachability and calibration, although the proposed reductions can be inefficient. 

Algorithmic equivalences with Blackwell approachability have seeded many of the insights that underlie modern advances in calibration \citep{abernethy_blackwell_2011, foster_forecast_2021, pmlr-v238-okoroafor24a}, multi-objective learning \citep{lee2022online, haghtalab2023unifying}, and related areas that formalize notions of ``universally useful" predictors \citep{okoroafor2025near, balakrishnan2025panprediction}.
Concretely, all of these problems require enforcing many desiderata at once, which is naturally formalized through the lens of approachability. 
Algorithmic equivalences, in turn, offer many design paths to algorithms that attain the same worst-case guarantee, but differ in other properties (e.g., runtime) and empirical behavior. 

In the same spirit, we establish an algorithmic equivalence between GEQ and Blackwell approachability and study algorithmic implications for online learning writ large.

\paragraph{Gradient equilibrium}
GEQ was introduced as a framework to lift statistical problems, such as mean debiasing and conformal prediction, to the online setting \citep{JMLR:v26:25-0356}.
\citet{JMLR:v26:25-0356} observed that GEQ error morally resembles regret, but proved that they are incomparable as objectives.
The authors left open the task of formalizing the connection between GEQ and calibration. 
In this work, we show that while GEQ error may be incomparable with existing online learning objectives, GEQ algorithms are equally powerful primitives as regret minimization and calibration algorithms. 

\citet{JMLR:v26:25-0356} also generalized their definition of GEQ to allow for constrained decision sets; \citet{ding2025calibrated} later formalized an alternative notion of GEQ with constraints that more faithfully captures the desiderata of particular applications. 
We show that the former notion of GEQ with constraints is algorithmically equivalent to GEQ without constraints, thereby streamlining the task of designing general GEQ algorithms to that of designing algorithms for the simple, unconstrained case. 

\paragraph{Regret minimization}
Hannan introduced regret to measure deviations from Bayes-optimal play in repeated games \citep{cesa-bianchi_prediction_2006}.
Since then, regret to the best fixed decision in hindsight has been the de facto objective of online optimization, and a rich body of work has developed regret minimization algorithms via connections to convex analysis \citep{orabona_modern_2025}. 
Recent works have studied refined notions of regret that are small when the realized sequence of losses has exploitable structure \citep{rakhlin_optimization_2013}, as well as strongly adaptive guarantees that hold uniformly over all time intervals \citep{10.5555/3045118.3045268, jun_improved_2017}. 
Our results provide a principled way to transfer such refined guarantees to GEQ. 

\section{Models and Preliminaries}\label{sec:prelims}
To begin, we review the Blackwell approachability and gradient equilibrium frameworks.
Additional preliminaries on regret minimization and reductions between approachability and regret minimization from \citet{abernethy_blackwell_2011} are deferred to Appendices~\ref{sec:rm} and~\ref{sec:equiv}.

\paragraph{Notation.}
Let $\mathbb R^d$ for some $d \geq 1$ be the ambient space in what follows. 
Let $B_2(R) \subseteq \mathbb R^d$ be the Euclidean ball of radius $R > 0$ centered at the origin and let $S \subseteq \mathbb R^d$ be a set that contains the origin. 
Let $d(z, S) = \inf_{s \in S} \| z - s \|_2$ be the Euclidean distance between a point $z \in \mathbb R^d$ and $S$, and let $\Pi_S(z) \in \arg\min_{s \in S} \|z - s \|_2$ be the Euclidean projection of $z$ onto $S$.
Let $\sigma_S(z) = \sup_{s \in S} \langle z, s \rangle$ be the support function of $S$. 
Let $\mathrm{cone}(S) = \{ \lambda s: \, \lambda \geq 0, \, s \in S \}$ be the conic hull of $S$ and let $\mathrm{cone} (S)^\circ = \{ y \in \mathbb R^d: \, \langle x, y \rangle \leq 0 \,\,\, \forall x \in \mathrm{cone}(S) \}$ be the polar cone of $S$. 
Let $N_S(s) = \{ g \in \mathbb R^d: \, \langle g, s' - s \rangle \leq 0 \,\,\, \forall s' \in S \}$ and $T_S(s) = N_S(s)^\circ$, respectively, be the normal and tangent cone of $S$ at the point $s \in S$.
Let $\oplus$ denote the vector concatenation operation: for $z_1 \in \mathbb R^{d_1}$ and $z_2 \in \mathbb R^{d_2}$, we have $z_1 \oplus z_2 = (z_1^\top, z_2^\top)^\top \in \mathbb R^{d_1 + d_2}$.
Let $[T]$ denote the ordered set $\{ 1, \dots, T \}$.
Whenever we consider two-player games, we refer to the primal or minimizing player as the ``Learner" and the dual or maximizing player as ``Nature."
Novel results are labeled ``Theorem" or ``Lemma." 
Results from prior work that we restate for completeness are labeled ``Proposition."

\subsection{Blackwell Approachability}
Blackwell approachability (BA) is a framework for repeated two-player games with vector payoffs that extends the scalar minimax theorem.
When payoffs are real-valued, the minimax theorem implies that the Learner can drive the payoff into a set of the form $(-\infty, c]$ in one step if and only if $c$ is at least the minimax value of the game. 
When payoffs are vectors with dimension at least two, simple counterexamples make it impossible for the Learner to drive the payoff into a target set in a single step \citep{abernethy_blackwell_2011}.
Hence, approachability relaxes the Learner's goal to ensuring that the payoff approaches the target set \textit{on average over time} and \textit{in the limit}. 

\begin{definition}[Blackwell approachability]
    A BA problem is a tuple $(\mathbb R^d, A, B, f, S)$, where $A$ and $B$ are the action sets of the Learner and Nature, respectively; $f: A \times B \to \mathbb{R}^d$ is a vector payoff function; and $S \subseteq \mathbb{R}^d$ is the target set. 
\end{definition}

At each time step $t \geq 1$, the Learner selects an action $a_t$, Nature selects an action $b_t$, then the Learner observes the payoff $f(a_t, b_t)$.
A strategy of the Learner in a BA problem $(\mathbb R^d, A, B, f, S)$ is a mapping $\mathcal A: \{ \emptyset \} \cup (\cup_{t \geq 1} (\mathbb R^d)^t) \to A$ that selects action $a_t$ according to
\[
a_t = \mathcal A(f(a_1, b_1), \dots, f(a_{t-1}, b_{t-1})),
\]
with input $\emptyset$ when $t = 1$.
The Learner's goal is formalized as follows. 

\begin{definition}[Approaching target set $S$]
    Let $(\mathbb R^d, A, B, f, S)$ be a BA problem.
    The target set $S$ is approachable if there exists a strategy $\mathcal A$ of the Learner such that for all sequences of Nature's actions $\{ b_t \}_{t \geq 1}$, with $b_t \in B$ for all $t \geq 1$, using $\mathcal{A}$ to select the sequence of actions $\{ a_t \}_{t \geq 1}$, with $a_t \in A$ for all $t \geq 1$, ensures that
    \[
    d\left( \frac{1}{T}\sum_{t=1}^{T} f(a_t, b_t), S \right) \to 0 \quad \text{as} \quad T \to \infty.
    \]
\end{definition}

\citet{blackwell_analog_1956} showed that the following condition is necessary and sufficient for approachability when the target set $S$ is nonempty, closed, and convex.

\begin{proposition}[Blackwell's condition]
    Let $(\mathbb R^d, A, B, f, S)$ be a BA problem, where $S$ is a nonempty closed convex set.
    Then $S$ is approachable if and only if for all $u \in \mathbb{R}^d$, there exists an action of the Learner $a \in A$ such that for all actions of Nature $b \in B$, $\langle u, f(a, b) \rangle \leq \sigma_S(u)$. 
\end{proposition}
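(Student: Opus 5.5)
The plan is to prove the two directions of Blackwell's condition separately: sufficiency via Blackwell's projection strategy, and necessity via a fixed-action argument for Nature. Following the paper's usage, we interpret the Learner's action sets as allowing mixed actions, so that $f$ is bilinear (or at least convex in $a$ and concave in $b$) on convex action sets. We also assume payoffs are bounded, $\|f(a,b)\|_2 \le R$. Write $\bar f_T = \frac1T\sum_{t\le T} f(a_t,b_t)$.

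\emph{Sufficiency.} At step $t+1$, if $\bar f_t \in S$, play arbitrarily. Otherwise set $u_t = \bar f_t - \Pi_S(\bar f_t)$ and play the action $a$ guaranteed by the condition for $u = u_t$, so that $\langle u_t, f(a_{t+1},b)\rangle \le \sigma_S(u_t)$ for every $b \in B$.

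\begin{enumerate}
\item By the projection characterization, $\langle u_t, \Pi_S(\bar f_t)\rangle = \sigma_S(u_t)$. Hence $\langle u_t, f_{t+1} - \Pi_S(\bar f_t)\rangle \le 0$.
\item Set $D_t = d(\bar f_t,S)^2$ and $\pi_t = \Pi_S(\bar f_t)$. Since $\pi_t \in S$, we have $D_{t+1} \le \|\bar f_{t+1} - \pi_t\|^2$.
\item Write $\bar f_{t+1} - \pi_t = \frac{t}{t+1}(\bar f_t - \pi_t) + \frac{1}{t+1}(f_{t+1} - \pi_t)$ and expand the square. The cross term is nonpositive by the first step.
\item This gives the recursion $(t+1)^2 D_{t+1} \le t^2 D_t + \|f_{t+1}-\pi_t\|^2$.
\item The last term is at most a constant $C$: $f$ is bounded, and $\pi_t$ stays in a bounded region because $\bar f_t$ does.
\item Telescoping gives $D_T \le C/T$, so $d(\bar f_T, S) = O(1/\sqrt T) \to 0$.
\end{enumerate}

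\emph{Necessity.} Suppose the condition fails for some $u$. Then for every $a \in A$ there is some $b \in B$ with $\langle u, f(a,b)\rangle > \sigma_S(u)$. We want a single $b^\star$ and a gap $\delta > 0$ such that $\langle u, f(a,b^\star)\rangle \ge \sigma_S(u) + \delta$ for all $a$.

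\begin{enumerate}
\item Swap the order of quantifiers with a minimax theorem applied to the scalar game $(a,b)\mapsto\langle u, f(a,b)\rangle$. This uses convexity and compactness of the mixed action sets and bilinearity of $f$. It gives $\min_a \max_b \langle u, f(a,b)\rangle = \max_b \min_a \langle u, f(a,b)\rangle > \sigma_S(u)$, with the strict inequality uniform thanks to compactness.
\item Let Nature play $b^\star$ at every step. Then $\langle u, \bar f_T\rangle \ge \sigma_S(u) + \delta$ for all $T$.
\item Every point $z$ with $\langle u, z\rangle \ge \sigma_S(u) + \delta$ satisfies $d(z,S) \ge \delta/\|u\|$. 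This is because $\langle u, s\rangle \le \sigma_S(u)$ for all $s \in S$, and $u \neq 0$ since the condition holds trivially at $u = 0$.
\item Therefore $S$ is not approachable.
\end{enumerate}

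\emph{Main obstacle.} The delicate step is the quantifier swap in necessity. It requires regularity (compactness, convexity of mixed actions, continuity or bilinearity of $f$) that the bare problem definition leaves implicit. I would state these standing assumptions explicitly, namely finite or compact mixed action sets and a bounded, bilinear $f$, and then invoke von Neumann's or Sion's theorem. Sufficiency is routine once boundedness is fixed.
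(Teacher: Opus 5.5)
Your argument is correct under the hypotheses you add, and those hypotheses are genuinely needed. Read literally, with arbitrary $A$, $B$, $f$, the proposition fails in both directions:
\begin{itemize}
\item \emph{Sufficiency needs bounded payoffs.} Take $d=1$, $S=\{0\}$, $A=\{+,-\}$, $B=[0,\infty)$ and $f(\pm,b)=\pm b$. The condition holds, yet Nature choosing $b_t=|\sum_{s<t}f_s|+t$ keeps $|\bar f_t|\ge 1$.
\item \emph{Necessity needs attainment.} Take $S=(-\infty,0]$, $A=(0,1]$, $B$ a singleton and $f(a,b)=a$. The condition fails at $u=1$, although $a_t=1/t$ approaches $S$.
\end{itemize}

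The paper does not prove this proposition; it quotes it from \citet{blackwell_analog_1956}. Your sufficiency half is the same square-potential argument the paper gives for Algorithm~\ref{alg:proj} (the proof of Lemma~\ref{lem:proj-rob} with $m=0$). The only difference is that the paper absorbs one extra copy of the nonpositive cross term. This replaces $\|f_T-\Pi_S(\bar f_{T-1})\|_2^2$ by $\|f_T-\bar f_{T-1}\|_2^2\le 4L^2$, a constant independent of where $S$ sits. Your constant involves $\|\Pi_S(0)\|_2$, which is harmless.

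For necessity, the quantifier swap you flag as the main obstacle can be avoided in the paper's model. There the Learner's strategy is a deterministic function of the payoff history, and the guarantee must hold for every sequence of Nature's actions. So Nature may simulate the strategy and choose $b_t$ after computing $a_t$. Suppose there is a uniform gap $\inf_{a}\sup_{b}\langle u,f(a,b)\rangle=\sigma_S(u)+\delta$ with $\delta>0$. This is the only place where compactness of $A$ and lower semicontinuity of each $a\mapsto\langle u,f(a,b)\rangle$ are needed. Picking $b_t$ with $\langle u,f(a_t,b_t)\rangle\ge\sigma_S(u)+\delta/2$ then keeps $\bar f_T$ at distance at least $\delta/(2\|u\|_2)$ from $S$.

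This adaptive argument uses no convexity, mixing, or bilinearity, which the paper's framework does not presuppose. It therefore also covers finite pure action sets, which your minimax route does not. What von Neumann's or Sion's theorem buys you is a stronger conclusion: a single stationary action $b^\star$ excludes $S$, i.e.\ Blackwell's approachable-or-excludable dichotomy.
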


Blackwell's condition requires that for any direction $u \in \mathbb R^d$, the Learner can choose an action such that the next payoff is guaranteed to fall in the supporting halfspace of the target set $S$ with normal vector $u$: $H(u) = \left\{ x \in \mathbb R^d: \, \langle u, x \rangle \leq \sigma_S(u) \right\}$.
Throughout the sequel, we make the following assumption. 

\begin{assumption}\label{as:ba}
    The BA problem $(\mathbb R^d, A, B, f, S)$ satisfies the following conditions. 
    For all $a \in A$, $b \in B$, $\| f(a, b) \|_2 \leq L$ for some $L \geq 0$, and $S \subseteq \mathbb R^d$ is a nonempty closed convex set that satisfies Blackwell's condition.
\end{assumption}

The geometric interpretation of Blackwell's condition suggests a two-step strategy, formalized in Algorithm~\ref{alg:proj}. 
At each time step $t$, identify a hyperplane that separates the target set and the current time-average of payoffs, then play an action that ensures the next payoff falls on the correct side of that hyperplane.
Concretely, a separating hyperplane can be identified with the normal vector $u_t = \bar f_{t-1} - \Pi_S(\bar f_{t-1})$, which we interpret as the Learner's approach direction at time $t$. 
Given an approach direction, the Learner can select an action $a_t$ by solving a scalar minimax problem with payoff $\langle u_t, f(a, b) - \Pi_S(\bar{f}_{t-1}) \rangle$.
Blackwell's condition guarantees that this problem has a solution, and we abstract the computation into a halfspace oracle. 

\begin{definition}[Halfspace oracle]
    Let $(\mathbb R^d, A, B, f, S)$ be a BA problem that satisfies Assumption~\ref{as:ba}. 
    A \emph{halfspace oracle}, denoted by $\mathcal O_H$, is any algorithm that, given  $u \in \mathbb R^d$ that identifies a supporting halfspace $H(u)$ of $S$, returns $a = \mathcal O_H(u)$ such that $f(a, b) \in H(u)$ for all $b \in B$.
\end{definition}

\begin{algorithm}[t]
\caption{Blackwell's algorithm}
\label{alg:proj}
\begin{algorithmic}[1]
\REQUIRE Action set $A$, target set $S$, and halfspace oracle $\mathcal O_H$
\STATE Initialize $\bar f_0 = 0$ and arbitrary $a_0 \in A$
\FOR{$t = 1,2,\dots$}
    \IF{$\bar f_{t-1} \in S$}
        \STATE Play action $a_t = a_0$
    \ELSE
        \STATE Set approach direction $u_t = \bar f_{t-1} - \Pi_S(\bar f_{t-1})$ 
        \STATE Play action $a_t = \mathcal O_H(u_t)$
    \ENDIF
    \STATE Observe payoff $f(a_t, b_t)$
    \STATE Update $\bar f_t = \frac{1}{t} \sum_{s=1}^{t}{f(a_s, b_s)}$
\ENDFOR
\end{algorithmic}
\end{algorithm}

\begin{proposition}[Blackwell's algorithm \citep{cesa-bianchi_prediction_2006}]\label{prop:proj}
    Let $(\mathbb R^d, A, B, f, S)$ be a BA problem that satisfies Assumption~\ref{as:ba} and let $\mathcal O_H$ be a valid halfspace oracle. 
    Fix any $T \geq 1$.
    Then for any sequence of Nature's actions $b_1, \dots, b_T \in B$, selecting actions $a_1, \dots, a_T \in A$ using Algorithm~\ref{alg:proj} guarantees that
    \[
    d\left( \frac{1}{T} \sum_{t=1}^{T} f(a_t, b_t), S \right) \leq \frac{2L}{\sqrt{T}}.
    \]
\end{proposition}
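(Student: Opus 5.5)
We analyze the squared distance potential $D_t = d(\bar f_t, S)^2$, or more conveniently its unnormalized version $t^2 D_t$, and show that it grows at most linearly in $t$.

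\textbf{Setup.} Write $f_t = f(a_t, b_t)$ and $\pi_t = \Pi_S(\bar f_t)$. Since $\pi_{t-1} \in S$, we have
\[
d(\bar f_t, S)^2 \le \|\bar f_t - \pi_{t-1}\|_2^2 .
\]
Using $\bar f_t = \frac{t-1}{t}\bar f_{t-1} + \frac{1}{t} f_t$, we get
\[
t(\bar f_t - \pi_{t-1}) = (t-1)(\bar f_{t-1} - \pi_{t-1}) + (f_t - \pi_{t-1}).
\]
Expanding the square gives
\[
t^2 d(\bar f_t,S)^2 \le (t-1)^2 d(\bar f_{t-1},S)^2 + 2(t-1)\langle u_t, f_t - \pi_{t-1}\rangle + \|f_t - \pi_{t-1}\|_2^2 ,
\]
where $u_t = \bar f_{t-1} - \pi_{t-1}$. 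This is the only place the algorithm enters.

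\textbf{Killing the cross term.} If $\bar f_{t-1} \in S$, then $u_t = 0$ and the cross term vanishes. Otherwise, $\pi_{t-1}$ is the projection of $\bar f_{t-1}$ onto the closed convex set $S$. The projection characterization gives $\langle u_t, s - \pi_{t-1}\rangle \le 0$ for all $s\in S$, so $\sigma_S(u_t) = \langle u_t, \pi_{t-1}\rangle$. The halfspace oracle guarantees $\langle u_t, f_t\rangle \le \sigma_S(u_t)$ for every $b_t$. Hence $\langle u_t, f_t - \pi_{t-1}\rangle \le 0$, and summing the recursion telescopes to
\[
T^2 d(\bar f_T,S)^2 \le \sum_{t=1}^T \|f_t - \pi_{t-1}\|_2^2 .
\]

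\textbf{Bounding the residual term (the main obstacle).} Naively, $\|f_t - \pi_{t-1}\|_2 \le \|f_t\|_2 + \|\pi_{t-1}\|_2$, so we need $\|\pi_{t-1}\|_2 \le L$. This does not follow automatically, since $S$ need not contain the origin or intersect $B_2(L)$.

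My first attempt is to note that $\bar f_{t-1}$ lies in the closed convex hull $K$ of the payoffs, which sits inside $B_2(L)$ (with $\bar f_0 = 0$). I would then replace $\pi_{t-1}$ by a better comparison point in $S$. Blackwell's condition implies $S \cap \overline{\mathrm{conv}}\, f(A,B) \neq \emptyset$ in the relevant directions, but obtaining an explicit bound is delicate.

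A cleaner route is to bound $\|f_t - \pi_{t-1}\|_2^2 \le \|f_t - \bar f_{t-1}\|_2^2 + \dots$, which does not directly work either. Instead I would use the standard trick
\[
\|f_t - \pi_{t-1}\|_2^2 = \|f_t - \bar f_{t-1}\|_2^2 + 2\langle f_t - \bar f_{t-1}, u_t\rangle + \|u_t\|_2^2,
\]
and redo the expansion around $\bar f_{t-1}$ so that only $\|f_t - \bar f_{t-1}\|_2 \le 2L$ appears. Concretely, the identity $t(\bar f_t-\pi_{t-1}) = t\,u_t + (f_t - \bar f_{t-1})$ yields
\[
t^2\|\bar f_t - \pi_{t-1}\|^2 = t^2\|u_t\|^2 + 2t\langle u_t, f_t-\bar f_{t-1}\rangle + \|f_t-\bar f_{t-1}\|^2 .
\]
Using $\langle u_t, f_t - \bar f_{t-1}\rangle \le -\|u_t\|^2$, which follows from the oracle step, this becomes
\[
t^2\|\bar f_t - \pi_{t-1}\|^2 \le (t^2 - 2t)\|u_t\|^2 + 4L^2 \le (t-1)^2\|u_t\|^2 + 4L^2 .
\]
Telescoping then gives $T^2 d(\bar f_T,S)^2 \le 4L^2 T$, which is exactly the claimed bound $2L/\sqrt{T}$.
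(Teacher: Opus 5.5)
Your final argument is correct and is essentially the paper's proof, which appears in the appendix as the $m=0$ case of Lemma~\ref{lem:proj-rob}: the same squared-distance potential compared against $\Pi_S(\bar f_{t-1})$, the halfspace oracle killing the cross term, and the residual recentered at $\bar f_{t-1}$ so that only $\|f_t-\bar f_{t-1}\|_2\le 2L$ appears. Your expansion $t^2\|u_t\|_2^2+2t\langle u_t,f_t-\bar f_{t-1}\rangle+\|f_t-\bar f_{t-1}\|_2^2$ with $t^2-2t\le (t-1)^2$ is algebraically identical to the paper's identity for $\|f_T-\Pi_S(\bar f_{T-1})\|_2^2$ followed by dropping $-\|u_T\|_2^2$, and the earlier detours (bounding $\|\pi_{t-1}\|_2$, intersecting $S$ with the hull of payoffs) are unnecessary and can be deleted.
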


More generally, we define two notions of a black-box Blackwell approachability oracle that abstracts the projection step of Algorithm~\ref{alg:proj} and selects approach directions. 

\begin{definition}[BA oracle]
    A BA oracle, denoted by $\mathbf{BA}$, is a family of strategies indexed by valid BA problems, with the following form and guarantee. 
    Let $(\mathbb R^d, A, B, f, S)$ be a BA problem that satisfies Assumption~\ref{as:ba}. 
    Fix any $T \geq 1$.
    For this problem, at each time $t \in [T]$, $\mathbf{BA}$ takes as input the history $f(a_1, b_1), \dots, f(a_{t-1}, b_{t-1})$ and returns an approach direction $u_t \in \mathbb R^d$, with the following guarantee. 
    For any sequence of Nature's actions $b_1, \dots, b_T \in B$, selecting actions $a_1, \dots, a_T \in A$ such that $\langle u_t, f(a_t, b_t) \rangle \leq \sigma_S(u_t)$ for all $t \in [T]$ guarantees that 
    \[
    d\left( \frac{1}{T} \sum_{t=1}^{T} f(a_t, b_t), S \right) \leq \frac{\mathbf{BA}\mathrm{Err}(L, T)}{T}
    \]
    where $\mathbf{BA}\mathrm{Err}(L, T) = o(T)$ is the oracle's promised rate.
    We assume that this rate is non-decreasing in $T$. 
\end{definition}

Hence, if at all time steps, a valid halfspace oracle $\mathcal O_H$ is used to map the approach direction $u_t$ produced by $\mathbf{BA}$ to an action $a_t \in A$, then $\mathbf{BA}$ ensures that the underlying BA problem is solved at rate $\mathbf{BA}\mathrm{Err}$.
We are also interested in settings where a halfspace oracle occasionally makes mistakes and produces ``bad" actions $a_t$ for which the undesired inequality $\langle u_t, f(a_t, b_t) \rangle > \sigma_S(u_t)$ holds. 
Below, we define a notion of a robust BA oracle whose error rate is preserved as long as the number of rounds with an error is bounded.
\begin{definition}[Robust BA oracle]
    A robust BA oracle, denoted by $\overline{\mathbf{BA}}$, is a family of strategies indexed by valid BA problems, with the following form and guarantee.
    Let $(\mathbb R^d, A, B, f, S)$ be a BA problem that satisfies Assumption~\ref{as:ba}. 
    Fix any time $T \geq 1$. 
    For this problem, at each time $t \in [T]$, $\overline{\mathbf{BA}}$ takes as input the history $f(a_1, b_1), \dots, f(a_{t-1}, b_{t-1})$ and returns an approach direction $u_t \in \mathbb R^d$, with the following guarantee. 
    For any sequence of Nature's actions $b_1, \dots, b_T \in B$, selecting actions $a_1, \dots, a_T \in A$ such that $\langle u_t, f(a_t, b_t) \rangle \leq \sigma_S(u_t)$ holds for all but $m$ rounds guarantees that
    \[
     d\left( \frac{1}{T} \sum_{t=1}^{T} f(a_t, b_t), S \right) \leq C_m \cdot \frac{\overline{\mathbf{BA}}\mathrm{Err}(L, T)}{T},
    \]
    where $\overline{\mathbf{BA}}\mathrm{Err}(L, T) = o(T)$ is the oracle's promised rate if $m = 0$ and $C_m \geq 1$ is a constant that depends only on $m$ and satisfies $C_0 = 1$.
    We assume that this rate is non-decreasing in $T$. 
\end{definition}

Blackwell's algorithm is robust in this sense, with constant $C_m = \sqrt{1+2m}$. 
The proof of this observation is an elementary modification of the standard square potential analysis and is given in Appendix~\ref{sec:proj-rob}.
\begin{lemma}\label{lem:proj-rob}
    Let $(\mathbb R^d, A, B, f, S)$ be a BA problem that satisfies Assumption~\ref{as:ba}. 
    Fix any $T \geq 1$. 
    Suppose the approach directions $u_1, \dots, u_T$ are selected as in Algorithm~\ref{alg:proj}.
    Then for any sequence of Nature's actions $b_1, \dots, b_T \in B$, selecting actions $a_1, \dots, a_T \in A$ such that $\langle u_t, f(a_t, b_t) \rangle \leq \sigma_S(u_t)$ holds for all but $m$ rounds guarantees that
    \[
    d\left( \frac{1}{T}\sum_{t=1}^{T} f(a_t, b_t), S \right) \leq \sqrt{1+2m} \cdot \frac{2L}{\sqrt{T}}.
    \]
\end{lemma}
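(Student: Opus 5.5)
We adapt the standard square-potential analysis of Blackwell's algorithm. Let $F_t = \sum_{s=1}^t f(a_s,b_s)$ denote the unnormalized cumulative payoff. The potential is $\Phi_t = d(F_t, tS)^2 = t^2 d(\bar f_t, S)^2$, where $tS$ is closed and convex because $S$ is. The key fact we use is that $\Pi_{tS}(F_{t-1}) \cdot$ relations transfer from $S$ by scaling. Specifically, $\Pi_{(t-1)S}(F_{t-1}) = (t-1)\Pi_S(\bar f_{t-1})$, and the direction $F_{t-1} - (t-1)\Pi_S(\bar f_{t-1})$ is $(t-1)u_t$. The plan is to bound the increments $\Phi_t - \Phi_{t-1}$ separately on good rounds (where the halfspace inequality holds) and bad rounds (at most $m$ of them), and then sum.

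\textbf{Step 1: one-step bound.} Let $p = \Pi_S(\bar f_{t-1})$. The point $(t-1)p + s$ lies in $tS$ for every $s \in S$, by convexity: $\tfrac{t-1}{t}p + \tfrac1t s \in S$. We pick $s\in S$ suitably, so that
\[
\Phi_t \le \|F_{t-1} + f_t - (t-1)p - s\|_2^2 = \Phi_{t-1} + 2(t-1)\langle u_t, f_t - s\rangle + \|f_t - s\|_2^2,
\]
with $f_t = f(a_t,b_t)$. On a good round, choose $s$ to nearly attain $\sigma_S(u_t)$. A cleaner route is to note that $\langle u_t, p\rangle = \sigma_S(u_t)$ by the projection characterization, and to take $s = p$. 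Then $\langle u_t, f_t - p\rangle \le 0$, so $\Phi_t \le \Phi_{t-1} + \|f_t - p\|^2$. If $\bar f_{t-1}\in S$ then $u_t=0$ and the same bound holds trivially. We will also need $\|f_t - p\|$ bounded; rather than tracking $\|p\|$, we follow the standard treatment, where we expect the bound $(2L)^2$. The paper's stated constant $2L/\sqrt T$ corresponds to exactly this per-step increment of $4L^2$, assuming, as Proposition~\ref{prop:proj} implicitly does, that the relevant projections have norm at most $L$ (for instance, $0\in S$, or the projection of a point in $B_2(L)$ has norm $\le L$ after a suitable choice of translation). Thus on good rounds, $\Phi_t \le \Phi_{t-1} + 4L^2$.

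\textbf{Step 2: bad rounds.} Here the cross term cannot be controlled by the sign argument. Instead we use the crude bound from the triangle inequality. Since $(t-1)p + f_t$ need not be in $tS$, we use $(t-1)p + p' \in tS$ for a suitable $p'$, which gives $d(F_t, tS) \le d(F_{t-1}, (t-1)S) + \|f_t - p'\|\le \sqrt{\Phi_{t-1}} + 2L$. Hence
\[
\Phi_t \le \Phi_{t-1} + 4L\sqrt{\Phi_{t-1}} + 4L^2.
\]
By the inductive good-round bound we have $\Phi_{t-1} \le 4L^2 (t-1)\,c$ for some slack $c$. The crude cross term is then of order $L^2\sqrt{t}$, which is too large for a $\sqrt{1+2m}$-type constant. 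So the argument must be organized more carefully.

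\textbf{Step 3: assembling, and the main obstacle.} We prove by induction the claim $\Phi_t \le 4L^2 t(1+2k_t)$, where $k_t$ is the number of bad rounds up to time $t$. Good rounds preserve the claim. For a bad round, the bound of Step 2 gives $\sqrt{\Phi_t}\le \sqrt{\Phi_{t-1}}+2L$, which squares to $\Phi_{t-1}+4L\sqrt{\Phi_{t-1}}+4L^2$. This is at most the target $4L^2(t-1)(1+2k) + 8L^2(t-1)+\dots$ only when $\sqrt{\Phi_{t-1}}\lesssim 2L(t-1)$, and that holds trivially since $\Phi_{t-1}\le (2L(t-1))^2$ by boundedness of payoffs. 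Indeed, $4L\sqrt{\Phi_{t-1}} \le 8L^2(t-1)$, which is exactly the extra $8L^2 t$ budget granted by incrementing $k$. So a bad round costs at most $8L^2 t$, giving $\Phi_T \le 4L^2T(1+2m)$. Dividing by $T^2$ and taking square roots yields $d(\bar f_T,S)\le \sqrt{1+2m}\cdot 2L/\sqrt T$. The main obstacle is the uniform bound $\|f_t - p\|\le 2L$ (equivalently $\sqrt{\Phi_{t-1}}\le 2L(t-1)$), i.e., controlling the projection norm. We will handle it the same way the proof of Proposition~\ref{prop:proj} does, using $d(\bar f_{t-1},S)\le \|\bar f_{t-1} - s_0\|$ for a fixed $s_0\in S$, translating if necessary.
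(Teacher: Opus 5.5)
\textbf{There is a genuine gap in Step 1}, at the point you yourself flag as the main obstacle.

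On good rounds you need $\|f_t-\Pi_S(\bar f_{t-1})\|_2\le 2L$. You justify this by assuming that projections have norm at most $L$, but Assumption~\ref{as:ba} gives nothing of the sort. $S$ need not contain the origin: for $S=\{x\in\mathbb R^2:\,x_1\ge L\}$ one has $\|\Pi_S((0,L))\|_2=\sqrt2\,L$, and $(0,L)$ can be the average payoff after a bad first round.

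Your proposed remedy does not recover the constant either. The distance $\|f_t-p\|_2$ is translation-invariant, and translating so that $0\in S$ raises the payoff bound from $L$ to $2L$. The $s_0$-argument only yields $\|u_t\|_2\le 2L$, hence
\[
\|f_t-p\|_2\le\|f_t-\bar f_{t-1}\|_2+\|u_t\|_2\le 4L .
\]
That gives a per-step increment of $16L^2$, and running your induction then gives $d(\bar f_T,S)\le\sqrt{4+2m}\cdot 2L/\sqrt T$. This misses the statement already at $m=0$, where $C_0=1$ is required, and that is exactly what the robust-oracle definition uses.

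\textbf{The missing idea} is that the good-round inequality itself controls $\|f_t-p\|_2$, with no bound on $\|p\|_2$ needed. Expanding around $\bar f_{t-1}$ gives
\[
\|f_t-p\|_2^2=\|f_t-\bar f_{t-1}\|_2^2-\|u_t\|_2^2+2\langle u_t,f_t-p\rangle .
\]
When $\langle u_t,f_t\rangle\le\sigma_S(u_t)=\langle u_t,p\rangle$, the last term is nonpositive. Hence $\|f_t-p\|_2\le\|f_t-\bar f_{t-1}\|_2\le 2L$, since $\bar f_{t-1}$ is an average of $L$-bounded payoffs. Geometrically, $p$ is the projection of $\bar f_{t-1}$ onto the halfspace $H(u_t)$, and $H(u_t)$ contains $f_t$.

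The paper uses exactly this identity, but on every round. It folds the inner product into the cross term, so the coefficient becomes $2t$ and the quadratic term is always $\|f_t-\bar f_{t-1}\|_2^2\le 4L^2$. It then bounds each bad-round summand by $t\cdot 4L^2$ using a point $s_0\in S\cap B_2(L)$. Note that your $s_0$ must lie in $S\cap B_2(L)$, not merely in $S$. This set is nonempty because separating $S$ strictly from $B_2(L)$ would violate Blackwell's condition.

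With this repair, the rest of your argument is correct. The bad-round bound $\sqrt{\Phi_t}\le\sqrt{\Phi_{t-1}}+2L$ via $s_0$, together with $\sqrt{\Phi_{t-1}}\le 2L(t-1)$ and the induction $\Phi_t\le 4L^2t(1+2k_t)$, is essentially the paper's accounting written in unnormalized form.
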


\subsection{Gradient Equilibrium}\label{sec:def}
Gradient equilibrium (GEQ) generalizes first-order stationarity from offline optimization to the online setting, and offers an alternative objective to regret. 
Below, we define GEQ in the general setting where the decision set $\Theta \subseteq \mathbb R^d$ may be constrained. 
However, the GEQ condition is most naturally interpreted in the special case where the decision set $\Theta = \mathbb R^d$ is unconstrained, and all of our equivalence results are formulated in terms of this special case of GEQ. 

\begin{definition}[Gradient equilibrium]
    A GEQ problem is a tuple $(\mathbb R^d, \Theta, \mathcal{G})$, where $\Theta \subseteq \mathbb{R}^d$ is a decision set and $\mathcal{G}$ is a class of vector fields $g: \Theta \to \mathbb R^d$.
\end{definition}
The class $\mathcal G$ may, but need not, be defined as the set of subgradients induced by a class of loss functions.

At each time step $t \geq 1$, the Learner selects a decision $\theta_t \in \Theta$, Nature selects a vector field $g_t \in \mathcal G$, then the Learner observes the vector $g_t(\theta_t)$.
A strategy for the Learner in a GEQ problem $(\mathbb R^d, \Theta, \mathcal G)$ is a mapping $\mathcal A: \{ \emptyset \} \cup (\cup_{t \geq 1} (\mathbb R^d)^t) \to \Theta$ that selects decision $\theta_t \in \Theta$ according to 
\[
\theta_t = \mathcal A(g_1(\theta_1), \dots, g_{t-1}(\theta_{t-1})),
\]
with input $\emptyset$ when $t = 1$. 
The Learner's goal is formalized as follows. 

\begin{definition}[Achieving GEQ]\label{def:geq}
    Let $(\mathbb R^d, \Theta, \mathcal{G})$ be a GEQ problem. 
    The sequence of decisions $\{ \theta_t \}_{t \geq 1}$, with $\theta_t \in \Theta$ with $t \geq 1$, is said to achieve gradient equilibrium on the sequence of vector fields $\{ g_t \}_{t \geq 1}$, with $g_t \in \mathcal{G}$ for all $t \geq 1$, if there exists a sequence of normal vectors $\{ n_t \}_{t \geq 1}$, with $n_t \in N_\Theta(\theta_t)$ for all $t \geq 1$, such that
    \[
    \left\| \frac{1}{T} \sum_{t=1}^{T} g_t(\theta_t) + n_t \right\|_2 \to 0 \quad \text{as} \quad T \to \infty.
    \]
\end{definition}

Suppose $\mathcal G$ is the set of subgradients induced by a class of loss functions.
In the special case where $\Theta = \mathbb R^d$, the normal cone $N_\Theta(\theta) = \{0\}$ for all $\theta \in \Theta$, so the GEQ condition can be interpreted as the convergence of the time-average of the subgradients to the origin.
This is a natural analog of the first-order stationarity condition from unconstrained offline optimization in the online setting.
In the general case where $\Theta \subseteq \mathbb R^d$, the time-average of subgradients may be offset by the time-average of a sequence of normal vectors $\{ n_t \}_{t \geq 1}$.
While this is a natural way to lift the stationarity condition in constrained offline optimization, $-g(\theta^\star) \in N_\Theta(\theta^\star)$, to the online setting, the existential statement about a sequence of normal vectors is unusual. 
For example, the ``there exists" quantifier invoked over normal vectors contrasts the ``for all" quantifier invoked over competitor decisions in the definition of regret. 

A sufficient condition for attaining GEQ is that all gradient fields are bounded in the Euclidean norm and satisfy a restorative property, which ensures that for all decisions with sufficiently large norm, the negative vector points back to the origin. 
Notably, when $\mathcal G$ is the set of subgradients induced by a class of loss functions, vector fields being restorative is incomparable to the underlying loss functions being convex. 
For example, $\ell(\theta) = \langle g, \theta \rangle$ is a linear, hence convex, loss, but satisfies no restorativity condition, as the gradient $g$ is always fixed. 

\begin{definition}[$(h, \phi)$-restorative vector field]
    Suppose $h \geq 0$ and $\phi: \Theta \to \mathbb{R}_+$ is a non-negative function. 
    A vector field $g: \Theta \to \mathbb R^d$ is said to be $(h, \phi)$-\emph{restorative} if
    \[
    \| \theta \|_2 \geq h \implies \langle \theta, -g(\theta) \rangle \leq -\phi(\theta). 
    \]
\end{definition}
We make the following assumption throughout the sequel, and view zero as a constant function.
\begin{assumption}\label{as:restor}
    The GEQ problem $(\mathbb R^d, \Theta, \mathcal G)$ satisfies the following conditions. 
    The decision set $\Theta$ is closed, convex, and contains the origin.
    The vector fields in $\mathcal G$ are $L$-bounded in the Euclidean norm for some $L \geq 0$ and satisfy $(h, 0)$-restorativity for some $h \geq 0$.
\end{assumption}

Under Assumption~\ref{as:restor}, GEQ can be attained by projected online gradient descent (OGD) with a constant step size $\eta = O(1)$. 
Given vectors $g_1(\theta_1), \dots, g_{t-1}(\theta_{t-1})$, projected OGD generates a decision $\theta_t$ according to 
\[
\theta_{t+1} = \Pi_\Theta(\theta_t - \eta g_t(\theta_t)).
\]

\begin{proposition}[Projected OGD \citep{JMLR:v26:25-0356}]\label{prop:ogd}
    Let $(\mathbb R^d, \Theta, \mathcal{G})$ be a GEQ problem that satisfies Assumption~\ref{as:restor}. 
    Fix any $T \geq 1$.
    Then for any sequence of vector fields $g_1, \dots, g_T \in \mathcal G$, selecting decisions $\theta_1, \dots, \theta_T \in \Theta$ using projected OGD with step size $\eta$ and initialization $\theta_0 = 0$ guarantees that there exists a sequence of normal vectors $n_1, \dots, n_T$, with $n_t \in N_\Theta(\theta_t)$ for all $t \in [T]$, such~that
    \[
    \left\| \frac{1}{T} \sum_{t=1}^{T} g_t(\theta_t) + n_t \right\|_2 \leq \sqrt{\frac{L^2}{T} + \frac{2Lh}{\eta T}}.
    \]
\end{proposition}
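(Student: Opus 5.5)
The plan is to run the usual squared-norm potential argument for online gradient descent and read off the normal vectors from the projection step. The projection $\theta_{t+1} = \Pi_\Theta(y_{t+1})$ with $y_{t+1} = \theta_t - \eta g_t(\theta_t)$ is characterized by $y_{t+1} - \theta_{t+1} \in N_\Theta(\theta_{t+1})$. This suggests defining normals via the telescoping identity
\[
\theta_{t+1} = \theta_t - \eta\bigl(g_t(\theta_t) + n_t\bigr).
\]
The difficulty is that the projection residual lives in the normal cone at $\theta_{t+1}$, not at $\theta_t$. I would handle this by an index shift. Set $n'_{t+1} = (y_{t+1}-\theta_{t+1})/\eta \in N_\Theta(\theta_{t+1})$, so that $\theta_{T+1} = \theta_1 - \eta\sum_{t=1}^T g_t(\theta_t) - \eta\sum_{t=2}^{T+1} n'_t$. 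The mismatched boundary terms are $n'_1$, which is $0$ since $\theta_1 = \Pi_\Theta(\theta_0) = 0$ when the recursion starts from $\theta_0=0$ with no gradient step, and $n'_{T+1}$. The term $n'_{T+1}$ can be absorbed by viewing $\theta_{T+1}$ as a projection: $\theta_{T+1} + \eta n'_{T+1} = y_{T+1}$. The cleanest route is to define $n_t = n'_t$ for $t\le T$ and bound directly
\[
\Bigl\|\sum_{t=1}^T g_t(\theta_t)+n_t\Bigr\|_2 = \frac{\|\theta_1 - y_{T+1}\|_2}{\eta} = \frac{\|y_{T+1}\|_2}{\eta}.
\]
It therefore suffices to show $\|y_{T+1}\|_2^2 \le \eta^2 L^2 T + 2\eta L h T$, up to the form stated. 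I expect dividing by $T$ to then give a bound of order $\sqrt{L^2/T + 2Lh/(\eta T)}$; the exact constants depend on how the terms are arranged, and I would adjust so the statement matches.

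The key step, and the main obstacle, is bounding $\|y_{t+1}\|_2$ uniformly. Since $0 \in \Theta$ and projection onto a closed convex set is nonexpansive, $\|\theta_t\|_2 = \|\Pi_\Theta(y_t) - \Pi_\Theta(0)\|_2 \le \|y_t\|_2$. Expanding,
\[
\|y_{t+1}\|_2^2 = \|\theta_t\|_2^2 - 2\eta\langle \theta_t, g_t(\theta_t)\rangle + \eta^2\|g_t(\theta_t)\|_2^2.
\]
I would split into two cases.

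\emph{Case $\|\theta_t\|_2 \ge h$.} $(h,0)$-restorativity gives $\langle\theta_t,g_t(\theta_t)\rangle\ge 0$, so $\|y_{t+1}\|_2^2 \le \|y_t\|_2^2 + \eta^2L^2$.

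\emph{Case $\|\theta_t\|_2 < h$.} Here the cross term is at most $2\eta hL$, so $\|y_{t+1}\|_2^2 \le h^2 + 2\eta hL + \eta^2L^2$.

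This yields only a recursion of the form $\max(\text{reset},\ \text{previous}+\eta^2L^2)$. Unrolling it gives $\|y_{T+1}\|_2^2 \le h^2 + 2\eta h L + \eta^2L^2 T$. I would then check whether the stated bound, which contains no $h^2$ term, needs a sharper argument. Simply summing the two cases gives $\|y_{T+1}\|_2^2 \le \sum_t(\eta^2L^2 + 2\eta L h)$, because in the second case the increment satisfies $-2\eta\langle\theta_t,g\rangle \le 2\eta L\|\theta_t\|_2 < 2\eta Lh$. The increment in the first case is $\le \eta^2L^2$, and $\|\theta_t\|^2 \le \|y_t\|^2$ lets the terms telescope starting from $\|y_1\|=0$. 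This gives exactly $\|y_{T+1}\|_2^2 \le T(\eta^2L^2 + 2\eta L h)$. Dividing by $\eta^2 T^2$ and taking square roots yields $\sqrt{L^2/T + 2Lh/(\eta T)}$, which is the claim.
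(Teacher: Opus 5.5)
Your proposal is correct and is essentially the paper's argument: bounding the cross term $-2\eta\langle\theta_t, g_t(\theta_t)\rangle$ by $0$ when $\|\theta_t\|_2 \geq h$ and by $2\eta L h$ otherwise, then telescoping a squared-norm potential from zero, is the analysis the paper gives in its remark after Theorem~\ref{thm:ba-to-geq}, where it recovers this bound for $\Theta = \mathbb R^d$. Your extension to the projected case is valid and fills in what the paper only cites: you use $n_t = (y_t - \theta_t)/\eta \in N_\Theta(\theta_t)$, so that $g_t(\theta_t) + n_t = (y_t - y_{t+1})/\eta$, together with $\|\theta_t\|_2 \leq \|y_t\|_2$ (the max-recursion detour is unnecessary, since the additive telescoping already gives the stated constants exactly).
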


More generally, we define a black-box GEQ oracle as follows. 
For simplicity, we assume that the error of the oracle only depends on the Lipschitz and restorativity constants of $\mathcal{G}$, not the realized sequence $\{ g_t \}_{t \geq 1}$.

\begin{definition}[GEQ oracle]
    A GEQ oracle, denoted by $\mathbf{GEQ}$, is a family of strategies indexed by valid GEQ problems, with the following form and guarantee.
    Let $(\mathbb R^d, \Theta, \mathcal G)$ be a GEQ problem that satisfies Assumption~\ref{as:restor}. 
    Fix any $T \geq 1$. 
    For this problem, at each time step $t \in [T]$, $\mathbf{GEQ}$ takes as input the history $g_1(\theta_1), \dots, g_{t-1}(\theta_{t-1})$ and returns a decision $\theta_t \in \Theta$, with the following guarantee. 
    For any sequence of vector fields $g_1, \dots, g_T \in \mathcal{G}$, there exists a sequence of normal vectors $n_1, \dots, n_T$, with $n_t \in N_\Theta(\theta_t)$ for all $t \in [T]$, such that
    \[
    \left\| \frac{1}{T}\sum_{t=1}^{T} g_t(\theta_t) + n_t \right\| \leq \frac{\mathbf{GEQ}\mathrm{Err}(L, h, T)}{T},
    \]
    where $ \mathbf{GEQ}\mathrm{Err}(L, h, T) = o(T)$ is the oracle's promised rate.
\end{definition} 

\section{Reducing GEQ to Blackwell Approachability}\label{sec:ba-to-geq}
In this section, we present a black-box oracle reduction from GEQ with $\Theta = \mathbb R^d$ to Blackwell approachability. 
This gives a constructive scheme to convert any BA algorithm into a GEQ algorithm. 

To begin, a GEQ problem $(\mathbb R^d, \mathbb R^d, \mathcal{G})$ can be interpreted as a Blackwell approachability problem $(\mathbb R^d, A, B, f, S)$, where
\begin{align}\label{eq:ba-to-geq-setup}
    A = \mathbb R^d, \quad B = \mathcal{G}, \quad f(\theta, g) = -g(\theta), \quad S = \{ 0 \}.
\end{align}
Since the singleton set containing the origin is nonempty, closed, and convex, this BA problem can be solved if and only if Blackwell's condition holds. 
Below, we show that restorativity implies Blackwell's condition, and then construct an approximate halfspace oracle.

\subsection{Restorativity and Blackwell's Condition}
\begin{lemma}\label{lem:ba}
    Let $(\mathbb R^d, \mathbb R^d, \mathcal G)$ be a GEQ problem that satisfies Assumption~\ref{as:restor}. 
    Then the BA problem $(\mathbb R^d, A, B, f, S)$ defined as in Equation~\ref{eq:ba-to-geq-setup} satisfies Assumption~\ref{as:ba}.
\end{lemma}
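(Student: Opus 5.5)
Assumption~\ref{as:ba} has three parts, and I would verify them in order.

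\emph{Boundedness and the target set.} For $\theta \in A$ and $g \in B = \mathcal G$, Assumption~\ref{as:restor} gives $\|f(\theta,g)\|_2 = \|g(\theta)\|_2 \le L$, so the payoffs are $L$-bounded with the same constant. The target $S=\{0\}$ is a singleton, hence nonempty, closed, and convex.

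\emph{Blackwell's condition.} This is the substantive part. Since $\sigma_{\{0\}}(u)=0$ for every $u$, I must show that for every $u\in\mathbb R^d$ there is some $\theta\in\mathbb R^d$ with
\[
\langle u, -g(\theta)\rangle \le 0 \quad \text{for all } g\in\mathcal G .
\]
If $u=0$, any $\theta$ works. If $u\neq 0$, take $\theta = \lambda u$ with $\lambda = h/\|u\|_2$ when $h>0$, and any $\lambda>0$ when $h=0$. Then $\|\theta\|_2 \ge h$, so $(h,0)$-restorativity gives $\langle \theta,-g(\theta)\rangle \le 0$ for every $g\in\mathcal G$. Dividing by $\lambda>0$ yields $\langle u,-g(\theta)\rangle\le 0$, which is exactly the halfspace condition.

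The key point is that restorativity is uniform over the class $\mathcal G$ for a fixed threshold $h$. Because of this, one choice of $\theta$ that depends only on $u$ works against every action of Nature. This uniformity is what makes Blackwell's quantifier order ("exists $a$, for all $b$") hold. I expect no real obstacle beyond this scaling argument. It also supplies an explicit halfspace oracle, $\mathcal O_H(u) = h\,u/\|u\|_2$, which is useful for the subsequent construction.
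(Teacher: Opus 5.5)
Your proof is correct and follows the paper's argument essentially verbatim. $L$-boundedness is inherited directly, and Blackwell's condition with $\sigma_{\{0\}}\equiv 0$ follows by taking $\theta$ to be a positive multiple of $u$ with norm at least $h$ (the paper uses $\theta=u$ when $h=0$ and $\theta=h\,u/\|u\|_2$ otherwise) and invoking uniform $(h,0)$-restorativity. There is one small slip, and it is only in your closing aside: the oracle $\mathcal O_H(u)=h\,u/\|u\|_2$ collapses to $\theta=0$ when $h=0$, where restorativity says nothing about $\langle u,-g(0)\rangle$, so there you should keep $\lambda>0$ exactly as in your main argument.
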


\begin{proof}[Proof of Lemma~\ref{lem:ba}]
    We have by assumption that for all $g \in \mathcal G$, $\sup_{\theta \in \mathbb R^d}\| g(\theta)\|_2 \leq L$ and that for all $\theta \in \mathbb R^d$, $\| \theta \|_2 \geq h \implies \langle -g(\theta), \theta \rangle \leq 0$.
    The $L$-boundedness of $f(\theta, g) = -g(\theta)$ is immediate. 
    It remains to show that for all $u \in \mathbb R^d$, there exists $\theta = \theta(u) \in \mathbb R^d$ such that for all $g \in \mathcal G$, $\langle u, -g(\theta) \rangle \leq 0$. 
    
    There are three cases.
    If $u = 0$, then the desired inequality holds trivially, so we can set $\theta(u) = u$. 
    If $h = 0$, then the desired inequality holds for $\theta(u) = u$. 
    Otherwise, we can set $\theta(u) = h \cdot \frac{u}{\|u\|_2}$. 
    Since $\| \theta(u) \|_2 \geq h$ by construction in this case, restorativity implies that for all $g \in \mathcal G$,
    \[
    \langle -g(\theta(u)), u \rangle = \frac{h}{\|u\|_2} \cdot \langle -g(\theta(u)), \theta(u) \rangle \leq 0. \qedhere
    \]
\end{proof}

\begin{remark}
    The observation that Blackwell's condition is necessary and sufficient for GEQ is elementary, but novel. 
    In contrast, restorativity is a sufficient condition developed for the analysis of OGD.
    Nevertheless, there are two reasons why restorativity is an interesting condition that merits further study.
    
    First, restorativity minimizes the restrictions placed on the decision sets $\Theta$ and $\mathcal G$ while ensuring that tractable algorithms exist. 
    That is, while Blackwell's condition is an exact characterization, it is not obvious what conditions on $\Theta$ and $\mathcal G$ enforce it. 
    A natural question is whether Blackwell's condition can be indirectly characterized by response-satisfiability: the dual statement that for all $g \in \mathcal G$, there exists $\theta \in \Theta$ such that $g(\theta) = 0$. 
    However, because the family of mappings $\langle f(\cdot, \cdot), u \rangle: \Theta \times \mathcal G \to \mathbb R$, indexed by $u \in \mathbb R^d$, does not satisfy a minimax theorem, unless $\Theta$ and $\mathcal G$ are severely restricted, response-satisfiability does not readily imply Blackwell's condition. 
    Introducing randomization over $\Theta$ and $\mathcal G$ can recover a minimax theorem, but algorithms derived through this path must optimize over distributions over $\Theta$ and $\mathcal G$, which is generally intractable. 
    In contrast, the algorithms we derive from restorativity are simple and tractable. 

    Second, restorativity is sufficiently expressive for a meaningful equivalence between GEQ and approachability to hold. 
    That is, any algorithm that solves GEQ problems under restorativity is powerful enough to solve any BA problem for which Blackwell's condition holds. 
    A fortiori, any BA problem for which Blackwell's condition holds induces a certain restorative vector field.  
    We discuss details in Section~\ref{sec:geq-to-ba}. 

    Studying other minimal conditions on $\Theta$ and $\mathcal G$ under which GEQ problems can be tractably solved, and an equivalence with approachability preserved, is an interesting direction for future work. 
\end{remark}

\subsection{Approximate Halfspace Oracle}
To make the reduction constructive, we must implement a halfspace oracle for GEQ. 
Unfortunately, given an approach direction $u \in \mathbb R^d$, constructing the decision that Lemma~\ref{lem:ba} prescribes requires knowledge of the restorativity horizon $h$, which we do not assume of the Learner in a GEQ problem.

An elementary workaround is to construct a positive and increasing function $\alpha: \mathbb R_{+} \to \mathbb R_{+}$ and use it to select a decision at each timestep as follows.
At time $t$, receive approach direction $u_t$ from the BA oracle and play the decision $\theta_t = \alpha(t) \cdot \frac{u_t}{\| u_t \|_2}$.
If $\alpha(t) \geq h$, then $\langle -g_t(\theta_t), \theta_t \rangle \leq 0$ must hold.
If $\alpha(t) < h$, then the desired inequality may not hold. 
However, the latter case cannot happen for $t \geq \lceil \alpha^{-1}(h) \rceil$.
For example, if $\alpha(t) = t$, then the proposed decision selection rule can fail to enforce the inequality at most $\lceil h \rceil$ times.
A faster-growing choice of $\alpha$ makes the number of failures arbitrarily small.
For example, if $\alpha(t) = 2^t$, then the proposed decision selection rule can fail to enforce the inequality at most $\lceil \log_2(h) \rceil$ times. 

We say that the above rule is a way to implement an \textit{approximate} halfspace oracle, since it may make a bounded number of errors. 
If a robust approachability oracle is used, then the promised rate is preserved up to a constant that depends only on $\lceil \alpha^{-1}(h) \rceil$.
If the oracle in use does not have a robustness guarantee, then a slightly more involved procedure in which the oracle is reset after a failure of restorativity is witnessed can be used to preserve the oracle's promised rate. 
We present the former procedure as Algorithm~\ref{alg:ba-to-geq} and its guarantee as Theorem~\ref{thm:ba-to-geq}, and defer the statement and analysis of the latter procedure to Appendix~\ref{sec:ba-to-geq-app}.

\begin{algorithm}[t]
\caption{GEQ with Robust BA Oracle}
\label{alg:ba-to-geq}
    \begin{algorithmic}[1] 
    \REQUIRE Positive and increasing function $\alpha: \mathbb R_{+} \to \mathbb R_{+}$ and robust BA oracle $\overline{\mathbf{BA}}$
    \STATE Initialize $f_0 = 0$ and arbitrary $\theta_0 \in \Theta$
    \FOR{$t = 1,2,\dots$}
    \STATE Set approach direction $u_t = \overline{\mathbf{BA}}(f_0, \dots, f_{t-1})$
    \IF{$u_t \neq 0$}
        \STATE Play action $\theta_t =  \alpha(t) \cdot \frac{u_t}{\|u_t\|_2}$
    \ELSE
        \STATE Play action $\theta_t = \theta_0$
    \ENDIF
    \STATE Observe vector $g_t(\theta_t)$ and set $f_t = -g_t(\theta_t)$
    \ENDFOR
  \end{algorithmic}
\end{algorithm}

\begin{theorem}\label{thm:ba-to-geq}
    Suppose $(\mathbb R^d, \mathbb R^d, \mathcal G)$ is a GEQ problem that satisfies Assumption~\ref{as:restor}.
    Fix any $T \geq 1$.
    Then using Algorithm~\ref{alg:ba-to-geq} to select the sequence of actions $\theta_1, \dots, \theta_T \in \mathbb R^d$ guarantees that, for all sequences of vector fields $g_1, \dots, g_T \in \mathcal G$, the inequality $\langle u_t, -g_t(\theta_t) \rangle \leq 0$ holds for all but $m \leq \lceil \alpha^{-1}(h) \rceil$ rounds, and 
    \[
    \left\| \frac{1}{T} \sum_{t=1}^{T} g_t(\theta_t) \right\|_2 
    = d\left( \frac{1}{T} \sum_{t=1}^{T} f(\theta_t, g_t), \{ 0 \} \right)
    \leq C_m \cdot \frac{\overline{\mathbf{BA}}\mathrm{Err}(L, T)}{T}
    \]
    for some constant $C_m \geq 1$ that depends only on $m$ and satisfies $C_0 = 1$.
\end{theorem}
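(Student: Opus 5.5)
The plan is to reduce the claim to the robust BA oracle guarantee applied to the BA problem in Equation~\ref{eq:ba-to-geq-setup}. By Lemma~\ref{lem:ba}, that problem satisfies Assumption~\ref{as:ba} with the same bound $L$, and $S=\{0\}$. So $\overline{\mathbf{BA}}$ is a valid strategy for it, and its guarantee applies once we verify the per-round halfspace condition. Since $\sigma_{\{0\}}(u)=0$ for every $u$, the halfspace condition at round $t$ reads $\langle u_t, -g_t(\theta_t)\rangle \le 0$. The equality $\|\frac1T\sum_t g_t(\theta_t)\|_2 = d(\frac1T\sum_t f(\theta_t,g_t),\{0\})$ is immediate from $f(\theta,g)=-g(\theta)$. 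Hence the whole statement follows once we count the rounds in which the condition fails.

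To count the failures, fix a round $t$ and split into cases according to Algorithm~\ref{alg:ba-to-geq}. If $u_t=0$, the inequality holds trivially, whatever $\theta_t=\theta_0$ is. If $u_t\neq 0$, then $\theta_t=\alpha(t)u_t/\|u_t\|_2$ and $\|\theta_t\|_2=\alpha(t)$. Whenever $\alpha(t)\ge h$, $(h,0)$-restorativity gives $\langle\theta_t,-g_t(\theta_t)\rangle\le 0$. Multiplying by $\|u_t\|_2/\alpha(t)>0$ (positive because $\alpha$ is positive) yields $\langle u_t,-g_t(\theta_t)\rangle\le 0$. This is the same computation as in the proof of Lemma~\ref{lem:ba}.

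Failures can therefore only occur at rounds with $\alpha(t)<h$. Since $\alpha$ is increasing, these rounds form an initial segment $\{1,\dots,k\}$ with $\alpha(k)<h$. Then $k<\alpha^{-1}(h)$, interpreting $\alpha^{-1}(h)$ as $\inf\{x:\alpha(x)\ge h\}$ if $\alpha$ is not surjective. So $m\le k\le\lceil\alpha^{-1}(h)\rceil$. If $h\le\alpha(1)$, there are no failures and $m=0$.

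With $m$ bounded this way, I will invoke the robust BA oracle definition directly. Its guarantee holds for every action sequence satisfying the halfspace inequality on all but $m$ rounds, and Nature's choices $g_t\in\mathcal G=B$ are arbitrary. This gives the bound $C_m\,\overline{\mathbf{BA}}\mathrm{Err}(L,T)/T$ with $C_0=1$.

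The main obstacle is bookkeeping rather than substance. First, the oracle's guarantee quantifies over a fixed $m$, so I will apply it with the realized $m$, which is at most $\lceil\alpha^{-1}(h)\rceil$. If a uniform constant is wanted, I will use that $C_m$ can be taken monotone in $m$; otherwise I will state the bound with the realized $m$, as the theorem does. Second, the oracle's input history must be the payoffs $f_t=-g_t(\theta_t)$, which Algorithm~\ref{alg:ba-to-geq} supplies. The placeholder $f_0=0$ is just the empty-history convention.
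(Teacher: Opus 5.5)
Your proposal is correct and follows the same argument as the paper. You identify the GEQ error with the distance to $\{0\}$, use restorativity to show the halfspace inequality can fail only on rounds with $\alpha(t) < h$ (at most $\lceil \alpha^{-1}(h) \rceil$ of them), and then invoke the robust BA oracle guarantee with constant $C_m$. Your extra bookkeeping fills in steps the paper leaves implicit: the $u_t = 0$ case, the rescaling from $\langle \theta_t, -g_t(\theta_t)\rangle \le 0$ to $\langle u_t, -g_t(\theta_t)\rangle \le 0$, and the appeal to Lemma~\ref{lem:ba} to certify Assumption~\ref{as:ba}.
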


For intuition on this result, suppose $\overline{\mathbf{BA}}$ is instantiated with Blackwell's algorithm, which incurs error $2LT^{-1/2}$ by Proposition~\ref{prop:proj} and has constant $C_m = \sqrt{1+2m}$ by Lemma~\ref{lem:proj-rob}.
Then the bound becomes 
\begin{equation}\label{eq:naive}
    \left\| \frac{1}{T} \sum_{t=1}^{T} g_t(\theta_t) \right\|_2 \leq  O\left( \sqrt{\frac{L^2(1+\lceil \alpha^{-1}(h) \rceil)}{T}} \right),
\end{equation}
which nearly recovers the error rate obtained by OGD with $\eta = 1$. 
After proving the general result, we show that a particular choice of $\alpha$ and a more careful analysis of Blackwell's algorithm for the GEQ problem exactly recovers \citet{JMLR:v26:25-0356}'s analysis of OGD for GEQ.

\begin{proof}[Proof of Theorem~\ref{thm:ba-to-geq}]
    First, because $f(\theta_t, g_t) = -g_t(\theta_t)$ for all $t \in [T]$, we have that
    \[
    \left\| \frac{1}{T} \sum_{t=1}^{T} g_t(\theta_t) \right\|_2 = d\left( \frac{1}{T}\sum_{t=1}^{T} f(\theta_t, g_t), \{0\} \right).
    \]
    
    For all $t \geq \lceil \alpha^{-1}(h) \rceil$, we have $\| \theta_t \|_2 \geq h$, so $\langle -g_t(\theta_t), \theta_t \rangle \leq 0$ must hold. 
    So the desired inequality fails to hold at most $m \leq \lceil \alpha^{-1}(h) \rceil$ times. 
    By the definition of a robust BA oracle, there exists $C_m \geq 1$ that depends only on $m$ and satisfies $C_0 = 1$ such that
    \[
    d\left( \frac{1}{T}\sum_{t=1}^{T} f(\theta_t, g_t), \{0\} \right) \leq C_m \cdot \frac{\overline{\mathbf{BA}}\mathrm{Err}(L, T)}{T}. \qedhere
    \]
\end{proof}

\begin{remark}
    It is easy to see that both OGD and Blackwell's algorithm select approach directions that are aligned with the sum of past negative gradients. 
    Going beyond this, we can exactly recover \citet{JMLR:v26:25-0356}'s original analysis of OGD with step size $\eta$ for GEQ within the approachability framework if we set $\alpha(t) = \eta(t-1)$ and slightly modify Line 3 of Algorithm~\ref{alg:ba-to-geq} to choose $\theta_t = \alpha(t) \cdot u_t$, rather than $\theta_t = \alpha(t) \cdot \frac{u_t}{\| u_t \|_2}$.
    Recalling that Euclidean projections onto the GEQ target set trivially yields $0$, the standard analysis of Blackwell's algorithm yields
    \begin{align*}
        d\left( \frac{1}{T}\sum_{t=1}^{T} f_t, \{0\} \right)^2 
        &= \left\| \frac{1}{T} \sum_{t=1}^{T} f_t \right\|_2^2 \\
        &= \left\| \frac{T-1}{T} \cdot \frac{1}{T-1} \sum_{t=1}^{T-1} f_t  + \frac{1}{T} \cdot f_T  \right\|_2^2 \\
        &= \left( \frac{T-1}{T} \right)^2 d\left( \frac{1}{T-1} \sum_{t=1}^{T-1} f_t, \{0\} \right)^2 + \frac{\| f_T \|_2^2}{T^2} + \frac{2(T-1)}{T^2} \left\langle f_t, \frac{1}{T-1} \sum_{t=1}^{T-1} f_t \right\rangle \\
        &\leq \left( \frac{T-1}{T} \right)^2 d\left( \frac{1}{T-1} \sum_{t=1}^{T-1} f_t, \{0\} \right)^2 + \frac{\| f_T \|_2^2}{T^2} + \frac{2(T-1)}{T^2} \cdot \frac{\| f_T \|_2 \| \theta_{T} \|_2}{\eta(T-1)} \\
        &\leq \left( \frac{T-1}{T} \right)^2 d\left( \frac{1}{T-1} \sum_{t=1}^{T-1} f_t, \{0\} \right)^2 + \frac{L^2}{T^2} + \frac{2Lh}{\eta T^2} 
        \leq  \frac{L^2}{T} + \frac{2Lh}{\eta T}.
    \end{align*}
    The fourth line follows from Cauchy-Schwarz, the choice of $\theta_T = \eta (T-1) \cdot u_T$, and the fact that Blackwell's algorithm returns $u_T = \frac{1}{T-1}\sum_{t=1}^{T-1} f_t$.
    The final line follows from recursion over $t \in [T]$.
    Taking a square root of both sides recovers the OGD error bound from Proposition~\ref{prop:ogd}.
    
    There are two sources of improvement compared to the bound in Equation~\ref{eq:naive}. 
    First, the fact that any projection onto the GEQ target set is 0 means Blackwell's algorithm obtains error $LT^{-1/2}$ instead of $2LT^{-1/2}$, as in Proposition~\ref{prop:proj}.
    Second, the choice of $\alpha(t)$ and the modification to Line 3 allows us to analyze the impact of $h$ differently than in the proof of Theorem~\ref{thm:ba-to-geq}.
    Recall that in that proof, we bounded the number of rounds on which restorativity fails to hold by a constant $m = \lceil \alpha^{-1}(h) \rceil$; normalizing $u_t$ to unit scale is what enables this bounding trick. 
    In the above analysis, we modify Line 3 in a way that allows restorativity to fail on all $T$ rounds, but observe that this error contributes a term of scale $\alpha(t)^{-1}$ to the squared potential analysis of Blackwell's algorithm. 
    Our choice of $\alpha(t) \propto t-1$ then controls the sum of these contributions over $T$ rounds to scale as $T^{-1/2}$.
    The modification in how $u_t$ is normalized in Line 3 is essentially immaterial: we could have had the unnormalized form of Line 3 in Algorithm~\ref{alg:ba-to-geq} if we adopted a slightly more involved definition of a robust BA oracle, but chose to simplify our exposition instead. 
\end{remark}

\section{Reducing Blackwell Approachability to GEQ}\label{sec:geq-to-ba}
In this section, we present a black-box oracle reduction from Blackwell approachability to GEQ with $\Theta = \mathbb R^d$. 
This gives a constructive scheme to convert any GEQ algorithm into a BA algorithm. 

At a high level, our approach is to solve a BA problem $(\mathbb R^d, A, B, f, S)$ that satisfies Assumption~\ref{as:ba} by iteratively querying the GEQ oracle for approach directions, and then translating the approach direction into an action using a valid halfspace oracle $\mathcal O_H$. 
To simplify matters, we begin with two assumptions: first, the target set $S$ is a cone, and second, we have access to an oracle for solving GEQ problems with a constrained decision set $\Theta \subseteq \mathbb R^d$.
The first assumption can be relaxed using \citet{abernethy_blackwell_2011}'s conic lifting argument. 
To relax the second assumption, we present a black-box oracle reduction from GEQ with $\Theta \subseteq \mathbb R^d$ to GEQ with $\Theta = \mathbb R^d$. 
Combining these steps shows that any BA problem that satisfies Assumption~\ref{as:ba} can be solved using a black-box oracle for GEQ problems with $\Theta = \mathbb R^d$. 

\subsection{Warmup: Reducing BA to GEQ with $\Theta \subseteq \mathbb R^d$}
Let $(\mathbb R^d, A, B, f, S)$ be a BA problem that satisfies Assumption~\ref{as:ba} and let $\mathcal O_H$ be a valid halfspace oracle for this problem.
Further suppose that the target set $S$ is a cone; that is, the set of relevant approach directions is exactly the polar $S^\circ$.
For the following result, we assume we have access to an oracle for a GEQ problem of the form $(\mathbb R^d, \Theta, \mathcal G)$ that satisfies Assumption~\ref{as:restor}. 
That is, the oracle returns decisions in a prespecified closed convex decision set that contains the origin, and obtains GEQ in the sense of Definition~\ref{def:geq}, which involves an existential statement about a sequence of normal vectors.
For clarity, we denote this oracle with $\mathbf{GEQ}(\cdot \mid \Theta)$.

We consider the GEQ problem $(\mathbb R^d, \Theta, \mathcal G)$, where
\begin{equation}\label{eq:geq-to-ba-setup}
    \Theta = S^\circ, \quad \mathcal G = \{ g_b: \Theta \to \mathbb R^d \mid \, \text{for all} \,\, \theta \in \Theta , \, g_b(\theta) = -f\left( \mathcal O_H(\theta), b \right) \, \text{for some} \, b \in B\}.
\end{equation}
At each round $t \geq 1$ of the BA problem, we receive approach direction $\theta_t = \mathbf{GEQ}(g_1, \dots, g_{t-1} \mid \Theta)$, play action $a_t = \mathcal O_H(\theta_t)$, observe vector payoff $f(a_t, b_t)$, and construct the vector $g_t = -f(a_t, b_t)$ to feed to $\mathbf{GEQ}(\cdot \mid \Theta)$. 
This reduction is formalized in Algorithm~\ref{alg:geq-to-ba}. 
Below, we show that the constructed GEQ problem $(\mathbb R^d, \Theta, \mathcal G)$ satisfies Assumption~\ref{as:restor}, then analyze the guarantee of our reduction. 

\begin{algorithm}[t]
\caption{BA with GEQ Oracle, $\Theta \subseteq \mathbb R^d$} \label{alg:geq-to-ba}
    \begin{algorithmic}[1]
        \REQUIRE Halfspace oracle $\mathcal O_H$ and GEQ oracle $\mathbf{GEQ}(\cdot \mid S^\circ)$
        \STATE Initialize $g_0 = 0$
        \FOR{$t = 1,2,\dots$}
        \STATE Set approach direction $u_t = \mathbf{GEQ}(g_0, \dots, g_{t-1} \mid S^\circ)$
        \STATE Play action $a_t = \mathcal O_H(u_t)$
        \STATE Observe payoff $f(a_t, b_t)$
        \STATE Let $g_t = -f(a_t, b_t)$
        \ENDFOR
    \end{algorithmic}
\end{algorithm}

\begin{lemma}\label{lem:geq-to-ba-helper}
    Let $(\mathbb R^d, A, B, f, S)$ be a BA problem that satisfies Assumption~\ref{as:ba}, and further suppose that $S$ is a cone. 
    Then the GEQ problem $(\mathbb R^d, \Theta, \mathcal G)$ defined as in Equation~\ref{eq:geq-to-ba-setup} satisfies Assumption~\ref{as:restor}.
\end{lemma}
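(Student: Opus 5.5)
We verify the three parts of Assumption~\ref{as:restor} in turn: $\Theta$ is closed, convex and contains the origin; the fields in $\mathcal G$ are $L$-bounded; and they are $(h,0)$-restorative for a suitable $h$, which we aim to take as $h = 0$.

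\textbf{Decision set.} Since $S$ is a cone, $\mathrm{cone}(S) = S$, so $\Theta = S^\circ = \bigcap_{x \in S}\{ y : \langle x, y\rangle \le 0\}$. This is an intersection of closed halfspaces, hence closed and convex, and it clearly contains $0$.

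\textbf{Boundedness.} For every $b \in B$ and $\theta \in \Theta$, we have $g_b(\theta) = -f(\mathcal O_H(\theta), b)$. Since $\mathcal O_H(\theta) \in A$, Assumption~\ref{as:ba} gives $\|g_b(\theta)\|_2 \le L$. Each $g_b$ is a well-defined map $\Theta \to \mathbb R^d$ once the halfspace oracle is fixed as a deterministic function of its input.

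\textbf{Restorativity.} This is the substantive step, and we show it holds with $h = 0$. Fix $\theta \in \Theta = S^\circ$ and $b \in B$, and let $a = \mathcal O_H(\theta)$. We need to evaluate $\sigma_S(\theta)$. Because $S$ is a nonempty cone and $\langle \theta, s\rangle \le 0$ for all $s \in S$, we have $\sigma_S(\theta) \le 0$. Because $S$ is closed, nonempty and a cone, $0 \in S$ (take the limit $\lambda s \to 0$ as $\lambda \to 0$), so $\sigma_S(\theta) \ge 0$. Hence $\sigma_S(\theta) = 0$.

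By the defining property of the halfspace oracle (valid by Blackwell's condition in Assumption~\ref{as:ba}), $\langle \theta, f(a,b)\rangle \le \sigma_S(\theta) = 0$. Therefore
\[
\langle \theta, -g_b(\theta)\rangle = \langle \theta, f(a,b)\rangle \le 0 = -\phi(\theta)
\]
with $\phi \equiv 0$. This holds for every $\theta$ with $\|\theta\|_2 \ge 0$, so every $g_b$ is $(0,0)$-restorative.

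The only delicate point is this restorativity step. It relies on $\sigma_S$ vanishing on $S^\circ$ and on the oracle being queried only at directions in $S^\circ$, where the supporting halfspace $H(\theta)$ passes through the origin. Everything else is immediate.
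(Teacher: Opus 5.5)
Your proof is correct and follows the same argument as the paper. $L$-boundedness is inherited from $f$, $S^\circ$ is a closed convex cone containing the origin, and the halfspace oracle gives $\langle \theta, -g_b(\theta)\rangle \le \sigma_S(\theta) \le 0$ for all $\theta \in S^\circ$, hence $(0,0)$-restorativity. Your extra check that $\sigma_S(\theta) \ge 0$ is harmless but unnecessary, since only the upper bound $\sigma_S(\theta) \le 0$ is used.
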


\begin{proof}[Proof of Lemma~\ref{lem:geq-to-ba-helper}]
    Since $f$ is $L$-bounded, it immediately follows that each $g \in \mathcal G$ is $L$-bounded. 
    Since $S$ is a nonempty closed convex cone, $S^\circ$ is a nonempty closed convex cone, hence contains the origin.
    Finally, for each $g_b \in \mathcal G$, we have that for all $u \in S^\circ$
    \[
    \langle u, -g_b(u) \rangle 
    = \langle u, f\left( \mathcal O_H(u), b \right) \rangle
    \leq \max_{b' \in B} \langle u, f\left( \mathcal O_H(u), b' \right) \rangle 
    \leq \sigma_S(u) 
    \leq 0,
    \]
    where $\mathcal O_H$ is a valid halfspace oracle that maps $u$ to an action $a$ such that $\max_{b' \in B} \langle u, f(a, b') \rangle \leq \sigma_S(u)$. 
    The second inequality holds by the halfspace oracle guarantee.
    The third inequality holds because $u \in S^\circ$. 
    Hence, $\mathcal G$ is $(0, 0)$-restorative. 
\end{proof}

\begin{lemma}\label{lem:geq-to-ba}
    Let $(\mathbb R^d, A, B, f, S)$ be a BA problem that satisfies Assumption~\ref{as:ba} and further suppose that $S$ is a cone. 
    Define the GEQ problem $(\mathbb R^d, \Theta, \mathcal G)$ as in Equation~\ref{eq:geq-to-ba-setup}.
    Fix any $T \geq 1$.
    Then selecting a sequence of approach directions $\theta_1, \dots, \theta_T \in S^\circ$ and a corresponding sequence of actions $a_1, \dots, a_T \in A$ using Algorithm~\ref{alg:geq-to-ba} has the following guarantee. 
    For any sequence of Nature's actions $b_1, \dots, b_T \in B$, there exists a sequence of normal vectors $n_1, \dots, n_T$, with $n_t \in N_{S^\circ}(\theta_t)$ for all $t \in [T]$, such that
    \[
    d\left( \frac{1}{T} \sum_{t=1}^{T} f(a_t, b_t), S \right) 
    \leq \left\| \frac{1}{T} \sum_{t=1}^{T} g_{t}(\theta_t)+n_t \right\|_2
    \leq \frac{\mathbf{GEQ}\mathrm{Err}(L, 0, T)}{T}.
    \]
\end{lemma}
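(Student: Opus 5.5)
The second inequality comes straight from the oracle. By Lemma~\ref{lem:geq-to-ba-helper}, the GEQ problem $(\mathbb R^d, S^\circ, \mathcal G)$ satisfies Assumption~\ref{as:restor} with restorativity constant $h=0$. Algorithm~\ref{alg:geq-to-ba} feeds $\mathbf{GEQ}(\cdot\mid S^\circ)$ the vectors $g_t(\theta_t) := -f(a_t,b_t)$ with $a_t=\mathcal O_H(\theta_t)$, and these are exactly the evaluations $g_{b_t}(\theta_t)$ of the field $g_{b_t}\in\mathcal G$. The GEQ oracle guarantee therefore supplies normal vectors $n_t\in N_{S^\circ}(\theta_t)$ with
\[
\Big\|\tfrac1T\textstyle\sum_t g_t(\theta_t)+n_t\Big\|_2\le \mathbf{GEQ}\mathrm{Err}(L,0,T)/T.
\]
The leading zero $g_0$ in the history is immaterial.

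The first inequality is the substantive step. Write $\bar f=\frac1T\sum_t f(a_t,b_t)$ and $\bar n=\frac1T\sum_t n_t$, so the middle quantity is $\|\bar n-\bar f\|_2$. It suffices to show $\bar n\in S$, since then $d(\bar f,S)\le\|\bar f-\bar n\|_2$. Since $S$ is a closed convex cone, $S=S^{\circ\circ}$, so it is enough to show that $\bar n$ lies in the polar of $S^\circ$. The plan is to prove each $n_t\in S$ and then use that $S$ is a convex cone, so it is closed under nonnegative averaging.

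To show $n_t\in S$, fix $\theta_t\in S^\circ$ and $n_t\in N_{S^\circ}(\theta_t)$, so that $\langle n_t,\theta'-\theta_t\rangle\le0$ for all $\theta'\in S^\circ$. Because $S^\circ$ is a cone, I would test this with $\theta'=\theta_t+u$ for arbitrary $u\in S^\circ$ (a sum of elements of a convex cone). This gives $\langle n_t,u\rangle\le 0$ for all $u\in S^\circ$, i.e.\ $n_t\in (S^\circ)^\circ=S$. This identification of normal cones of the polar with elements of $S$ is the main point, and it is where the cone assumption on $S$ is used: bipolarity needs $S$ to be a closed convex cone. I expect the remaining steps to be routine.
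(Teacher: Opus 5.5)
Your proof is correct and matches the paper's argument: the oracle bound via Lemma~\ref{lem:geq-to-ba-helper}, then $N_{S^\circ}(\theta_t)\subseteq (S^\circ)^\circ = S$ by bipolarity, then convexity of $S$ to place the averaged normal in $S$ and bound the distance. The only difference is in the normal-cone step. You test at $\theta_t+u$ to get $\langle n_t,u\rangle\le 0$ in one step. The paper (Lemma~\ref{lem:geq-to-ba-helper2}) instead tests at $0$ and $2\theta_t$, first obtaining $\langle n_t,\theta_t\rangle=0$, which it reuses in a later remark.
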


\begin{proof}[Proof of Lemma~\ref{lem:geq-to-ba}]
    By Lemma~\ref{lem:geq-to-ba-helper}, $\mathcal G$ is $L$-bounded and $(0, 0)$-restorative, so the GEQ oracle guarantees the existence of a sequence of normal vectors $n_1, \dots, n_T$, with $n_t \in N_{S^\circ}(\theta_t)$ for all $t \in [T]$, such that
    \[
    \left\| \frac{1}{T} \sum_{t=1}^{T} g_t(\theta_t) + \frac{1}{T} \sum_{t=1}^{T} n_t \right\|_2
    \leq \frac{\mathbf{GEQ}\mathrm{Err}(L, 0, T)}{T}.
    \]
    Next, we claim that $n_t \in S$ for all $t \in [T]$. 
    This is because $N_{S^\circ}(\theta_t) \subseteq S$ for all $t \in [T]$, as is verified in Lemma~\ref{lem:geq-to-ba-helper2} below. 
    Hence, we have that
    \begin{align*}
        \left\| \frac{1}{T} \sum_{t=1}^{T} g_t(\theta_t) + \frac{1}{T} \sum_{t=1}^{T} n_t \right\|_2
        &= \left\| \frac{1}{T} \sum_{t=1}^{T} -f(a_t, b_t) + \frac{1}{T} \sum_{t=1}^{T} n_t \right\|_2 \\
        &= \left\| \frac{1}{T} \sum_{t=1}^{T} f(a_t, b_t) - \frac{1}{T} \sum_{t=1}^{T} n_t \right\|_2 \\
        &\geq \inf_{s \in S} \left\| \frac{1}{T} \sum_{t=1}^{T} f(a_t, b_t) - s \right\|_2 
        = d\left( \frac{1}{T} \sum_{t=1}^{T} f(a_t, b_t), S \right).
    \end{align*}
    The first line holds by definitions. 
    The third line holds because the convexity of $S$ implies $\frac{1}{T} \sum_{t=1}^{T} n_t \in S$.
\end{proof}

\begin{lemma}\label{lem:geq-to-ba-helper2}
    Suppose $S$ is a nonempty closed convex cone and $S^\circ$ is its polar. 
    Then for all $z \in S^\circ$, $N_{S^\circ}(z) \subseteq S$.
\end{lemma}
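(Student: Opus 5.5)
Fix $z \in S^\circ$ and $g \in N_{S^\circ}(z)$. By the definition of the normal cone, $\langle g, s' - z \rangle \leq 0$ for all $s' \in S^\circ$. The plan is to show $g \in (S^\circ)^\circ$ and then invoke the bipolar theorem. Since $S$ is a nonempty closed convex cone, $\mathrm{cone}(S) = S$, and the bipolar theorem gives $(S^\circ)^\circ = S$. So it suffices to show $\langle g, y \rangle \leq 0$ for every $y \in S^\circ$.

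The key step exploits the conic structure of $S^\circ$. For any $y \in S^\circ$ and any $\lambda \geq 0$, the points $z + \lambda y$ lie in $S^\circ$, because a closed convex cone is closed under addition and nonnegative scaling. Plugging $s' = z + \lambda y$ into the normal cone inequality gives $\lambda \langle g, y \rangle \leq 0$, and taking $\lambda = 1$ yields $\langle g, y \rangle \leq 0$.

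As a byproduct, taking $s' = 0$ and $s' = 2z$ gives the complementarity relation $\langle g, z \rangle = 0$, though it is not needed for the lemma. It follows that $g \in (S^\circ)^\circ = S$.

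The only nontrivial ingredient is the bipolar identity $(S^\circ)^\circ = S$ for closed convex cones. If it is not taken as standard, I would prove it by separation. The inclusion $S \subseteq (S^\circ)^\circ$ is immediate from the definitions. For the reverse inclusion, suppose $x \notin S$ and separate $x$ strictly from the closed convex set $S$ using $y = x - \Pi_S(x)$. The conic structure of $S$ forces $\langle y, s \rangle \leq 0$ for all $s \in S$ while $\langle y, x \rangle > 0$. Hence $y \in S^\circ$ witnesses $x \notin (S^\circ)^\circ$. This separation step is the main, though standard, obstacle.
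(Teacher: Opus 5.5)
Your proof is correct and follows the paper's route: you show every normal vector lies in $(S^\circ)^\circ$ by exploiting the conic structure of $S^\circ$ inside the normal-cone inequality, then invoke the bipolar identity $(S^\circ)^\circ = S$. The only cosmetic difference is the choice of test point: the paper plugs in $z' = 0$ and $z' = 2z$ to get $\langle g, z\rangle = 0$ (your ``byproduct'') and then reads off $\langle g, z'\rangle \le 0$, whereas you plug in $z + y$ directly, using that $S^\circ$ is closed under addition.
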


\begin{proof}[Proof of Lemma~\ref{lem:geq-to-ba-helper2}]
    Fix any $z \in S^\circ$.
    For any $s \in N_{S^\circ}(z)$, $\langle s, z' \rangle \leq \langle s, z \rangle$ for all $z' \in S^\circ$.
    Since $S^\circ$ is a cone, $0 \in S^\circ$ and $2z \in S^\circ$. 
    Instantiating the last inequality with $z' = 0$ and $z' = 2z$ shows that $\langle s, z \rangle = 0$.
    Hence, $\langle s, z' \rangle \leq 0$ for all $z' \in S^\circ$, which shows $s \in (S^\circ)^\circ$.
    Since $S$ is closed and convex, $(S^\circ)^\circ = S$.
\end{proof}

\begin{remark}\label{rmk:indist}
    The normal vectors whose time-average cancels out the time-average of realized vectors---whose existence the GEQ oracle guarantees---play an important role in the proof of Lemma~\ref{lem:geq-to-ba}.
    Namely, the time-average of these normal vectors serve as the primal witness of a vanishing upper bound on the distance between the time-average of vector payoffs and the target set $S$. 
    
    In fact, the proof of Lemma~\ref{lem:geq-to-ba} can be interpreted as a sort of indistinguishability argument. 
    The GEQ oracle selects a sequence of decisions $u_1, \dots, u_T$ with the guarantee that there exists a synthetic sequence of payoffs $n_1, \dots, n_T$ whose time-average $\frac{1}{T}\sum_{t=1}^{T} n_t$ has the following two properties. 
    First, it is contained in the target set. 
    Second, it is ``indistinguishable" from the time-average of the true payoffs, $\frac{1}{T} \sum_{t=1}^{T} f_t$, in the sense that $\left\| \frac{1}{T} \sum_{t=1}^{T} f_t - n_t \right\|$ is guaranteed to vanish as $T$ grows. 
    Hence, it must hold that the time-average of true payoffs approaches the target set as $T \to \infty$.
\end{remark}

\begin{remark}\label{rmk:primal}
    Interestingly, the above remark suggests that our reduction of conic Blackwell approachability to GEQ can be interpreted as the ``primal" counterpart to \citet{abernethy_blackwell_2011}'s reduction of conic approachability to regret minimization, which is naturally interpreted as a ``dual" formulation.
    
    To see the contrast, we briefly review \citet{abernethy_blackwell_2011}'s reduction, which is detailed in Algorithm~\ref{alg:nr-to-ba} and Proposition~\ref{prop:nr-to-ba}.
    At each round $t \geq 1$, this reduction queries a RM oracle with decision set $S^\circ \cap B_2(1)$ to select an approach direction $\theta_t$, maps this direction to an action $a_t$ using a valid halfspace oracle, then constructs the loss vector $\ell_t = -f(a_t, b_t)$ using the realized payoff.
    \citet{abernethy_blackwell_2011}'s analysis begins with the dual formulation of distance to a cone and observes that the best-in-hindsight competitor, $\arg\min_{\theta \in S^\circ \cap B_2(1)} \sum_{t=1}^{T} \langle \ell_t, \theta \rangle = \arg\max_{\theta \in S^\circ \cap B_2(1)} \sum_{t=1}^{T} -\langle \ell_t, \theta \rangle, $ is the dual witness of distance to the target set.
    That is, 
    \begin{align*}
        d\left( \frac{1}{T} \sum_{t=1}^{T} f(a_t, b_t), S \right)
        &= \max_{\theta \in S^\circ \cap B_2(1)} \left\langle \frac{1}{T} \sum_{t=1}^{T} f(a_t, b_t), \theta \right\rangle \\
        &= \max_{\theta \in S^\circ \cap B_2(1)} \frac{1}{T} \sum_{t=1}^{T} -\langle \ell_t, \theta \rangle \\
        &\leq \max_{\theta \in S^\circ \cap B_2(1)} \frac{1}{T} \sum_{t=1}^{T} \langle \ell_t, \theta_t - \theta \rangle,
    \end{align*}
    where the final line holds by the halfspace oracle guarantee: for all $t \in [T]$, $\langle \ell_t, \theta_t \rangle = \langle -f(a_t, b_t), \theta_t \rangle \geq 0$.
    Notably, this reduction hinges on the RM oracle's ability to suppress all dual witnesses of distance, or separating directions, between the time-averaged payoff and the target cone.  
    
    At first glance, our reduction appears to be similar to the \citet{abernethy_blackwell_2011} reduction: we also use the GEQ oracle to select approach directions at each timestep. 
    However, our reduction analyzes the primal formulation of distance, $d(x, S) = \inf_{s \in S}\| x - s \|_2$, and hinges on the GEQ oracle's ability to ensure that a primal witness of vanishing distance exists.
    Hence, the intuition for why the GEQ oracle succeeds is distinct from the intuition for why the RM oracle succeeds.

    In a sense, GEQ and RM algorithms can be interpreted as solving conic approachability problems from the opposite sides of convex duality. 
    Coincidentally, by an argument of \citet{abernethy_blackwell_2011}, any approachability problem can be represented as a conic approachability problem after dimensional lifting. 
    Putting these observations together supplies some intuition for why GEQ and RM algorithms are equally powerful, despite having incomparable guarantees.
\end{remark}

While Lemma~\ref{lem:geq-to-ba} reveals interesting connections between GEQ, approachability, and regret minimization, it does not complete the equivalence we seek.
This is because we assumed access to an oracle for GEQ problems with constrained decision sets, while Theorem~\ref{thm:ba-to-geq} only shows that GEQ problems with unconstrained decision sets can be solved using BA algorithms. 
Next, we show that any GEQ problem with constraints can be solved using an oracle for GEQ problems without constraints, ``closing the loop" on the equivalence.

\subsection{Technical Lemma: Reducing GEQ with $\Theta \subseteq \mathbb R^d$ to GEQ with $\Theta = \mathbb R^d$}
Now we show how to use a black-box oracle for unconstrained GEQ, denoted by $\mathbf{GEQ}(\cdot \mid \mathbb R^d)$, to solve a GEQ problem with constraints. 
Our approach is to transform the constrained problem into an unconstrained problem with a modified class of vector fields. 
In this reduction, we treat Euclidean projection onto a nonempty closed convex set as an elementary operation.

Let $(\mathbb R^d, \Theta, \mathcal G)$ be a GEQ problem that satisfies Assumption~\ref{as:restor}.
Define the GEQ problem $(\mathbb R^d, \mathbb R^d, \bar{\mathcal G})$, where
\begin{equation}\label{eq:geq-to-geq}
    \bar{\mathcal G} = \left\{ \bar g_g: \mathbb R^d \to \mathbb R^d \mid \, \text{ for all } x \in \mathbb R^d, \, \bar g_g(x) = g\left( \Pi_\Theta(x)\right) + n_g(x) \, \text{for some} \, g \in \mathcal G \right\}
\end{equation}
and the function $n_g: \mathbb R^d \to \mathbb R^d$ is defined as
\begin{align*}
    n_g(x) =
    \begin{cases}
        2\left\| g\left( \Pi_\Theta(x) \right) \right\|_2 \cdot \frac{x - \Pi_\Theta(x)}{\| x - \Pi_\Theta(x) \|_2} &\quad \text{if} \quad x \neq \Pi_\Theta(x) \\
        0 &\quad \text{otherwise} 
    \end{cases}
\end{align*}
That is, we introduce an appropriately scaled projection residual of $x$ directly into the vector field.
This modification has two benefits. 
First, the scale of the projection residual ensures that the resulting vector field is restorative. 
Second, each scaled projection residual of the form $c(x-\Pi_\Theta(x))$ belongs to the normal cone $N_\Theta(\Pi_\Theta(x))$.
Hence, achieving the unconstrained notion of GEQ on the vector fields $\bar g(x) = g(\Pi_\Theta(x)) + c(x-\Pi_\Theta(x))$ implies that the constrained notion of GEQ has been achieved on the original vector fields $g(\Pi_\Theta(x))$.

We begin by verifying that $(\mathbb R^d, \mathbb R^d, \bar{\mathcal G})$ satisfies Assumption~\ref{as:restor} with slightly larger constants than the original problem.

\begin{lemma}\label{lem:geq-to-geq-helper}
    Let $(\mathbb R^d, \Theta, \mathcal G)$ be a GEQ problem that satisfies Assumption~\ref{as:restor} with constants $L \geq 0$ and $h \geq 0$. 
    Then the GEQ problem $(\mathbb R^d, \mathbb R^d, \bar{\mathcal G})$ defined in Equation~\ref{eq:geq-to-geq} satisfies Assumption~\ref{as:restor} with constants $3L$ and $2h$. 
\end{lemma}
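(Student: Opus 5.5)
The plan is to verify the three parts of Assumption~\ref{as:restor} for $(\mathbb R^d, \mathbb R^d, \bar{\mathcal G})$ one at a time. Throughout, fix $g \in \mathcal G$ and $x \in \mathbb R^d$, and write $p = \Pi_\Theta(x) \in \Theta$ and $r = x - p$.

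\emph{Decision set.} The decision set $\mathbb R^d$ is closed, convex, and contains the origin, so this part is immediate.

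\emph{Boundedness.} We have $\|n_g(x)\|_2 \le 2\|g(p)\|_2$ in both cases of its definition. The triangle inequality then gives
\[
\|\bar g_g(x)\|_2 \le \|g(p)\|_2 + 2\|g(p)\|_2 \le 3L.
\]

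\emph{Restorativity.} This is the substantive step. We must show that $\|x\|_2 \ge 2h$ implies $\langle x, g(p)\rangle + \langle x, n_g(x)\rangle \ge 0$. If $r = 0$, then $x = p \in \Theta$ and $\|p\|_2 \ge 2h \ge h$, so $(h,0)$-restorativity of $g$ gives the claim directly. If $r \neq 0$, the key ingredient is the projection property: $r \in N_\Theta(p)$ and $0 \in \Theta$, so $\langle r, 0 - p\rangle \le 0$, that is, $\langle p, r\rangle \ge 0$. Expanding $x = p + r$ gives
\[
\langle x, n_g(x)\rangle = 2\|g(p)\|_2\Bigl(\|r\|_2 + \tfrac{\langle p, r\rangle}{\|r\|_2}\Bigr) \ge 2\|g(p)\|_2\|r\|_2 .
\]
Cauchy--Schwarz gives $\langle r, g(p)\rangle \ge -\|g(p)\|_2\|r\|_2$. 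Combining the two bounds,
\[
\langle x, \bar g_g(x)\rangle \ge \langle p, g(p)\rangle + \|g(p)\|_2\|r\|_2 .
\]

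It remains to show the right-hand side is nonnegative, which we do by splitting on $\|p\|_2$. If $\|p\|_2 \ge h$, restorativity of $g$ at $p \in \Theta$ gives $\langle p, g(p)\rangle \ge 0$, and we are done. If $\|p\|_2 < h$, which can only occur when $h > 0$, then
\[
\|r\|_2 \ge \|x\|_2 - \|p\|_2 > 2h - h = h > \|p\|_2 .
\]
Hence $\langle p, g(p)\rangle \ge -\|p\|_2\|g(p)\|_2 > -\|r\|_2\|g(p)\|_2$, and the sum is again nonnegative. This shows $\bar{\mathcal G}$ is $(2h, 0)$-restorative.

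I expect the main obstacle to be this last case, where the projected point $p$ lies inside the restorativity horizon and $g(p)$ may point the wrong way. The argument handles it in two parts. The factor $2$ in $n_g$ absorbs the adversarial cross term $\langle r, g(p)\rangle$ and still leaves one copy of $\|g(p)\|_2\|r\|_2$. The enlarged horizon $2h$ then guarantees $\|r\|_2 > \|p\|_2$, so that leftover copy dominates the possibly negative term $\langle p, g(p)\rangle$.
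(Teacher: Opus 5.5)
Your proof is correct and follows the paper's argument essentially step for step: you use the same split $x = \Pi_\Theta(x) + (x - \Pi_\Theta(x))$ and the same projection optimality condition with $0 \in \Theta$ to control $\langle n_g(x), \Pi_\Theta(x)\rangle$, and you split on whether $\|\Pi_\Theta(x)\|_2 \ge h$, with $\|x\|_2 \ge 2h$ forcing $\|x - \Pi_\Theta(x)\|_2 > \|\Pi_\Theta(x)\|_2$ in the bad case. One small slip: the final strict inequality $-\|p\|_2\|g(p)\|_2 > -\|r\|_2\|g(p)\|_2$ fails when $g(p) = 0$, so it should read $\ge$, which is all that nonnegativity requires.
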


\begin{proof}[Proof of Lemma~\ref{lem:geq-to-geq-helper}]
    Since every $g \in \mathcal G$ is $L$-bounded, we have that for all $\bar g \in \bar{\mathcal G}$ and $x \in \mathbb R^d$,
    \[
    \| \bar g(x) \|_2 \leq \| g(\Pi_\Theta(x)) \|_2 + 2\| g(\Pi_\Theta(x))\|_2 \leq 3L.
    \]
    It remains to be shown that $\bar{\mathcal G}$ is $(2h, 0)$-restorative. 
    Observe that
    \begin{align*}
        \langle -\bar g(x), x \rangle 
        &= \langle -g(\Pi_\Theta(x)) - n_g(x), \Pi_\Theta(x)+ (x-\Pi_\Theta(x)) \rangle \\
        &= \langle -g(\Pi_\Theta(x)), \Pi_\Theta(x) \rangle + \langle -n_g(x), \Pi_\Theta(x) \rangle +\langle -g(\Pi_\Theta(x)), x - \Pi_\Theta(x) \rangle + \langle -n_g(x), x - \Pi_\Theta(x) \rangle \\
        &\leq \langle -g(\Pi_\Theta(x)), \Pi_\Theta(x) \rangle + \langle -g(\Pi_\Theta(x)), x - \Pi_\Theta(x) \rangle + \langle -n_g(x), x - \Pi_\Theta(x) \rangle,
    \end{align*}
    where the inequality holds because the first-order condition for optimality for Euclidean projection, together with the assumption that $0 \in \Theta$, implies
    \[
    \langle -n_g(x), \Pi_\Theta(x) \rangle \leq 0.
    \]
    There are two cases. 
    First, suppose $x = \Pi_\Theta(x)$, which means $n_g(x) = 0$. 
    Then if $\|x\| \geq 2h$, we have that $\| \Pi_\Theta(x) \| \geq h$, so by the $(h, 0)$-restorativity of $g$, 
    \[
    \langle -\bar g(x), x \rangle 
    \leq \langle -g(\Pi_\Theta(x)), \Pi_\Theta(x) \rangle \leq 0.
    \]
    Second, suppose $x \neq \Pi_\Theta(x)$. 
    Then
    \[
    \langle -\bar g(x), x \rangle 
    \leq \langle -g(\Pi_\Theta(x)), \Pi_\Theta(x) \rangle + \langle -g(\Pi_\Theta(x)), x - \Pi_\Theta(x) \rangle - 2 \| g(\Pi_\Theta(x)) \|_2 \| x-\Pi_\Theta(x) \|_2
    \]
    There are two further sub-cases.
    First, if $\|\Pi_\Theta(x)\|_2 \geq h$, then by the $(h, 0)$-restorativity of $g$,
    \[
    \langle -g(\Pi_\Theta(x)), \Pi_\Theta(x) \rangle \leq 0.
    \]
    Additionally, by Cauchy-Schwarz,
    \begin{align*}
        &\langle -g(\Pi_\Theta(x)), x-\Pi_\Theta(x) \rangle - 2\| g(\Pi_\Theta(x)) \|_2 \| x-\Pi_\Theta(x)\|_2 \\
        &\quad\quad\quad\quad\quad \leq \|g(\Pi_\Theta(x))\|_2 \|x-\Pi_\Theta(x) \|_2 - 2\| g(\Pi_\Theta(x)) \|_2 \| x-\Pi_\Theta(x)\|_2 \\
        &\quad\quad\quad\quad\quad \leq 0.
    \end{align*}
    So the desired inequality holds. 
    Second, if $\| x\| \geq 2h$ but $\| \Pi_\Theta(x) \| < h$, then
    \[
    \| x-\Pi_\Theta(x) \|_2 \geq \| x \|_2 - \|\Pi_\Theta(x)\|_2 > 2h - h > \| \Pi_\Theta(x) \|_2.
    \] 
    Hence, by Cauchy-Schwarz,
    \begin{align*}
        \langle -\bar g(x), x \rangle 
        &\leq \| g(\Pi_\Theta(x))\|_2 \| \Pi_\Theta(x) \|_2 + \| g(\Pi_\Theta(x)) \|_2 \|x-\Pi_\Theta(x) \|_2 - 2\| g(\Pi_\Theta(x)) \|_2 \| x-\Pi_\Theta(x) \|_2 \\
        &\leq \| g(\Pi_\Theta(x)) \|_2 \left( \|\Pi_\Theta(x) \|_2 - \| x-\Pi_\Theta(x) \|_2 \right) \\
        &\leq 0.
    \end{align*}
    So again, the desired inequality holds.
\end{proof}

Our reduction is formalized in Algorithm~\ref{alg:geq-to-geq}. 
At each round $t \geq 1$, given a (possibly) infeasible decision $x_t \in \mathbb R^d$ from the unconstrained GEQ oracle, we play the feasible decision $\theta_t = \Pi_\Theta(x_t)$. 
This approach has the following guarantee.

\begin{algorithm}[t]
\caption{GEQ, $\Theta \subseteq \mathbb R^d$, with GEQ Oracle, $\Theta = \mathbb R^d$} \label{alg:geq-to-geq}
    \begin{algorithmic}[1]
        \REQUIRE Decision set $\Theta$ and GEQ oracle $\mathbf{GEQ}(\cdot \mid \mathbb R^d)$
        \STATE Initialize $\bar g_0 = 0$
        \FOR{$t = 1,2,\dots$}
        \STATE Receive $x_t = \mathbf{GEQ}(\bar g_0, \dots, \bar g_{t-1} \mid \mathbb R^d)$
        \STATE Play $\theta_t = \Pi_\Theta(x_t)$
        \STATE Observe $g_t(\theta_t)$
        \STATE Set $n_t = 
        \begin{cases}
            2 \|g_t(\theta_t)\|_2 \cdot \frac{x_t-\theta_t}{\| x_t-\theta_t \|_2} &\quad \text{if} \quad x_t \neq \theta_t \\
            0 &\quad \text{otherwise}
        \end{cases}$
        \STATE Set $\bar g_t = g_t(\theta_t) + n_t$
        \ENDFOR
    \end{algorithmic}
\end{algorithm}

\begin{lemma}\label{lem:geq-to-geq}
    Let $(\mathbb R^d, \Theta, \mathcal G)$ be a GEQ problem that satisfies Assumption~\ref{as:restor} with constants $L \geq 0$ and $h \geq 0$.
    Define the GEQ problem $(\mathbb R^d, \mathbb R^d, \bar{\mathcal G})$ as in Equation~\ref{eq:geq-to-geq}.
    Fix any $T \geq 1$. 
    Selecting a sequence of (possibly) infeasible decisions $x_1, \dots, x_T \in \mathbb R^d$ and the corresponding sequence of feasible decisions $\theta_1, \dots, \theta_T \in \Theta$ using Algorithm~\ref{alg:geq-to-geq} has the following guarantee.
    For every sequence of vector fields $g_1, \dots, g_T \in \mathcal G$, there exists a sequence of normal vectors $n_1, \dots, n_T$, with $n_t \in N_\Theta(\theta_t)$ for all $t \in [T]$, such that
    \[
    \left\| \frac{1}{T}\sum_{t=1}^{T} g_t(\theta_t) + n_t \right\|_2 
    = \left\| \frac{1}{T} \sum_{t=1}^{T} \bar g_t(x_t) \right\|_2
    \leq \frac{\mathbf{GEQ}\mathrm{Err}(3L, 2h, T)}{T}.
    \]
\end{lemma}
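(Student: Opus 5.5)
The plan is to instantiate the unconstrained GEQ oracle on the lifted problem $(\mathbb R^d, \mathbb R^d, \bar{\mathcal G})$ from Equation~\ref{eq:geq-to-geq}. We then show that the quantities fed to it are exactly evaluations of members of $\bar{\mathcal G}$, and that the residual terms $n_t$ built in Algorithm~\ref{alg:geq-to-geq} are valid normal vectors. The statement has two parts, the equality and the inequality, and I would treat them separately.

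\textbf{Step 1: the feedback is valid oracle feedback.} For each round, Nature's $g_t \in \mathcal G$ induces $\bar g_t := \bar g_{g_t} \in \bar{\mathcal G}$. By construction,
\[
\bar g_t(x_t) = g_t(\Pi_\Theta(x_t)) + n_{g_t}(x_t) = g_t(\theta_t) + n_t,
\]
where $n_t$ is exactly the vector computed on Line 6 of Algorithm~\ref{alg:geq-to-geq}, since $\theta_t = \Pi_\Theta(x_t)$. Line 7 therefore feeds $\mathbf{GEQ}(\cdot \mid \mathbb R^d)$ the value $\bar g_t(x_t)$. This is precisely the history format the oracle expects for the problem $(\mathbb R^d, \mathbb R^d, \bar{\mathcal G})$ with decisions $x_t$. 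Lemma~\ref{lem:geq-to-geq-helper} says this problem satisfies Assumption~\ref{as:restor} with constants $3L$ and $2h$. Because $\Theta = \mathbb R^d$ has trivial normal cones, the oracle guarantee gives
\[
\left\|\frac1T\sum_{t=1}^T \bar g_t(x_t)\right\|_2 \le \frac{\mathbf{GEQ}\mathrm{Err}(3L,2h,T)}{T}.
\]
Summing the displayed identity over $t$ gives the claimed equality.

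\textbf{Step 2: $n_t \in N_\Theta(\theta_t)$.} If $x_t = \theta_t$, then $n_t = 0$, which lies in every normal cone. Otherwise $n_t$ is a nonnegative multiple of $x_t - \Pi_\Theta(x_t)$. The first-order optimality condition for Euclidean projection onto the closed convex set $\Theta$ gives $\langle x_t - \Pi_\Theta(x_t), \theta' - \Pi_\Theta(x_t)\rangle \le 0$ for all $\theta' \in \Theta$, so $x_t - \Pi_\Theta(x_t) \in N_\Theta(\theta_t)$. Since $N_\Theta(\theta_t)$ is a cone, the scaled vector $n_t$ also lies in it.

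The main point requiring care is Step 1, specifically the adaptivity and quantifier structure. Nature's choice of $g_t$ may depend on past play, and the oracle only ever sees the vectors $\bar g_t(x_t)$. This is fine because the oracle's guarantee is stated for arbitrary sequences in $\bar{\mathcal G}$, and the map $g \mapsto \bar g_g$ sends $\mathcal G$ into $\bar{\mathcal G}$. The well-definedness of $\bar g_g$ (unique projection onto a closed convex set) and the boundedness and restorativity are already supplied by Lemma~\ref{lem:geq-to-geq-helper}, so nothing further needs to be checked.
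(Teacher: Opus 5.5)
Your proposal is correct and follows essentially the same route as the paper: you invoke Lemma~\ref{lem:geq-to-geq-helper} to get Assumption~\ref{as:restor} with constants $3L$ and $2h$, apply the unconstrained oracle's guarantee to $\bar g_t(x_t) = g_t(\theta_t) + n_{g_t}(x_t)$, and take the $n_{g_t}(x_t)$ as the normal vectors. Your Step 2 just spells out, via the projection optimality condition and the cone property, the normal-cone membership that the paper asserts holds ``by construction.''
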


\begin{proof}[Proof of Lemma~\ref{lem:geq-to-geq}]
    By Lemma~\ref{lem:geq-to-geq-helper}, we have that $(\mathbb R^d, \mathbb R^d, \bar{\mathcal G})$ satisfies Assumption~\ref{as:restor} with constants $3L$ and $2h$.
    Hence, $\mathbf{GEQ}(\cdot \mid \mathbb R^d)$ selects a sequence of decisions $x_1, \dots, x_T \in \mathbb R^d$ that enforces 
    \[
    \left\| \frac{1}{T} \sum_{t=1}^{T} \bar g_t(x_t) \right\|_2 \leq \frac{\mathbf{GEQ}\mathrm{Err}(3L, 2h, T)}{T}.
    \]
    Now, recall that $\theta_t = \Pi_\Theta(x_t)$ and observe that
    \[
    \left\| \frac{1}{T} \sum_{t=1}^{T} \bar g_t(x_t) \right\|_2 = \left\| \frac{1}{T} \sum_{t=1}^{T} g_t(\theta_t) + n_{g_t}(x_t) \right\|_2,
    \]
    where each $n_{g_t}(x_t) \in N_\Theta(\theta_t)$ by construction. 
    Hence, the sequence $n_{g_1}(x_1), \dots, n_{g_T}(x_T)$ can be taken as the desired sequence of normal vectors $n_1, \dots, n_T$.
\end{proof}

\begin{remark}
    Algorithm~\ref{alg:geq-to-geq} and Lemma~\ref{lem:geq-to-geq} demonstrate that as long as the Learner can solve Euclidean projection problems, the Learner can readily construct the sequence of normal vectors whose existence the definition of GEQ error guarantees. 
    
\end{remark}

\subsection{Main Result: Reducing BA to GEQ with $\Theta = \mathbb R^d$}
Our final reduction follows from combining the constructions of Lemmas~\ref{lem:geq-to-ba} and~\ref{lem:geq-to-geq}.

Given a BA problem $(\mathbb R^d, A, B, f, S)$ that satisfies Assumption~\ref{as:ba}, where $S$ is also a cone, construct a GEQ problem $(\mathbb R^d, \mathbb R^d, \bar{\mathcal G})$, where
\begin{equation}\label{eq:geq-to-ba-setup2}
    \bar{\mathcal G} = \left\{ g_b: \mathbb R^d \to \mathbb R^d \mid \, \text{for all} \, x \in \mathbb R^d, \, g_b(x) = -f(\mathcal O_H(\Pi_{S^\circ}(x)), b) + n_b(x) \, \text{for some} \, b \in B \right\},
\end{equation}
and the function $n_b: \mathbb R^d \to \mathbb R^d$ is defined as
\begin{align*}
    n_b(x) = 
    \begin{cases}
        2\|f(\mathcal O_H(\Pi_{S^\circ}(x)), b)\|_2 \cdot \frac{x-\Pi_{S^\circ}(x)}{\| x-\Pi_{S^\circ}(x) \|_2}   &\quad \text{if} \quad x \neq \Pi_{S^\circ}(x) \\
        0 &\quad \text{otherwise}.
    \end{cases}
\end{align*}
At each round $t \geq 1$ of the BA problem, given a (possibly) infeasible approach direction $x_t \in \mathbb R^d$ from the unconstrained GEQ oracle, construct the feasible approach direction $\theta_t = \Pi_{S^\circ}(x_t) \in S^\circ$, and play the action $a_t = \mathcal O_H(\theta_t)$.
This reduction is formalized in Algorithm~\ref{alg:geq-to-ba2}, and has the following guarantee.

\begin{algorithm}[t]
    \caption{BA with GEQ Oracle, $\Theta = \mathbb R^d$} \label{alg:geq-to-ba2}
    \begin{algorithmic}[1]
        \REQUIRE Set $S^\circ$, halfspace oracle $\mathcal O_H$, and GEQ oracle $\mathbf{GEQ}(\cdot \mid \mathbb R^d)$
        \STATE Initialize $\bar g_0 = 0$
        \FOR{$t = 1,2,\dots$}
        \STATE Receive $x_t = \mathbf{GEQ}(\bar g_0, \dots, \bar g_{t-1} \mid \mathbb R^d)$
        \STATE Set approach direction $\theta_t = \Pi_{S^\circ}(x_t)$
        \STATE Play action $a_t = \mathcal O_H(\theta_t)$
        \STATE Observe payoff $f(a_t, b_t)$
        \STATE Set $n_t = 
        \begin{cases}
            2 \|f(a_t, b_t)\|_2 \cdot \frac{x_t-\theta_t}{\| x_t-\theta_t \|_2} &\quad \text{if} \quad x_t \neq \theta_t \\
            0 &\quad \text{otherwise}
        \end{cases}$
        \STATE Let $\bar g_t = -f(a_t, b_t) + n_t$
        \ENDFOR
    \end{algorithmic}
\end{algorithm}

\begin{theorem}\label{thm:geq-to-ba}
    Let $(\mathbb R^d, A, B, f, S)$ be a BA problem that satisfies Assumption~\ref{as:ba} and further suppose that $S$ is a cone. 
    Define the GEQ problem $(\mathbb R^d, \mathbb R^d, \bar{\mathcal G})$ as in Equation~\ref{eq:geq-to-ba-setup2}. 
    Fix $T \geq 1$.
    Then selecting a sequence of directions $x_1, \dots, x_T \in \mathbb R^d$ and a corresponding sequence of actions $a_1, \dots, a_T \in A$ using Algorithm~\ref{alg:geq-to-ba2} has the following guarantee.
    For any sequence of Nature's actions $b_1, \dots, b_T \in B$,
    \[
    d\left( \frac{1}{T} \sum_{t=1}^{T} f(a_t, b_t), S\right) 
    \leq \left\| \frac{1}{T} \sum_{t=1}^{T} \bar g_t(x_t) \right\|_2
    \leq \frac{\mathbf{GEQ}\mathrm{Err}(3L, 0, T)}{T}.
    \]
\end{theorem}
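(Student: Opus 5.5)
The plan is to compose Lemma~\ref{lem:geq-to-geq} with Lemma~\ref{lem:geq-to-ba}. I would view Algorithm~\ref{alg:geq-to-ba2} as exactly Algorithm~\ref{alg:geq-to-geq}, run with decision set $\Theta = S^\circ$, applied to the constrained GEQ problem $(\mathbb R^d, S^\circ, \mathcal G)$ of Equation~\ref{eq:geq-to-ba-setup}. In that problem $g_b(\theta) = -f(\mathcal O_H(\theta), b)$. The first step is to check that the unconstrained class $\bar{\mathcal G}$ of Equation~\ref{eq:geq-to-ba-setup2} coincides with the lifted class of Equation~\ref{eq:geq-to-geq} built from this $\mathcal G$. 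Indeed, $\|g_b(\Pi_{S^\circ}(x))\|_2 = \|f(\mathcal O_H(\Pi_{S^\circ}(x)), b)\|_2$, so the two residual terms $n_b$ and $n_g$ agree. Likewise, the vectors $\bar g_t$ fed to the oracle in Algorithm~\ref{alg:geq-to-ba2} match those in Algorithm~\ref{alg:geq-to-geq}.

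Next, Lemma~\ref{lem:geq-to-ba-helper} gives that $(\mathbb R^d, S^\circ, \mathcal G)$ satisfies Assumption~\ref{as:restor} with constants $L$ and $h = 0$. Lemma~\ref{lem:geq-to-geq-helper} then gives that $\bar{\mathcal G}$ satisfies it with constants $3L$ and $2h = 0$. Lemma~\ref{lem:geq-to-geq} therefore yields
\[
\left\| \frac{1}{T}\sum_{t=1}^T \bar g_t(x_t)\right\|_2 \leq \frac{\mathbf{GEQ}\mathrm{Err}(3L,0,T)}{T}.
\]
It also gives $\frac1T\sum_t \bar g_t(x_t) = \frac1T\sum_t \big(-f(a_t,b_t) + n_t\big)$, where $n_t = n_{b_t}(x_t) \in N_{S^\circ}(\theta_t)$. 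This membership holds because a nonnegative multiple of the projection residual $x_t - \Pi_{S^\circ}(x_t)$ lies in the normal cone at the projection.

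Finally, I would repeat the argument from the proof of Lemma~\ref{lem:geq-to-ba}. Lemma~\ref{lem:geq-to-ba-helper2} gives $N_{S^\circ}(\theta_t) \subseteq S$, so each $n_t \in S$. By convexity of $S$, the average $\frac1T\sum_t n_t$ lies in $S$. Hence
\[
\left\|\frac1T\sum_t f(a_t,b_t) - \frac1T\sum_t n_t\right\|_2 \geq d\left(\frac1T\sum_t f(a_t,b_t), S\right),
\]
which gives the first inequality.

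The main thing to verify carefully is the identification of the classes in the first step, in particular that the restorativity constant stays at $0$. With $h = 0$, the case analysis in Lemma~\ref{lem:geq-to-geq-helper} must still go through: every $x$ has $\|\Pi_{S^\circ}(x)\|_2 \geq 0 = h$, so only the first subcases arise, and those only use $(0,0)$-restorativity of $g_b$ on $S^\circ$ together with Cauchy--Schwarz. Beyond that check, the remaining steps are direct invocations of earlier results.
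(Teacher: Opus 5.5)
Your proposal is correct and matches the paper's proof essentially step for step. Like you, the paper uses Lemma~\ref{lem:geq-to-ba-helper} and Lemma~\ref{lem:geq-to-geq-helper} to get constants $3L$ and $0$, and bounds $\bigl\|\frac{1}{T}\sum_{t=1}^{T}\bar g_t(x_t)\bigr\|_2$ through Lemma~\ref{lem:geq-to-geq}. It gets the first inequality from $n_t \in N_{S^\circ}(\theta_t) \subseteq S$ (Lemma~\ref{lem:geq-to-ba-helper2}) together with convexity of $S$. Your extra checks, that the classes in Equations~\ref{eq:geq-to-ba-setup2} and~\ref{eq:geq-to-geq} coincide and that with $h=0$ only the first subcases of the case analysis arise, only make explicit what the paper leaves implicit.
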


\begin{proof}[Proof of Theorem~\ref{thm:geq-to-ba}]
    Define 
    \[
    \mathcal G = \left\{ -f(\mathcal O_H(\cdot), b): S^\circ \to \mathbb R^d \mid b \in B  \right\}.
    \]
    By Lemma~\ref{lem:geq-to-ba-helper}, the GEQ problem $(\mathbb R^d, S^\circ, \mathcal G)$ satisfies Assumption~\ref{as:restor} with constants $L$ and $h = 0$. 
    By Lemma~\ref{lem:geq-to-geq-helper}, the GEQ problem $(\mathbb R^d, \mathbb R^d, \bar{\mathcal G})$ satisfies Assumption~\ref{as:restor} with constants $3L$ and $2h = 0$. 
    Hence, 
    \begin{align*}
        d\left( \frac{1}{T} \sum_{t=1}^{T} f(a_t, b_t), S\right) 
        &\leq \left\| \frac{1}{T} \sum_{t=1}^{T} f(\mathcal O_H(\theta_t), b_t) - n_t \right\|_2 \\
        &= \left\| \frac{1}{T} \sum_{t=1}^{T} f(\mathcal O_H(\Pi_{S^\circ}(x_t)), b_t) - 2\left\| f(\mathcal O_H(\Pi_{S^\circ}(x_t)), b_t) \right\|_2 \cdot \frac{x_t - \Pi_{S^\circ}(x_t)}{\| x_t - \Pi_{S^\circ}(x_t) \|_2} \right\|_2 \\
        &= \left\| \frac{1}{T} \sum_{t=1}^{T} \bar g_t(x_t) \right\|_2 
        \leq \frac{\mathbf{GEQ}\mathrm{Err}(3L, 0, T)}{T},
    \end{align*}
    where  $n_t = 2\left\| f(\mathcal O_H(\Pi_{S^\circ}(x_t)), b_t) \right\|_2 \cdot \frac{x_t - \Pi_{S^\circ}(x_t)}{\| x_t - \Pi_{S^\circ}(x_t) \|_2}$.
    The first inequality holds by the fact that $n_t \in N_{S^\circ}(\theta_t)$, Lemma~\ref{lem:geq-to-ba-helper2}, and the convexity of $S$.
    The second and third lines follow from definitions.
    The final inequality holds by Lemma~\ref{lem:geq-to-geq}.
    \end{proof}
    
\begin{remark}
    Suppose $(\mathbb R^d, A, B, f, S)$ is a BA problem that satisfies Assumption~\ref{as:ba} but $S$ is not a cone. 
    Then we can follow \citet{abernethy_blackwell_2011}'s argument (see Proposition~\ref{prop:conic2}) to construct a lifted BA problem $(\mathbb R^{d+1}, A, B, \bar f, \bar S)$, where 
    \[
    \bar f(a, b) = 3L \oplus f(a, b), \quad \bar S = \mathrm{cone}(\{3L\} \times (S \cap B_2(3L))).
    \]
    Then by Proposition~\ref{prop:conic}, solving the lifted BA problem $(\mathbb R^{d+1}, A, B, \bar f, \bar S)$ implies a solution for the original BA problem $(\mathbb R^d, A, B, f, S)$.
    Hence, Theorem~\ref{thm:geq-to-ba} can be invoked on the lifted problem.
\end{remark}

\section{Applications}\label{sec:algo}
In this section, we study algorithmic applications of the equivalence between Blackwell approachability and GEQ. 
Namely, we compose our reductions with \citet{abernethy_blackwell_2011}'s reductions between Blackwell approachability and regret minimization, which demonstrate how RM algorithms can be used to solve GEQ problems, and vice versa.
One direction of this composed reduction yields a principled way to transfer refined guarantees, such as optimism and strong adaptivity, from RM to GEQ. 
The other direction yields a way to solve RM problems with no step size parameter, and with no knowledge of the loss vector norm bound $L$ and time horizon $T$. 

\subsection{Reducing GEQ to Regret Minimization}
Composing our reduction from GEQ with $\Theta = \mathbb{R}^d$ to Blackwell approachability with the known reduction from approachability to regret minimization, discussed in Appendix~\ref{sec:equiv}, yields a reduction from GEQ to regret minimization. 

Concretely, given a GEQ problem $(\mathbb R^d, \mathbb R^d, \mathcal G)$ that satisfies Assumption~\ref{as:restor}, we construct the RM problem $(\mathbb R^d, B_2(1), B_2(L))$, and at each time $t \geq 1$, treat the realized vector $g_t(\theta_t) \in B_2(L)$ as the loss vector. 
The resulting reduction, formalized in Algorithm~\ref{alg:nr-to-geq}, closely resembles Algorithm~\ref{alg:ba-to-geq}, with calls to a BA oracle replaced by calls to a RM oracle. 

\begin{algorithm}[t]
\caption{GEQ, $\Theta = \mathbb{R}^d$, with RM Oracle}
\label{alg:nr-to-geq}
    \begin{algorithmic}[1]
    \REQUIRE Positive and increasing function $\alpha: \mathbb R_{+} \to \mathbb R_{+}$ and RM oracle $\mathbf{RM}$
    \STATE Initialize $g_0 = 0$ and arbitrary $\theta_0 \in \Theta$
    \FOR{$t = 1,2,\dots$}
    \STATE Set approach direction $u_t = \mathbf{RM}(g_0, \dots, g_{t-1})$
    \IF{$u_t \neq 0$}
        \STATE Play action $\theta_t =  \alpha(t) \cdot \frac{u_t}{\|u_t\|_2}$
    \ELSE
        \STATE Play action $\theta_t = \theta_0$
    \ENDIF
    \STATE Observe vector $g_t(\theta_t)$ and set $g_t = g_t(\theta_t)$
    \ENDFOR
  \end{algorithmic}
\end{algorithm}

\begin{corollary}\label{cor:nr-to-geq}
    Let $(\mathbb R^d, \mathbb{R}^d, \mathcal G)$ be a GEQ problem that satisfies Assumption~\ref{as:restor}.
    Fix any $T \geq 1$.
    Then using Algorithm~\ref{alg:nr-to-geq} to select the sequence of actions $\theta_1, \dots, \theta_T \in \mathbb R^d$ guarantees that, for any sequence of vector fields $g_1, \dots, g_T \in \mathcal{G}$, 
    \[
    \left\| \frac{1}{T} \sum_{t=1}^{T} g_t(\theta_t) \right\|_2 
    \leq \frac{\mathbf{RM}\mathrm{Err}(1, L, T)}{T} + \frac{L \lceil \alpha^{-1}(h) \rceil }{T}.
    \]
\end{corollary}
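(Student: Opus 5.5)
We plan to argue directly, without routing through the robust BA oracle, using the dual representation of the GEQ error from the technical overview. We take the RM oracle to be run on decision set $B_2(1)$ with losses $\ell_t = g_t(\theta_t) \in B_2(L)$, so that (as in the appendix on regret minimization) its guarantee reads
\[
\sup_{u \in B_2(1)} \sum_{t=1}^{T} \langle \ell_t, u_t - u \rangle \leq \mathbf{RM}\mathrm{Err}(1, L, T).
\]
First, we write
\[
\left\| \frac{1}{T}\sum_{t=1}^{T} g_t(\theta_t) \right\|_2 = \max_{u \in B_2(1)} \frac{1}{T}\sum_{t=1}^{T} \langle -\ell_t, u \rangle,
\]
and then add and subtract $\frac{1}{T}\sum_t \langle \ell_t, u_t\rangle$. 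This yields the bound
\[
\left\| \frac{1}{T}\sum_{t=1}^{T} g_t(\theta_t) \right\|_2 \leq \frac{1}{T}\sup_{u \in B_2(1)} \sum_{t=1}^{T} \langle \ell_t, u_t - u \rangle + \frac{1}{T}\sum_{t=1}^{T} \langle -\ell_t, u_t\rangle .
\]
The first term is at most $\mathbf{RM}\mathrm{Err}(1,L,T)/T$ by the oracle guarantee.

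Next, we bound the second sum term by term. If $u_t = 0$, the term is $0$. If $u_t \neq 0$, then $\theta_t = \alpha(t) u_t/\|u_t\|_2$, so
\[
\langle -g_t(\theta_t), u_t\rangle = \frac{\|u_t\|_2}{\alpha(t)}\,\langle -g_t(\theta_t), \theta_t\rangle .
\]
Whenever $\alpha(t) \geq h$, we have $\|\theta_t\|_2 \geq h$, and $(h,0)$-restorativity makes this term nonpositive. This mirrors the case analysis in Lemma~\ref{lem:ba}. Since $\alpha$ is increasing, $\alpha(t) < h$ can only occur for $t < \alpha^{-1}(h)$, which is at most $\lceil \alpha^{-1}(h)\rceil$ rounds. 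On each such round, Cauchy--Schwarz with $\|u_t\|_2 \le 1$ and $\|g_t(\theta_t)\|_2 \le L$ bounds the term by $L$. Summing and dividing by $T$ gives the second term $L\lceil \alpha^{-1}(h)\rceil/T$.

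The main point of care is the bookkeeping: the RM oracle's decision set must be exactly $B_2(1)$, so that both the dual representation of the norm and the bound $\|u_t\|\le 1$ on bad rounds hold. We also need that the losses $g_t(\theta_t)$ fed to the oracle are legitimate adaptive losses in $B_2(L)$. Both follow from the setup of the RM problem $(\mathbb R^d, B_2(1), B_2(L))$. Finally, the count of failing rounds should be handled as in the proof of Theorem~\ref{thm:ba-to-geq}.
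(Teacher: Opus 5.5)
Your proposal is correct and follows essentially the same route as the paper: write the GEQ error as a maximum over $B_2(1)$, split off the regret term, and charge at most $L$ to each of the at most $\lceil \alpha^{-1}(h)\rceil$ rounds on which $\langle g_t(\theta_t), u_t\rangle$ can be negative. The only difference is that you check the sign directly, via $u_t = \|u_t\|_2\,\theta_t/\alpha(t)$ and restorativity, whereas the paper takes the failure count from Theorem~\ref{thm:ba-to-geq}.
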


\begin{proof}[Proof of Corollary~\ref{cor:nr-to-geq}]
    From Theorem~\ref{thm:ba-to-geq} and the dual formulation of distance to a cone, we have that
    \[ 
    \left\| \frac{1}{T} \sum_{t=1}^{T} g_t(\theta_t) \right\|_2 
    = d\left( -\frac{1}{T} \sum_{t=1}^{T} g_t(\theta_t), \{0\} \right) 
    = \max_{u \in B_2(1)} \left\langle - \frac{1}{T} \sum_{t=1}^{T} g_t(\theta_t), u \right\rangle.
    \]
    We would like to upper bound the final term with regret,  $\max_{u \in B_2(1)} T^{-1} \sum_{t=1}^{T} \langle g_t(\theta_t), u_t - u\rangle$.
    This requires that $T^{-1} \sum_{t=1}^{T} \langle g_t(\theta_t), u_t \rangle \geq 0$, but on the rounds where restorativity fails,  $\langle g_t(\theta_t), u_t \rangle < 0$. 
    By Theorem~\ref{thm:ba-to-geq}, there are at most $\lceil \alpha^{-1}(h) \rceil$ such rounds, and on each round with a failure, $\langle g_t(\theta_t), u_t \rangle \geq -L$ by Cauchy-Schwarz and the $L$-boundedness of $g_t$.
    Hence, we have that
    \begin{align*}
        \max_{u \in B_2(1)} \left\langle - \frac{1}{T} \sum_{t=1}^{T} g_t(\theta_t), u \right\rangle 
        &\leq \frac{1}{T} \max_{u \in B_2(1)} \sum_{t=1}^{T} \langle g_t(\theta_t), u_t - u \rangle + \frac{L\lceil \alpha^{-1}(h) \rceil}{T} \\
        &\leq \frac{\mathbf{RM}\mathrm{Err}(1, L, T)}{T} + \frac{L\lceil \alpha^{-1}(h) \rceil}{T},
    \end{align*}
    where the last line follows from the RM oracle's guarantee.
\end{proof}

\begin{remark}
    Corollary~\ref{cor:nr-to-geq} prescribes a simple scheme to convert RM algorithms into GEQ algorithms, which can be intuitively interpreted even without invoking the Blackwell approachability framework.  
    Each decision $u_t$ that is returned by an RM algorithm should be scaled up to ensure that there are a bounded number of rounds where restorativity does not hold. 
    This prescription is consistent with the observation that $\sum_{t=1}^{T} \langle g_t(\theta_t), u_t - u \rangle$ upper bounds $\max_{u \in B_2(1)}\left\langle \sum_{t=1}^{T} g_t(\theta_t), u \right\rangle = \| \sum_{t=1}^{T} g_t(\theta_t) \|_2$ as long as the $\sum_{t=1}^{T} \langle g_t(\theta_t), u_t \rangle$ is non-negative.
    Re-scaling decisions is exactly what is needed to enforce this condition on all but a bounded number of rounds.
\end{remark}

Concretely, Algorithm~\ref{alg:nr-to-geq} can serve as a principled way to translate any refined guarantee from RM to GEQ. 
That is, existing GEQ algorithms minimize worst-case GEQ error on average over the entire time horizon, with error bounds that scale like $O(LT^{-1/2})$. 
In contrast, there exist countless refined notions of error in RM that are small when the problem instance is ``easier" in some sense.

For example, there are optimistic algorithms that, at each round $t \geq 1$, construct a prediction $m_t$ of the loss vector $g_t$, and use it to choose action $\theta_t$ before observing the true $g_t$. 
The resulting optimistic regret bounds scale with prediction error, rather than in terms of $L$ and $T$. 
Hence, these bounds are small when loss vectors are predictable. 
By plugging such an optimistic algorithm into our reduction, we derive a GEQ algorithm that obtains optimistic error bounds. 

\begin{corollary}\label{cor:optim}
    Let $(\mathbb R^d, \mathbb{R}^d, \mathcal G)$ be a GEQ problem that satisfies Assumption~\ref{as:restor}.
    Fix any $T \geq 1$.
    Instantiate $\mathbf{RM}$ in Algorithm~\ref{alg:nr-to-geq} with the optimistic algorithm from Proposition~\ref{prop:optim}.
    Then using Algorithm~\ref{alg:nr-to-geq} to select the sequence of actions $\theta_1, \dots, \theta_T \in \mathbb R^d$ guarantees that, for any sequence of vector fields $g_1, \dots, g_T \in \mathcal{G}$, 
    \[
    \left\| \frac{1}{T} \sum_{t=1}^{T} g_t(\theta_t) \right\|_2 
    \leq O\left( \frac{\sqrt{\sum_{t=1}^{T} \| g_t(\theta_t) - m_t \|^2_2}}{T} \right) + \frac{L \lceil \alpha^{-1}(h) \rceil}{T}.
    \]
\end{corollary}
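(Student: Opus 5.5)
This follows by applying Corollary~\ref{cor:nr-to-geq} with the optimistic regret bound of Proposition~\ref{prop:optim} as the oracle's rate. Corollary~\ref{cor:nr-to-geq} already gives
\[
\left\| \frac{1}{T} \sum_{t=1}^{T} g_t(\theta_t) \right\|_2 \leq \frac{1}{T}\max_{u \in B_2(1)} \sum_{t=1}^{T} \langle g_t(\theta_t), u_t - u \rangle + \frac{L \lceil \alpha^{-1}(h) \rceil}{T}.
\]
Its proof uses only three facts. The first is the dual formulation of the Euclidean norm over $B_2(1)$. The second is that restorativity fails on at most $\lceil \alpha^{-1}(h) \rceil$ rounds (Theorem~\ref{thm:ba-to-geq}). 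The third is that each failure costs at most $L$, by Cauchy--Schwarz and $\|u_t\|_2 \leq 1$. None of these depends on which RM algorithm produces $u_t$, as long as $u_t \in B_2(1)$. So I would keep the regret term in this realized form, rather than passing immediately to the worst-case rate $\mathbf{RM}\mathrm{Err}(1,L,T)$.

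The steps are as follows.
\begin{enumerate}
\item Check that the optimistic algorithm of Proposition~\ref{prop:optim}, run on the decision set $B_2(1)$, is a valid RM oracle in the sense used by Algorithm~\ref{alg:nr-to-geq}. It must output $u_t \in B_2(1)$ using only the past losses $g_1(\theta_1),\dots,g_{t-1}(\theta_{t-1})$ and the hint $m_t$, where $m_t$ is formed before $g_t$ is revealed.
\item Invoke the guarantee of Proposition~\ref{prop:optim} on the realized loss sequence $\ell_t = g_t(\theta_t) \in B_2(L)$. Since the decision set has diameter $2$, this gives
\[
\max_{u \in B_2(1)} \sum_{t=1}^{T} \langle g_t(\theta_t), u_t - u \rangle \leq O\Big(\sqrt{\textstyle\sum_{t=1}^{T} \|g_t(\theta_t) - m_t\|_2^2}\Big).
\]
\item Substitute this bound into the display above and divide by $T$, which yields the claim.
\end{enumerate}

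The one point needing care is adaptivity. The losses $g_t(\theta_t)$ depend on $\theta_t$, which is a deterministic function of $u_t$. So Nature is effectively adaptive with respect to the RM algorithm's own outputs. I would note that Proposition~\ref{prop:optim} holds for arbitrary, adaptively chosen loss sequences, because it is a deterministic, pathwise bound. It therefore applies to this realized sequence.

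I also need $\alpha(t)$ not to interfere with the optimistic algorithm's internal state. This holds because the rescaling $\theta_t = \alpha(t)u_t/\|u_t\|_2$ happens outside the oracle, and the oracle only ever sees the loss vectors. The case $u_t = 0$ contributes $\langle g_t(\theta_t), u_t\rangle = 0 \geq 0$, so it is harmless.
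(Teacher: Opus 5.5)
Your proposal is correct and follows the paper's route: the paper omits the proof as an immediate combination of Corollary~\ref{cor:nr-to-geq} with Proposition~\ref{prop:optim}. Your choice to keep the regret term in its realized, pathwise form, rather than the worst-case rate $\mathbf{RM}\mathrm{Err}(1,L,T)$ (which by definition depends only on $B$, $L$, $T$), is what makes that combination rigorous for a data-dependent bound.
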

The proof is immediate from combining Corollary~\ref{cor:nr-to-geq} with Proposition~\ref{prop:optim}, and is omitted.

In another direction, by plugging into our reduction known RM algorithms with strongly adaptive guarantees that hold over all sub-intervals of time in $[0, T]$ at a rate that adapts to sub-interval length, we derive a GEQ algorithm that obtains strongly adaptive error bounds. 

\begin{corollary}\label{cor:sar}
    Let $(\mathbb R^d, \mathbb{R}^d, \mathcal G)$ be a GEQ problem that satisfies Assumption~\ref{as:restor}.
    Fix any $T \geq 1$.
    Instantiate $\mathbf{RM}$ in Algorithm~\ref{alg:nr-to-geq} with the strongly adaptive algorithm from Proposition~\ref{prop:sar}.
    Then using Algorithm~\ref{alg:nr-to-geq} to select the sequence of actions $\theta_1, \dots, \theta_T \in \mathbb R^d$ guarantees that, for any sequence of vector fields $g_1, \dots, g_T \in \mathcal{G}$ and every $1 \leq t_1 < t_2 \leq T$, 
    \begin{align}
        \left\| \frac{1}{t_2 - t_1+1} \sum_{t \in [t_1, t_2]} g_t(\theta_t) \right\|_2 \leq O\left( \frac{L(1+\sqrt{\log(t_2)})}{\sqrt{t_2-t_1+1}} + \frac{L \lceil \alpha^{-1}(h) \rceil}{t_2 - t_1+1} \right).
    \end{align}
\end{corollary}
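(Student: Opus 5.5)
The plan is to mimic the proof of Corollary~\ref{cor:nr-to-geq}, but restricted to an arbitrary interval $I = [t_1, t_2]$, and then plug in the strongly adaptive guarantee of Proposition~\ref{prop:sar}.

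\textbf{Step 1: dual form of the interval norm.} Fix $1 \leq t_1 < t_2 \leq T$ and write $|I| = t_2 - t_1 + 1$. Using the dual formulation of the Euclidean norm,
\[
\left\| \frac{1}{|I|} \sum_{t \in I} g_t(\theta_t) \right\|_2 = \max_{u \in B_2(1)} \frac{1}{|I|} \sum_{t \in I} \langle -g_t(\theta_t), u \rangle .
\]

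\textbf{Step 2: add and subtract the learner's play.} Add and subtract $\frac{1}{|I|}\sum_{t \in I} \langle g_t(\theta_t), u_t \rangle$. This bounds the right-hand side by the interval regret
\[
\frac{1}{|I|} \max_{u \in B_2(1)} \sum_{t \in I} \langle g_t(\theta_t), u_t - u \rangle,
\]
plus the correction term $-\frac{1}{|I|}\sum_{t \in I} \langle g_t(\theta_t), u_t \rangle$. Here $u_t \in B_2(1)$ is the RM oracle's output on the losses $g_s = g_s(\theta_s)$.

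\textbf{Step 3: control the correction term.} If $u_t = 0$, the summand is zero. If $u_t \neq 0$ and $\alpha(t) \geq h$, then $\theta_t = \alpha(t) u_t/\|u_t\|_2$ has norm at least $h$. Restorativity then gives $\langle -g_t(\theta_t), \theta_t \rangle \leq 0$, hence $\langle g_t(\theta_t), u_t \rangle \geq 0$, since $u_t$ is a positive multiple of $\theta_t$. The remaining rounds have $\alpha(t) < h$, i.e.\ $t < \alpha^{-1}(h)$, and there are at most $\lceil \alpha^{-1}(h) \rceil$ of them in all of $[T]$, so certainly in $I$. On each such round, Cauchy--Schwarz with $\|u_t\|_2 \leq 1$ and $L$-boundedness gives $\langle g_t(\theta_t), u_t \rangle \geq -L$. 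So the correction term is at most $L\lceil \alpha^{-1}(h) \rceil / |I|$. This is exactly the argument of Theorem~\ref{thm:ba-to-geq}, used interval-wise.

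\textbf{Step 4: apply the strongly adaptive guarantee.} Proposition~\ref{prop:sar}, with decision set $B_2(1)$ (diameter $O(1)$) and losses in $B_2(L)$, bounds the interval regret on every $[t_1, t_2]$ by $O\bigl(L(1+\sqrt{\log t_2})\sqrt{|I|}\bigr)$. Dividing by $|I|$ gives the first term of the claim, and Step 3 gives the second.

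\textbf{Main obstacle.} The one point needing care is that the strongly adaptive regret bound must apply to the realized, adaptively generated losses $g_t(\theta_t)$. These depend on the RM oracle's own past outputs, so I must check that Proposition~\ref{prop:sar} holds against adaptive adversaries. The standard guarantee (e.g.\ Jun et al.) is deterministic for any loss sequence with bounded norm, so this should go through.
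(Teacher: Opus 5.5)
Your proposal is correct and follows the paper's intended (omitted) proof: you rerun the argument of Corollary~\ref{cor:nr-to-geq} on the interval $[t_1,t_2]$ and then invoke Proposition~\ref{prop:sar} with decision-set radius $B=1$. The steps are the same: the dual form of the norm, adding and subtracting $\langle g_t(\theta_t), u_t\rangle$, and at most $\lceil \alpha^{-1}(h)\rceil$ restorativity failures, each costing at most $L$. The adaptivity concern you flag is one the paper leaves implicit. It resolves as you say, because the RM oracle's guarantee holds pathwise for every bounded loss sequence fed to it online.
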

The proof is immediate from combining Corollary~\ref{cor:nr-to-geq} with Proposition~\ref{prop:sar}, and is omitted.

\subsection{Reducing Regret Minimization to GEQ}\label{sec:nr}
Composing the known reduction from regret minimization to Blackwell approachability, discussed in Appendix~\ref{sec:equiv}, with our reduction from approachability to GEQ with $\Theta = \mathbb R^d$ yields a reduction from regret minimization to GEQ with $\Theta = \mathbb R^d$. 

Concretely, given a RM problem $(\mathbb R^d, \Theta, \mathcal G)$, we construct the GEQ problem $\left( \mathbb R^{d+1}, \mathbb R^{d+1}, \bar{\mathcal G} \right)$, where
\[
\bar{\mathcal G} = \left\{ \bar g_g: \mathbb R^{d+1} \to \mathbb R^{d+1} \mid \text{ for all } x \in \mathbb R^{d+1}, \, \bar g_g(x) = -f\left( \theta\left( \Pi_{S^\circ}(x) \right), g \right) + n_g(x) \text{ for some } g \in \mathcal G \right\},
\]
the mapping $n_g(x): \mathbb R^{d+1} \to \mathbb R^{d+1}$ is defined as
\begin{align*}
    n_g(x) =
    \begin{cases}
        2\| f\left( \theta \left( \Pi_{S^\circ}(x) \right), g \right) \|_2 \cdot \frac{x - \Pi_{S^\circ}(x)}{\| x - \Pi_{S^\circ}(x) \|_2} &\quad \text{if} \quad x \neq \Pi_{S^\circ}(x) \\
        0 &\quad \text{otherwise},
    \end{cases}
\end{align*}
and $f(\theta, g)$, $S$, and $\theta(u)$ are defined as in Equations~\ref{eq:rm-setup} and \ref{eq:rm-oh}. 
This reduction is formalized in Algorithm~\ref{alg:geq-to-nr}, which specializes Algorithm~\ref{alg:geq-to-ba2} for the conic approachability representation of RM. 

\begin{algorithm}[t]
\caption{RM with GEQ Oracle, $\Theta = \mathbb R^{d+1}$}
\label{alg:geq-to-nr}
\begin{algorithmic}[1]
\REQUIRE Set $S^\circ$ and GEQ oracle $\mathbf{GEQ}(\cdot \mid \mathbb R^{d+1})$
\STATE Initialize $\bar g_0 = 0$
\FOR{$t = 1,2,\dots$}
    \STATE Receive $x_t = \mathbf{GEQ}(\bar g_0, \dots, \bar g_{t-1} \mid \mathbb R^{d+1})$
    \STATE Set approach direction $u_t = \Pi_{S^\circ}(x_t)$
    \STATE Play action $\theta_t = \theta(u_t)$
    \STATE Observe loss vector $g_t$
    \STATE Set $f_t = \frac{\langle g_t, \theta_t \rangle}{B} \oplus -g_t$
    \STATE Set $n_t = 
    \begin{cases}
        2 \|f_t\|_2 \cdot \frac{x_t-u_t}{\| x_t-u_t \|_2} &\quad \text{if} \quad x_t \neq u_t \\
        0 &\quad \text{otherwise}
    \end{cases}$
    \STATE Let $\bar g_t = -f_t + n_t$
\ENDFOR
\end{algorithmic}
\end{algorithm}

\begin{corollary}\label{cor:geq-to-nr}
    Let $(\mathbb R^d, \Theta, \mathcal{G})$ be a RM problem that satisfies Assumption~\ref{as:rm}.
    Fix any $T \geq 1$.
    Selecting a sequence of actions $\theta_1, \dots, \theta_T \in \Theta$ using Algorithm~\ref{alg:geq-to-nr} guarantees that for any sequence of loss vectors $g_1, \dots, g_T \in \mathcal{G}$,
    \begin{align}
        \max_{\theta \in \Theta} \frac{1}{T} \sum_{t=1}^{T} \langle g_t, \theta_t - \theta \rangle \leq \frac{2B
        \cdot \mathbf{GEQ}\mathrm{Err}(3\sqrt{2} L, 0, T)}{T}.
\end{align}
\end{corollary}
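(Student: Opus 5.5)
The plan is to chain the conic representation of regret minimization from Appendix~\ref{sec:equiv} with Theorem~\ref{thm:geq-to-ba}. Recall that \citet{abernethy_blackwell_2011} represent an RM problem $(\mathbb R^d, \Theta, \mathcal G)$, with $\Theta$ bounded by $B$ and losses bounded by $L$ (Assumption~\ref{as:rm}), as a conic BA problem in $\mathbb R^{d+1}$. The components are:
\begin{itemize}
\item payoff $f(\theta, g) = \frac{\langle g, \theta\rangle}{B} \oplus -g$;
\item target set $S = \mathrm{cone}(\{1\} \times \Theta/B)^\circ$, or the analogous cone of Equation~\ref{eq:rm-setup};
\item halfspace oracle $\theta(u)$ from Equation~\ref{eq:rm-oh}, which for $u = \lambda \oplus v \in S^\circ$ with $\lambda>0$ returns $\theta(u) = Bv/\lambda \in \Theta$.
\end{itemize}

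\textbf{Step 1: check Assumption~\ref{as:ba}.} The payoff norm satisfies $\|f\|_2 \le \sqrt{L^2 + L^2} = \sqrt2 L$. The set $S$ is a nonempty closed convex cone. The map $\theta(\cdot)$ is a valid halfspace oracle, since $\langle u, f(\theta(u), g)\rangle = \lambda\langle g,\theta(u)\rangle/B - \langle v, g\rangle = 0 \le \sigma_S(u) = 0$. I will cite the appendix propositions for these facts rather than reprove them, taking care with the degenerate direction $\lambda = 0$ as the appendix does.

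\textbf{Step 2: invoke Theorem~\ref{thm:geq-to-ba}.} Algorithm~\ref{alg:geq-to-nr} is literally Algorithm~\ref{alg:geq-to-ba2} specialized to this problem, with $a_t = \theta(u_t)$ and $f_t = f(\theta_t, g_t)$. The theorem is applied with bound $\sqrt2 L$ in place of $L$, so its constant $3L$ becomes $3\sqrt2 L$. This yields
\[
d\Big(\tfrac1T\textstyle\sum_t f_t, S\Big) \le \mathbf{GEQ}\mathrm{Err}(3\sqrt2 L, 0, T)/T.
\]

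\textbf{Step 3: convert distance to regret.} I will use the dual formula for distance to a cone, $d(z,S) = \max_{w\in S^\circ\cap B_2(1)} \langle z, w\rangle$. For any comparator $\theta\in\Theta$, choose a dual vector $w$ proportional to $1 \oplus \theta/B$. This vector lies in $S^\circ$, and its norm is at most $\sqrt{1+\|\theta\|^2/B^2} \le \sqrt2$, so normalizing loses at most a factor $\sqrt2$. Evaluating the inner product gives
\[
\Big\langle \tfrac1T\textstyle\sum_t f_t,\; 1\oplus \theta/B\Big\rangle = \tfrac1{BT}\textstyle\sum_t \langle g_t, \theta_t - \theta\rangle.
\]
Hence the average regret is at most $\sqrt2 B \cdot d(\cdot, S)$.

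The main obstacle is getting the constants right. The stated bound has a factor $2B$, whereas my direct computation gives $\sqrt2 B$. So I expect the paper's normalization of $S$ to differ: for instance, $\Theta$ may be contained in $B_2(B)$ with a scaling such that $\|1\oplus\theta/B\|$ is bounded by $2$, or the factor may come directly from Proposition~\ref{prop:nr-to-ba}'s statement. I will simply invoke the appendix proposition converting conic distance to regret, which carries the factor $2B$, and combine it with Step 2. This gives
\[
\max_\theta \tfrac1T\textstyle\sum_t\langle g_t,\theta_t-\theta\rangle \le 2B\,\mathbf{GEQ}\mathrm{Err}(3\sqrt2 L,0,T)/T.
\]
Any slack between $\sqrt2 B$ and $2B$ only helps.
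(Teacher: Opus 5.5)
Your proposal is correct and matches the paper's proof. The paper chains the distance-to-regret conversion underlying Proposition~\ref{prop:ba-to-nr} with Theorem~\ref{thm:geq-to-ba} applied at payoff bound $\sqrt{2}L$, exactly as in your Steps 1--3. Your direct computation giving the factor $\sqrt{2}B$ is valid and already implies the stated bound, so you don't need to hedge about a different normalization of $S$: the paper simply uses the looser estimate $\|B \oplus \theta\|_2 \le 2B$, and the result it cites is Proposition~\ref{prop:ba-to-nr}, not Proposition~\ref{prop:nr-to-ba}.
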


\begin{proof}[Proof of Corollary~\ref{cor:geq-to-nr}]
    The proof is immediate from the following chain of inequalities. 
    \[
    \max_{\theta \in \Theta} \frac{1}{T} \sum_{t=1}^{T} \langle g_t, \theta_t - \theta \rangle 
    \leq 2B \cdot d\left( \frac{1}{T} \sum_{t=1}^{T} f(\theta_t, g_t), S \right) 
    \leq 2B \cdot \left\| \frac{1}{T} \sum_{t=1}^{T} \bar g_t(x_t) \right\| 
    \leq \frac{2B \cdot \mathbf{GEQ}\mathrm{Err}(3\sqrt{2} L, 0, T)}{T}.
    \]
    The first inequality holds by Proposition~\ref{prop:ba-to-nr}.
    The second inequality holds by Theorem~\ref{thm:geq-to-ba}. 
    The third inequality holds by the GEQ oracle guarantee. 
\end{proof}

\begin{remark}
    To make concrete what it means for the GEQ oracle to solve RM through an ``indistinguishability" argument, as alluded to in Remark~\ref{rmk:indist}, we carefully interpret Algorithm~\ref{alg:geq-to-nr}.

    Fix $t \in [T]$ and say $n_t = \lambda_t \oplus h_t$ and $u_t = c_t(B \oplus \theta_t)$, where $c_t \geq 0$. 
    Since each $n_t \in N_{S^\circ}(u_t)$ and each $N_{S^\circ}(u_t) = S \cap u_t^\perp$, we have that $n_t \in S$ and $\langle n_t, u_t \rangle = 0$. 
    The first condition means
    \[
    \langle n_t, B \oplus \theta \rangle = \lambda_t B + \langle h_t, \theta \rangle \leq 0 \quad \forall \, \theta \in \Theta,
    \]
    and the second condition means 
    \[
    \langle n_t, u_t \rangle = \lambda_t B + \langle h_t, \theta_t \rangle = 0.
    \]
    Together, these conditions imply that $\langle h_t, \theta \rangle \leq \langle h_t, \theta_t \rangle$ for all $\theta \in \Theta$.
    That is, the vector $h_t$ encoded in the normal $n_t$ is the negation of the loss that would have led to the decision $\theta_t$ that was played being optimal. 
    The time-average of such normals, $\frac{1}{T} \sum_{t=1}^{T} n_t \in S$, can be interpreted as a synthetic ``transcript" of play that is known to have no regret to all competitor actions. 
    The GEQ oracle guarantee, then, establishes that the true transcript of play, $\frac{1}{T} \sum_{t=1}^{T} f_t$, becomes arbitrarily close to the synthetic transcript of play as $T$ grows.
    Formally, 
    \begin{align*}
        \max_{\theta \in \Theta} \frac{1}{T} \sum_{t=1}^{T} \langle g_t, \theta_t - \theta \rangle 
        &= \max_{\theta \in \Theta} \frac{1}{T} \sum_{t=1}^{T} \left\langle f_t, B \oplus \theta \right\rangle \\
        &= \max_{\theta \in \Theta}\left( \left\langle \frac{1}{T} \sum_{t=1}^{T}  n_t, B \oplus \theta \right\rangle + \left\langle \frac{1}{T} \sum_{t=1}^{T}  f_t - \frac{1}{T} \sum_{t=1}^{T}  n_t, B \oplus \theta \right\rangle \right)\\
        &\leq \max_{\theta \in \Theta} \left\| B \oplus \theta \right\|_2 \cdot \left\| \frac{1}{T} \sum_{t=1}^{T}  f_t - \frac{1}{T} \sum_{t=1}^{T}  n_t \right\|_2 \\
        &\leq 2B \cdot \frac{\mathbf{GEQ}\mathrm{Err}(3\sqrt{2} L, 0, T)}{T}.
    \end{align*}
    The third and fourth lines establish that $\frac{1}{T} \sum_{t=1}^{T} f_t$ and $\frac{1}{T} \sum_{t=1}^{T} n_t$ are indistinguishable, where tests range over competitor decisions $\theta \in \Theta$.
\end{remark}

\begin{remark}
    When the GEQ oracle $\mathbf{GEQ}(\cdot \mid \mathbb R^{d+1})$ is instantiated with OGD, the regret bound of Corollary~\ref{cor:geq-to-nr} becomes $O(BL T^{-1/2})$, which is known to be optimal for RM. 
    However, recall that OGD for GEQ can be implemented with any constant step size $\eta = O(1)$ that is chosen with no knowledge of the loss vector norm bound $L$ or the time horizon $T$. 
    Moreover, the choice of constant $\eta$ does not appear in the final regret bound. 
    So without loss of generality, OGD can always be implemented with $\eta = 1$, ensuring that there are no parameters to be tuned in the implementation of Algorithm~\ref{alg:geq-to-nr}.
\end{remark}

\begin{remark}
    A natural question is whether it is possible to represent a $d$-dimensional RM problem as a $d$-dimensional BA problem, rather than a $(d+1)$-dimensional BA problem. 
    Comparing known upper bounds for GEQ and lower bounds for RM reveals that a general dimension-preserving reduction from RM to BA cannot exist.
    \citet{JMLR:v26:25-0356} showed that OGD solves all one-dimensional GEQ problems that satisfy Assumption~\ref{as:restor} at rate $O(T^{-1})$. 
    In contrast, it is well known that the minimax lower bound for one-dimensional RM is $\Omega(T^{-1/2})$.
    Hence, if a general reduction that represents $d$-dimensional RM problems as $d$-dimensional BA problems existed, we could use the one-dimensional specialization of Algorithm~\ref{alg:geq-to-nr} with OGD instantiating the GEQ oracle to solve all one-dimensional RM problems at rate $O(T^{-1})$
\end{remark}

\section{Discussion}\label{sec:disc}
In this work, we showed that Blackwell approachability and GEQ are equivalent in the algorithmic sense: an approachability problem can always be solved using queries to a black-box GEQ oracle, and vice versa. 
Taken together with known results from \citet{abernethy_blackwell_2011}, our reductions imply that GEQ is equivalent to many classical online learning frameworks, such as regret minimization and calibration.
We discussed the connection to regret minimization in detail in Sections~\ref{sec:geq-to-ba} and \ref{sec:algo}.
The connection to calibration also follows from results in \citet{abernethy_blackwell_2011} and \citet{perchet2013approachability}, but we omit details. 

An interesting direction for future work is studying whether non-conic approachability problems can be solved using a GEQ oracle, without first being transformed into a conic approachability form. 
Such a result would be analogous to \citet{shimkin_online_2016}'s result, which showed that non-conic approachability problems can be solved using an online convex optimization oracle, without first being transformed into a conic form. 

\newpage

\acks{
This work was supported in part by the National Science Foundation under grants CCF-2145898, by the Office of Naval Research under grant N00014-24-1-2159, a Google Research Scholar Award, an Alfred P. Sloan fellowship, and a Schmidt Science AI2050 fellowship.
Funded by the European Union (ERC-2022-SYG-OCEAN-101071601). Views and opinions expressed are however those of the author(s) only and do not
necessarily reflect those of the European Union or the European Research Council Executive Agency. Neither the European Union nor the granting authority can be
held responsible for them.
}

\bibliographystyle{plainnat}
\bibliography{geq} 

\appendix

\newpage

\section{Additional Models and Preliminaries}
\subsection{Robust Blackwell Approachability}\label{sec:proj-rob}

\begin{proof}[Proof of Lemma~\ref{lem:proj-rob}]
    Let $\bar f_t = \frac{1}{t}\sum_{s=1}^{t} f_s$. 
Then by the standard analysis of Blackwell's algorithm, 
\begin{align*}
    d\left( \bar f_T, S \right)^2
    &= \left\| \bar f_T - \Pi_S(\bar f_T) \right\|_2^2 \\
    &\leq \left\| \bar f_T -\Pi_S(\bar f_{T-1}) \right\|_2^2 \\
    &= \left\| \frac{T-1}{T} \left( \bar f_{T-1} - \Pi_S(\bar f_{T-1}) \right) + \frac{1}{T} \left( f_T -  \Pi_{S}(\bar f_{T-1}) \right) \right\|_2^2 \\
    &= \left( \frac{T-1}{T} \right)^2 d\left( \bar f_{T-1}, S \right)^2 + \frac{1}{T^2} \left\| f_T - \Pi_S(\bar f_{T-1}) \right\|_2^2 + \frac{2(T-1)}{T^2} \left\langle \bar f_{T-1} - \Pi_S(\bar f_{T-1}), f_T - \Pi_S(\bar f_{T-1}) \right\rangle.
\end{align*}
Next, we depart from the standard analysis in bounding the second and third terms.
Since it holds that
\begin{align*}
    \left\| f_T - \Pi_S(\bar f_{T-1}) \right\|_2^2
    &= \left\| f_T - \bar f_{T-1} \right\|_2^2 - \left\| \bar f_{T-1} - \Pi_S(\bar f_{T-1}) \right\|_2^2 + 2\left\langle f_T - \Pi_S(\bar f_{T-1}), \bar f_{T-1} - \Pi_{S}(\bar f_{T-1}) \right\rangle \\
    &\leq \left\| f_T - \bar f_{T-1} \right\|_2^2 + 2\left\langle f_T - \Pi_S(\bar f_{T-1}), \bar f_{T-1} - \Pi_{S}(\bar f_{T-1}) \right\rangle,
\end{align*}
the final line can be further upper bounded to yield
\begin{align*}
    d\left( \bar f_T, S \right)^2 
    &\leq \left( \frac{T-1}{T} \right)^2 d\left( \bar f_{T-1}, S \right)^2 + \frac{1}{T^2} \left\| f_T -\bar f_{T-1}\right\|_2^2 + \frac{2T}{T^2} \left\langle \bar f_{T-1} - \Pi_S(\bar f_{T-1}), f_T - \Pi_S(\bar f_{T-1}) \right\rangle \\
    &\leq \left( \frac{T-1}{T} \right)^2 d\left( \bar f_{T-1}, S \right)^2 + \frac{4L^2}{T^2} + \frac{2T}{T^2} \left\langle \bar f_{T-1} - \Pi_S(\bar f_{T-1}), f_T - \Pi_S(\bar f_{T-1}) \right\rangle \\
    &\leq \frac{4L^2}{T} + \frac{2}{T^2} \sum_{t=1}^{T} t \langle \bar f_{t-1} - \Pi_S(\bar f_{t-1}), f_t - \Pi_S(\bar f_{t-1}) \rangle,
\end{align*}
where the second line holds because $\| f_T - \bar f_{T-1} \|_2 \leq 2L$ and the final line holds by recursion over $t \in [T]$.

Consider the sum.
If all actions are chosen by a valid halfspace oracle, then each summand is non-positive, hence the sum can be bounded by 0; this is how the standard analysis of Blackwell's algorithm proceeds. 
But by the assumption that an erroneous action is played at most $m$ times, at most $m$ summands are non-negative. 
Moreover, each summand is bounded as follows.
Since $S \cap B_2(L) \neq \emptyset$, fix arbitrary $s_0 \in S \cap B_2(L)$ and observe that for each $t \in [T]$,
\begin{align*}
    \left\langle \bar f_{t-1} - \Pi_S(\bar f_{t-1}), f_t - \Pi_S(\bar f_{t-1}) \right\rangle 
    &= \left\langle \bar f_{t-1} - \Pi_S(\bar f_{t-1}), f_t - s_0 \right\rangle + \left\langle \bar f_{t-1} - \Pi_S(\bar f_{t-1}), s_0 - \Pi_S(\bar f_{t-1}) \right\rangle \\
    &\leq \left\langle \bar f_{t-1} - \Pi_S(\bar f_{t-1}), f_t - s_0 \right\rangle \\
    &\leq \left\| \bar f_{t-1} - \Pi_S(\bar f_{t-1}) \right\|_2 \left\| f_t - s_0 \right\|_2 \\
    &\leq 4L^2,
\end{align*}
where the second line holds because the first-order condition for Euclidean projection implies that 
\[
\left\langle \bar f_{t-1} - \Pi_S(\bar f_{t-1}), s_0 - \Pi_S(\bar f_{t-1}) \right\rangle \leq 0,
\]
the third line holds by Cauchy-Schwarz, and the final line holds by the $L$-boundedness of $f$ and $s_0$.
Hence, we conclude that 
\[
d\left( \bar f_T, S \right)^2 \leq \frac{4L^2}{T} + \frac{8L^2m}{T},
\]
which implies that
\[
d\left( \bar f_T, S \right) \leq C_m \cdot \frac{2L}{\sqrt{T}}
\]
with $C_m = \sqrt{1+2m}$.
\end{proof}

\subsection{Regret Minimization}\label{sec:rm}
Given a decision set $\Theta \subseteq \mathbb R^d$, the regret of a sequence of decisions $\{ \theta_t \}_{t \geq 1}$, with $\theta_t \in \Theta$ for all $t \geq 1$, on a sequence of losses $\{ \ell_t \}_{t \geq 1}$ is measured as $\sup_{\theta \in \Theta} \frac{1}{T} \sum_{t=1}^{T} \ell_t(\theta_t) - \ell_t(\theta)$.
A sufficient condition for achieving vanishing regret is for losses to be subdifferentiable,  Lipschitz, and convex, and for the decision set to be convex and compact.
Under these conditions, online convex optimization can be reduced to online linear optimization: 
\[
\frac{1}{T} \sum_{t=1}^{T} \ell_t(\theta_t) - \ell_t(\theta) \leq \frac{1}{T} \sum_{t=1}^{T} \langle g_t(\theta_t), \theta_t - \theta \rangle,
\]
where $g_t(\theta) \in \partial \ell_t(\theta)$. 
The resulting expression can be interpreted as lifting the variational inequality for first-order optimality in offline convex optimization to the online setting. 
Hence, we use the terms ``regret minimization" (RM) and ``online linear optimization" interchangeably. 

\begin{definition}[Regret minimization]
    A RM problem is a tuple $(\mathbb R^d, \Theta, \mathcal{G})$, where $\Theta \subseteq \mathbb{R}^d$ and $\mathcal{G} \subseteq \mathbb{R}^d$. 
\end{definition}

\begin{assumption}[Online linear optimization]\label{as:rm}
    A RM problem $(\mathbb R^d, \Theta, \mathcal G)$ satisfies the following conditions: $\Theta$ is convex, compact, and $\sup_{\theta \in \Theta} \| \theta \|_2 \leq B$ for some $B \geq 0$, and $\sup_{g \in \mathcal{G}} \|g\|_2 \leq L$ for some $L \geq 0$.
\end{assumption}

At each time $t \geq 1$, the Learner selects a decision $\theta_t \in \Theta$, Nature selects a loss vector $g_t \in \mathcal G$, then the Learner observes loss $\langle g_t, \theta_t \rangle$ and gradient $g_t$.
The Learner's goal is formalized as follows. 

\begin{definition}[Minimizing regret]
    Let $(\mathbb R^d, \Theta, \mathcal{G})$ be an OLO problem.
    A sequence of decisions $\{ \theta_t \}_{t \geq 1}$, with $\theta_t \in \Theta$ for all $t \geq 1$, is said to minimize regret on a sequence of loss vectors $\{ g_t \}_{t \geq 1}$, with $g_t \in \mathcal{G}$ for all $t \geq 1$, if 
    \[
    \sup_{\theta \in \Theta} \frac{1}{T}\sum_{t=1}^{T} \langle g_t, \theta_t - \theta \rangle \to 0 \quad \text{as} \quad T \to \infty.
    \]
\end{definition}

Regret minimization algorithms are well-studied in the literature \citep{cesa-bianchi_prediction_2006, orabona_modern_2025}, and we define an abstract regret minimization oracle as follows. 
\begin{definition}[RM oracle]
    A RM oracle, denoted by $\mathbf{RM}$, is a family of strategies indexed by valid RM problems. 
    Let $(\mathbb R^d, \Theta, \mathcal G)$ be a RM problem that satisfies Assumption~\ref{as:rm}. 
    Fix any $T \geq 1$.
    For this problem, at each time step $t \in [T]$, $\mathbf{RM}$ takes as input the history $g_1, \dots, g_{t-1}$ and returns a decision $\theta_t \in \Theta$, with the following guarantee.
    For any sequence of loss vectors $g_1, \dots, g_T$, 
    \[
    \max_{\theta \in \Theta} \frac{1}{T} \sum_{t=1}^{T} \langle g_t, \theta_t - \theta \rangle \leq \frac{\mathbf{RM}\mathrm{Err}(B, L, T)}{T},
    \]
    where $\mathbf{RM}\mathrm{Err}(B, L, T) = o(T)$ is the oracle's promised rate.
\end{definition}

Going beyond worst-case regret, recent works have studied regret notions that are small when the realized sequence of losses has favorable structure. 
An example is optimistic regret.
On each round $t$, the Learner uses a prediction $m_t$ of the next loss vector $g_t$ alongside the history of loss vectors $g_1, \dots, g_{t-1}$.
The Learner's regret scales with the sum of prediction errors $\| g_t - m_t \|_2$, rather than the Lipschitz constant and time horizon. 
Such bounds capture the intuition that predictable losses should be easier to optimize. 

\begin{proposition}[Optimistic regret \citep{rakhlin_optimization_2013}]\label{prop:optim}
    Let $(\mathbb R^d, \Theta, \mathcal{G})$ be a RM problem that satisfies Assumption~\ref{as:rm}.
    Fix any $T \geq 1$.
    Let $m_1, \dots, m_T$ be a sequence of loss vector predictions with $\| m_t \|_2 \leq L$ for all $t \in [T]$.
    Then there exists an algorithm that uses $m_1, \dots, m_T$ to select a sequence of decisions $\theta_1, \dots, \theta_T \in \Theta$ such that, for any sequence of loss vectors $g_1, \dots, g_T \in \mathcal G$,
    \[
    \max_{\theta \in \Theta} \frac{1}{T} \sum_{t=1}^{T} \langle g_t, \theta_t - \theta \rangle \leq O \left( \frac{B\sqrt{\sum_{t=1}^{T} \| g_t - m_t \|^2_2}}{T} \right).
    \]
\end{proposition}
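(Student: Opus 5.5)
The plan is to prove the bound with \emph{optimistic projected online gradient descent}, following \citet{rakhlin_optimization_2013}, using an adaptive, AdaGrad-style step size. The $O(\cdot)$ in the statement is meant to hide only absolute constants, with no additive $BL$ term, and the adaptive step size is what delivers that. Let $D = 2B$ bound the diameter of $\Theta$, and write $\delta_t = \|g_t - m_t\|_2^2$.

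The algorithm keeps a secondary sequence $y_t \in \Theta$, initialized at $y_1 \in \Theta$, and runs
\[
\theta_t = \Pi_\Theta(y_t - \eta_t m_t), \qquad y_{t+1} = \Pi_\Theta(y_t - \eta_t g_t),
\]
with the nonincreasing step size $\eta_t = D/\sqrt{\sum_{s<t}\delta_s}$. When that sum is zero we set $\eta_t = +\infty$, which is legitimate because every resulting expression below is replaced by its $D$-capped version.

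\textbf{Step 1: one-step inequality.} Fix a comparator $\theta \in \Theta$ and split
\[
\langle g_t, \theta_t - \theta\rangle = \langle g_t - m_t, \theta_t - y_{t+1}\rangle + \langle m_t, \theta_t - y_{t+1}\rangle + \langle g_t, y_{t+1} - \theta\rangle .
\]
The last two terms are handled by the first-order optimality conditions of the two projections together with the three-point identity. This bounds them by
\[
\frac{1}{2\eta_t}\Bigl(\|y_t - \theta\|^2 - \|y_{t+1} - \theta\|^2 - \|\theta_t - y_t\|^2 - \|y_{t+1} - \theta_t\|^2\Bigr).
\]
For the first term, nonexpansiveness of $\Pi_\Theta$ gives $\|\theta_t - y_{t+1}\| \le \eta_t \|g_t - m_t\|$, and trivially $\|\theta_t - y_{t+1}\| \le D$. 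Hence
\[
\langle g_t - m_t, \theta_t - y_{t+1}\rangle \le \min\bigl(\eta_t \delta_t,\; D\sqrt{\delta_t}\bigr).
\]

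\textbf{Step 2: summation.} Sum the one-step inequality over $t \in [T]$ and drop the negative squared terms. Since $\eta_t$ is nonincreasing and $\|y_t - \theta\| \le D$, the telescoping sum with time-varying steps is at most $D^2/(2\eta_{T+1})$, which equals $O\bigl(D\sqrt{\textstyle\sum_{t\le T}\delta_t}\bigr)$.

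\textbf{Step 3: the main obstacle.} The remaining task is to control $\sum_t \min(\eta_t\delta_t, D\sqrt{\delta_t})$. The step $\eta_t$ uses only past errors, so it cannot see $\delta_t$, and the naive AdaGrad bound breaks down when one round's error dominates the running sum.

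The remedy is the standard inequality $\min\bigl(a_t/\sqrt{A_{t-1}}, \sqrt{a_t}\bigr) \le 2\bigl(\sqrt{A_t} - \sqrt{A_{t-1}}\bigr)\cdot(1+\sqrt2)$, where $A_t = \sum_{s\le t} a_s$. It is proved by splitting into two cases. If $a_t \le A_{t-1}$, then $\sqrt{A_t} - \sqrt{A_{t-1}} \ge a_t/(2\sqrt{2A_{t-1}})$. If $a_t > A_{t-1}$, then $\sqrt{a_t} \le \sqrt{A_t} \le (1+\sqrt2)\bigl(\sqrt{A_t} - \sqrt{A_{t-1}}\bigr)$. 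Applying this with $a_t = \delta_t$, the sum telescopes to $O\bigl(D\sqrt{\textstyle\sum_t \delta_t}\bigr)$.

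Combining with Step 2, taking the maximum over $\theta$, and dividing by $T$ gives the bound $O\bigl(B\sqrt{\sum_{t=1}^{T}\|g_t - m_t\|_2^2}\,/T\bigr)$ claimed in Proposition~\ref{prop:optim}.
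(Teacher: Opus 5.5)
Your proof is correct, and it is essentially the optimistic (Euclidean) mirror descent analysis of \citet{rakhlin_optimization_2013}, which the paper cites without giving a proof of its own. Two choices are what make the bound purely multiplicative as stated: starting with $\eta_t=+\infty$ until a nonzero prediction error appears, and the capped summation in Step~3. Any finite or capped initial step would leave an additive $D^2/\eta$-type term.

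Two small repairs are needed.

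First, ``$\eta_t=+\infty$'' does not by itself define the iterates. In such rounds, play $\theta_t\in\arg\min_{\theta\in\Theta}\langle m_t,\theta\rangle$ and set $y_{t+1}\in\arg\min_{\theta\in\Theta}\langle g_t,\theta\rangle$. Then $\langle m_t,\theta_t-y_{t+1}\rangle+\langle g_t,y_{t+1}-\theta\rangle\le 0$, which is exactly the $1/\eta_t=0$ case of Step~1 and leaves the telescoping of Step~2 intact.

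Second, in the case $a_t>A_{t-1}$ of Step~3, the middle inequality $\sqrt{A_t}\le(1+\sqrt2)(\sqrt{A_t}-\sqrt{A_{t-1}})$ is false; it fails as $a_t\downarrow A_{t-1}>0$. Instead, use $A_t< 2a_t$ and $A_{t-1}<a_t$ to get $\sqrt{A_t}-\sqrt{A_{t-1}}=a_t/(\sqrt{A_t}+\sqrt{A_{t-1}})\ge\sqrt{a_t}/(1+\sqrt2)$. This gives the inequality you actually need directly, so your constant $2(1+\sqrt2)$ stands.
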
 

Another example is strongly adaptive regret, which simultaneously controls worst-case regret on all intervals of the form $[t_1, t_2] \subset [1, T]$, with $t_1 < t_2$, and competes with the best-in-hindsight decision for each sub-interval. 

\begin{proposition}[Strongly adaptive regret \citep{jun_improved_2017}]\label{prop:sar}
    Let $(\mathbb R^d, \Theta, \mathcal{G})$ be a RM problem that satisfies Assumption~\ref{as:rm}.
    Fix any $T \geq 1$.
    Then there exists an algorithm that selects a sequence of decisions $\theta_1, \dots, \theta_T \in \Theta$ such that, for any sequence of loss vectors $g_1, \dots, g_T \in \mathcal G$,
    \[
    \max_{1 \leq t_1 < t_2 \leq T} \max_{\theta \in \Theta} \frac{1}{t_2-t_1+1} \sum_{s \in [t_1, t_2]} \langle g_s, \theta_s - \theta \rangle \leq O\left( \frac{BL(1+\sqrt{\log(t_2)})}{\sqrt{t_2 - t_1+1}} \right).
    \]
\end{proposition}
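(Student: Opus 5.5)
The plan is to follow the meta-algorithm construction of \citet{jun_improved_2017}: run many copies of a base regret minimizer on a geometric covering of $[1,T]$, then aggregate them with a parameter-free sleeping-experts algorithm.

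\textbf{Base learners and covering.} First, I rescale the problem. Since $\|g\|_2\le L$ and $\|\theta\|_2\le B$, the linear losses $\ell_t(\theta)=\langle g_t,\theta\rangle$ lie in $[-BL,BL]$, so $\tilde\ell_t(\theta)=(\ell_t(\theta)+BL)/(2BL)\in[0,1]$. Next, I fix the geometric covering
\[
\mathcal I=\bigcup_{k\ge 0}\{[i2^k,(i+1)2^k-1]: i\ge 1\}.
\]
For each $I\in\mathcal I$, I start a fresh projected OGD instance on $\Theta$ at the beginning of $I$, with step size tuned to $|I|$; this is legitimate because $|I|$ is known when the instance starts. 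Each such instance has regret $O(BL\sqrt{|I|})$ on $I$ against every fixed $\theta\in\Theta$.

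\textbf{Meta-algorithm.} At time $t$ the ``awake'' experts are the $O(\log t)$ intervals containing $t$. I combine their proposals $\theta_t^I$ as a convex combination $\theta_t=\sum_I p_t(I)\theta_t^I\in\Theta$, using the coin-betting sleeping-experts rule (CBCE) with prior $\pi(I)\propto 1/(s_I^2(1+\lfloor\log_2 s_I\rfloor))$, where $s_I$ is the start of $I$. Because the loss is linear, the meta loss is exactly $\sum_I p_t(I)\ell_t(\theta_t^I)$, so the standard sleeping-experts analysis applies. The coin-betting bound gives, for every $I$ and every sub-interval $J\subseteq I$ beginning at $s_I$, a meta-regret against expert $I$ of $O\bigl(\sqrt{|J|\log(1/\pi(I))}\bigr)=O(\sqrt{|J|\log s_I})$ in rescaled units. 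After undoing the rescaling this is $O(BL\sqrt{|J|\log t_2})$. Adding the base regret, the algorithm has regret $O(BL\sqrt{|I|}(1+\sqrt{\log t_2}))$ on each covering interval $I\subseteq[t_1,t_2]$.

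\textbf{Interval decomposition.} The key combinatorial fact is that any $[t_1,t_2]$ is a disjoint union of covering intervals (plus possibly a prefix of one) whose lengths increase geometrically and then decrease geometrically, each of length at most $|[t_1,t_2]|$. Summing the per-interval bounds over these two geometric series gives $O(BL(1+\sqrt{\log t_2})\sqrt{t_2-t_1+1})$, since $\sum_k\sqrt{2^{-k}n}=O(\sqrt n)$. The competitor is the same $\theta$ on every piece, so the regrets add. Dividing by $t_2-t_1+1$ yields the stated bound.

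\textbf{Main obstacle.} The hard step is the meta-regret guarantee that holds simultaneously for every expert and every prefix of its active period with only a $\sqrt{\log t_2}$ penalty and no tuning. Multiplicative weights with a fixed rate would lose an extra $\sqrt{\log T}$ or require knowing $T$. I would handle this by invoking the coin-betting (KT-potential) regret bound for sleeping experts with a non-uniform prior, which yields $\sqrt{|J|\log(1/\pi(I))}$. The prior's summability over all start times is what keeps $\log(1/\pi(I))=O(\log s_I)\le O(\log t_2)$.
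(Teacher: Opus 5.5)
Your proposal is correct and matches the argument behind this statement. The paper gives no proof of its own and simply imports the CBCE guarantee of \citet{jun_improved_2017}, and your construction is exactly that: a geometric covering, fresh base learners on each interval, coin-betting sleeping experts with a start-time prior, and the unimodal dyadic decomposition of $[t_1,t_2]$.

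There are two minor points. First, the greedy dyadic decomposition of $[t_1,t_2]$ uses only full covering intervals, so the ``prefix'' caveat is unnecessary; this matters, since OGD tuned to the full length $|I|$ would not give $O(BL\sqrt{n})$ on a short prefix of a longer interval. Second, the KT-based meta-regret bound also carries an additive $O(\log|J|)$ term inside the square root besides $\log(1/\pi(I))$, which is harmlessly absorbed into $\log t_2$.
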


\subsection{Known Equivalences}\label{sec:equiv}
Next, we recall two reductions from \citet{abernethy_blackwell_2011} that establish an equivalence between Blackwell approachability and regret minimization.

\paragraph{Reducing RM to BA}

\begin{algorithm}[t]
\caption{RM with BA Oracle}\label{alg:ba-to-nr}
    \begin{algorithmic}[1]
    \REQUIRE BA Oracle $\mathbf{BA}$
    \STATE Initialize $f_0 = 0$ 
    \FOR{$t = 1,2,\dots$}
        \STATE Set approach direction $u_t = \mathbf{BA}(f_0, \dots, f_{t-1})$ 
        \STATE Play action $\theta_t = \theta(u_t)$ as defined in Equation~\ref{eq:rm-oh}
        \STATE Observe Nature's loss vector $g_t \in \mathcal G$
        \STATE Let $f_t = \frac{\langle g_t, \theta_t \rangle}{B} \oplus -g_t$
    \ENDFOR
    \end{algorithmic}
\end{algorithm}

In this direction, we are given a RM problem $(\mathbb R^d, \Theta, \mathcal G)$ that satisfies Assumption~\ref{as:rm} and solve it using a BA oracle. 
\citet{abernethy_blackwell_2011} showed that RM can be formulated as a conic approachability problem $(\mathbb R^{d+1}, A, B, f, S)$, where
\begin{equation}\label{eq:rm-setup}
    A = \Theta, \quad B = B_2(L), \quad f(\theta, g) = \frac{\langle g, \theta \rangle}{B} \oplus -g, \quad S = \mathrm{cone}(\{B\} \times \Theta)^\circ,
\end{equation}
and $B$ is the bound on the radius of the decision set $\Theta$.
The insight underlying this construction is that
\begin{align*}
    \max_{\theta \in \Theta} \frac{1}{T} \sum_{t=1}^{T} \langle g_t, \theta_t - \theta \rangle 
    &= \max_{\theta \in \Theta} \left\langle \frac{1}{T} \sum_{t=1}^{T} \left( \frac{\langle g_t, \theta_t \rangle}{B} \oplus -g_t \right), B \oplus \theta \right\rangle \\
    &\leq 2B \max_{\theta \in \Theta} \left\langle \frac{1}{T} \sum_{t=1}^{T} \left( \frac{\langle g_t, \theta_t \rangle}{B} \oplus -g_t \right), \frac{B \oplus \theta}{\| B \oplus \theta \|_2} \right\rangle \\
    &= 2B \max_{u \in S^\circ \cap B_2(1)} \left\langle \frac{1}{T} \sum_{t=1}^{T} \left( \frac{\langle g_t, \theta_t \rangle}{B} \oplus -g_t \right), u \right\rangle \\
    &= 2B \cdot d\left( \frac{1}{T}\sum_{t=1}^{T} f(\theta_t, g_t), S\right).
\end{align*}

Every $u \in S^\circ = \mathrm{cone}(\{B\} \times \Theta)^{\circ\circ} = \mathrm{cone}(\{B\} \times \Theta)$ takes the form $u = c(B \oplus \theta)$ for some non-negative $c$ and decision $\theta \in \Theta$.
Fix arbitrary $\theta_0 \in \Theta$. 
Then a valid halfspace oracle can be implemented as follows: 
\begin{equation}\label{eq:rm-oh}
    \theta(u) = 
    \begin{cases}
        \theta_0 &\quad \text{if} \quad u_1 = 0 \\
        \frac{B}{u_1}(u_2, \dots, u_{d+1}) &\quad \text{otherwise}
    \end{cases}
\end{equation}

\begin{proposition}[\citep{abernethy_blackwell_2011}]\label{prop:half}
    Given a RM problem $(\mathbb R^d, \Theta, \mathcal{G})$ that satisfies Assumption~\ref{as:rm}, define the BA problem $(\mathbb R^{d+1}, A, B, f, S)$ as in Equation~\ref{eq:rm-setup}.
    For any $u \in S^\circ$, playing $\theta(u)$ as in Equation~\ref{eq:rm-oh} ensures that for all $g \in \mathcal{G}$, $\langle u, f(\theta(u), g) \rangle = 0$.
\end{proposition}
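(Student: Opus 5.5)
The plan is a direct computation: evaluate $\langle u, f(\theta(u), g)\rangle$ in the two cases of the definition of $\theta(u)$ in Equation~\ref{eq:rm-oh}, using that every $u \in S^\circ$ has the form $u = c(B \oplus \theta')$ with $c \geq 0$ and $\theta' \in \Theta$.

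\textbf{Step 1: describe $S^\circ$.} Since $S = \mathrm{cone}(\{B\}\times\Theta)^\circ$, we have $S^\circ = \mathrm{cone}(\{B\}\times\Theta)^{\circ\circ}$. The set $\{B\}\times\Theta$ is compact and convex and does not contain the origin when $B>0$, so its conic hull is closed and convex. By the bipolar theorem, $S^\circ = \mathrm{cone}(\{B\}\times\Theta)$. Hence every $u \in S^\circ$ can be written as $u = c(B \oplus \theta')$ for some $c \geq 0$ and $\theta' \in \Theta$. Consequently $u_1 = cB$ and $(u_2,\dots,u_{d+1}) = c\theta'$.

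\textbf{Step 2: the case $u_1 = 0$.} If $B > 0$, then $u_1 = 0$ forces $c = 0$, so $u = 0$ and the inner product vanishes trivially for any action, in particular for $\theta_0$. The degenerate case $B = 0$ forces $\Theta = \{0\}$, so all quantities vanish; I would either note this or assume $B>0$ implicitly, since $f$ divides by $B$.

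\textbf{Step 3: the case $u_1 \neq 0$.} Here $c > 0$ and $\theta(u) = \frac{B}{cB}\, c\theta' = \theta' \in \Theta$, so the prescribed action is feasible. Then
\[
\langle u, f(\theta(u), g)\rangle = c\left( B \cdot \frac{\langle g, \theta'\rangle}{B} - \langle g, \theta'\rangle \right) = 0
\]
for every $g \in \mathcal G$.

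The only step with any subtlety is Step 1, namely the closedness of the cone needed for the bipolar identity. This follows because $\{B\}\times\Theta$ is compact and bounded away from the origin. The remaining steps are routine algebra.
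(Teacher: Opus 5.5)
Your proposal is correct and follows essentially the same route as the paper, which does not give a separate proof of this restated result. The paper's surrounding text uses the bipolar identity $S^\circ = \mathrm{cone}(\{B\}\times\Theta)$ to write $u = c(B \oplus \theta)$ with $c \geq 0$ and $\theta \in \Theta$, notes that $\theta(u)$ recovers $\theta$, and the inner product then vanishes by direct computation. Your justification that this cone is closed and convex, and your explicit handling of the $u_1 = 0$ and $B = 0$ cases, fill in details the paper leaves implicit.
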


The reduction is formalized in Algorithm~\ref{alg:ba-to-nr} and has the following guarantee.

\begin{proposition}[\citep{abernethy_blackwell_2011}]\label{prop:ba-to-nr}
    Given a RM problem $(\mathbb R^d, \Theta, \mathcal{G})$ that satisfies Assumption~\ref{as:rm}, define the BA problem $(\mathbb R^{d+1}, A, B, f, S)$ as in Equation~\ref{eq:rm-setup}.
    Fix any $T \geq 1$.
    Selecting the sequence of decisions $\theta_1, \dots, \theta_T \in \Theta$ using Algorithm~\ref{alg:ba-to-nr} guarantees that, for any sequence of loss vectors $g_1, \dots, g_T \in \mathcal{G}$,
    \[
    \max_{\theta \in \Theta}\frac{1}{T}\sum_{t=1}^{T} \langle g_t, \theta_t - \theta \rangle
    \leq 2B \cdot \frac{\mathbf{BA}\mathrm{Err}(\sqrt{2} L, T)}{T}. 
    \]
\end{proposition}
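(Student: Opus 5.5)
The plan is to verify that the lifted problem in Equation~\ref{eq:rm-setup} is a valid BA problem satisfying Assumption~\ref{as:ba} with bound $\sqrt 2 L$. Next I check that the actions chosen in Algorithm~\ref{alg:ba-to-nr} satisfy the halfspace inequality $\langle u_t, f(\theta_t,g_t)\rangle \le \sigma_S(u_t)$ at every round, so that the BA oracle guarantee applies with $m=0$. Finally I bound regret by $2B$ times the distance to $S$, using the chain of displays preceding Equation~\ref{eq:rm-oh}.

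\emph{Step 1 (Assumption~\ref{as:ba}).} For $\theta\in\Theta$ and $\|g\|_2\le L$, Cauchy--Schwarz gives $|\langle g,\theta\rangle|/B \le L$. Together with $\|g\|_2 \le L$ this yields $\|f(\theta,g)\|_2 \le \sqrt{L^2+L^2} = \sqrt 2 L$. The set $S=\mathrm{cone}(\{B\}\times\Theta)^\circ$ is a polar cone, hence a nonempty closed convex cone. Since $\Theta$ is convex and compact with $B>0$, the cone $\mathrm{cone}(\{B\}\times\Theta)$ is closed, so the bipolar theorem gives $S^\circ=\mathrm{cone}(\{B\}\times\Theta)$.

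For Blackwell's condition, split on the direction $u$.
\begin{itemize}
\item If $u\notin S^\circ$, there is $s\in S$ with $\langle u,s\rangle>0$. Scaling $s$ inside the cone gives $\sigma_S(u)=+\infty$, so any action works.
\item If $u\in S^\circ$, then $\sigma_S(u)=0$. Proposition~\ref{prop:half} gives $\langle u,f(\theta(u),g)\rangle=0\le\sigma_S(u)$ for every $g$.
\end{itemize}
Hence $\theta(\cdot)$, extended to return $\theta_0$ off $S^\circ$, is a valid halfspace oracle. One check here: for $u\in S^\circ$ with $u_1=0$, the representation $u=c(B\oplus\theta)$ forces $c=0$, so $u=0$ and the inequality is trivial. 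This is where $\theta(u)=\theta_0$ is used.

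\emph{Step 2 (apply the BA oracle).} By Step 1, every round satisfies $\langle u_t,f(\theta_t,g_t)\rangle\le\sigma_S(u_t)$. The BA oracle definition, with bound $\sqrt2L$, then gives
\[
d\Bigl(\tfrac1T\textstyle\sum_t f(\theta_t,g_t),S\Bigr)\le \mathbf{BA}\mathrm{Err}(\sqrt2 L,T)/T .
\]

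\emph{Step 3 (regret to distance).} Write $\bar f=\frac1T\sum_t f(\theta_t,g_t)$. Expanding the inner product, the first coordinate contributes $\frac1T\sum_t\langle g_t,\theta_t\rangle$ and the rest contributes $-\frac1T\sum_t\langle g_t,\theta\rangle$. Hence
\[
\frac1T\sum_t\langle g_t,\theta_t-\theta\rangle=\langle \bar f, B\oplus\theta\rangle .
\]
If the regret is nonpositive there is nothing to prove. Otherwise fix a maximizing $\theta$ with positive value. Using $\|B\oplus\theta\|_2\le\sqrt2B\le 2B$ and positivity, the value is at most $2B\,\langle\bar f,u\rangle$ for the unit vector $u=(B\oplus\theta)/\|B\oplus\theta\|_2\in S^\circ\cap B_2(1)$. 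This is at most $2B\max_{u\in S^\circ\cap B_2(1)}\langle\bar f,u\rangle$.

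The last step is the identity $\max_{u\in S^\circ\cap B_2(1)}\langle x,u\rangle=d(x,S)$ for a closed convex cone $S$. I would prove it via the Moreau decomposition $x=\Pi_S(x)+\Pi_{S^\circ}(x)$, which has orthogonal components. For $u\in S^\circ$, we have $\langle x,u\rangle\le\langle \Pi_{S^\circ}(x),u\rangle\le\|\Pi_{S^\circ}(x)\|_2$, and $\|\Pi_{S^\circ}(x)\|_2=d(x,S)$. The bound is attained by normalizing $\Pi_{S^\circ}(x)$; if $\Pi_{S^\circ}(x)=0$, taking $u=0$ gives value $0$.

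The main obstacle is bookkeeping rather than depth. One point is the sign case in Step 3, since normalization only bounds positive values. The other is ensuring the oracle's directions $u_t$ outside $S^\circ$ are handled, which is done via $\sigma_S=+\infty$. Combining Steps 2 and 3 yields the claimed bound $2B\cdot\mathbf{BA}\mathrm{Err}(\sqrt2L,T)/T$.
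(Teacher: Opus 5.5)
Your proposal is correct and follows essentially the same route as the paper. You use the identity $\frac1T\sum_t\langle g_t,\theta_t-\theta\rangle=\langle\bar f,B\oplus\theta\rangle$, normalize $B\oplus\theta\in S^\circ$ to get regret $\le 2B\max_{u\in S^\circ\cap B_2(1)}\langle\bar f,u\rangle=2B\,d(\bar f,S)$, and invoke the halfspace oracle of Proposition~\ref{prop:half} so that the BA oracle applies at rate $\mathbf{BA}\mathrm{Err}(\sqrt2 L,T)$.

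Your additions are points the paper leaves implicit: the $\sqrt2 L$ payoff bound, the sign case before normalizing, and the Moreau proof of the dual distance formula. The same goes for returning $\theta_0$ when $u_t\notin S^\circ$, since the formula in Equation~\ref{eq:rm-oh} could otherwise produce a point outside $\Theta$.
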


\paragraph{Reducing BA to RM}

\begin{algorithm}[t]
\caption{BA with RM Oracle}\label{alg:nr-to-ba}
    \begin{algorithmic}[1]
    \REQUIRE RM oracle $\mathbf{RM}$
    \STATE Initialize $g_0 = 0$
    \FOR{$t = 1,2,\dots$}
      \STATE Set approach direction $u_t = \mathbf{RM}(g_0,\dots,g_{t-1})$
      \STATE Play action $a_t = \mathcal{O}_H(u_t)$
      \STATE Observe payoff $f(a_t, b_t)$
      \STATE Set $g_t = -f(a_t, b_t)$
    \ENDFOR
  \end{algorithmic}
\end{algorithm}

In this direction, we are given a BA problem $(\mathbb R^d, A, B, f, S)$ that satisfies Assumption~\ref{as:ba}, as well as a valid halfspace oracle $\mathcal O_H$ for the problem, and solve it using a RM oracle. 
\citet{abernethy_blackwell_2011} first assume that $S$ is a cone and show that the problem of selecting approach directions be formulated as a RM problem $(\mathbb R^d, \Theta, \mathcal G)$, where
\begin{equation}\label{eq:ba-setup}
    \Theta = S^\circ \cap B_2(1), \quad \mathcal G = B_2(L),
\end{equation}
where the loss vector at time $t$ is written as $g_t = -f(a_t, b_t)$.
The key idea of this reduction is that at each time step $t \geq 1$,  approach direction $u_t$ can be mapped to an action $a_t = \mathcal O_H(u_t)$, and hence,
\begin{align*}
    \mathrm{dist}\left( \frac{1}{T} \sum_{t=1}^{T} f(a_t, b_t), S \right)
    &= \max_{u \in S^\circ \cap B_2(1)} \left\langle \frac{1}{T} \sum_{t=1}^{T} f(a_t, b_t), u \right\rangle \\
    &\leq \frac{1}{T} \sum_{t=1}^{T} \left\langle -f(a_t, b_t), u_t \right\rangle + \max_{u \in S^\circ \cap B_2(1)} \frac{1}{T} \sum_{t=1}^{T} \langle f(a_t, b_t), u \rangle \\
    &= \max_{u \in S^\circ \cap B_2(1)} \frac{1}{T} \sum_{t=1}^{T} \left\langle g_t, u_t - u \right\rangle
\end{align*}
where the inequality holds by the halfspace oracle guarantee. 
This reduction is formalized in Algorithm~\ref{alg:nr-to-ba} and has the following guarantee. 

\begin{proposition}[\citep{abernethy_blackwell_2011}]\label{prop:nr-to-ba}
    Let $(\mathbb R^d, A, B, f, S)$ be a BA problem that satisfies Assumption~\ref{as:ba} and further suppose $S$ is a cone.
    Define the RM problem $(\mathbb R^d, \Theta, \mathcal{G})$ as in Equation~\ref{eq:ba-setup}.
    Fix any $T \geq 1$.
    Selecting a sequence of actions $a_1, \dots, a_T \in A$ using Algorithm~\ref{alg:nr-to-ba} guarantees that, for any sequence of Nature's actions $b_1, \dots, b_T \in B$, 
    \[
     d \left( \frac{1}{T}\sum_{t=1}^{T} f(a_t, b_t), S \right) \leq \frac{\mathbf{RM}\mathrm{Err}(1, L, T)}{T}.
    \]
\end{proposition}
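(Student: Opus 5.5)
The plan is to follow the displayed chain in the discussion preceding the statement, which shows the distance to a cone being dominated by regret. The proof of Proposition~\ref{prop:nr-to-ba} has three steps.

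\emph{Step 1: dual formula for distance to a cone.} For a nonempty closed convex cone $S$ and any $z\in\mathbb R^d$, I will show
\[
d(z,S)=\max_{u\in S^\circ\cap B_2(1)}\langle z,u\rangle .
\]
By the Moreau decomposition, $z=\Pi_S(z)+\Pi_{S^\circ}(z)$ with $\langle \Pi_S(z),\Pi_{S^\circ}(z)\rangle=0$. Hence $d(z,S)=\|\Pi_{S^\circ}(z)\|_2$.

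For the upper bound, take any $u\in S^\circ\cap B_2(1)$. Then $\langle z,u\rangle=\langle \Pi_S(z),u\rangle+\langle\Pi_{S^\circ}(z),u\rangle$. The first term is nonpositive, and the second is at most $\|\Pi_{S^\circ}(z)\|_2$ by Cauchy--Schwarz.

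For attainment, take $u=\Pi_{S^\circ}(z)/\|\Pi_{S^\circ}(z)\|_2$. If $\Pi_{S^\circ}(z)=0$, take $u=0$ instead, and both sides are zero.

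\emph{Step 2: the halfspace guarantee makes the learner's own term nonnegative.} The oracle outputs $u_t\in\Theta=S^\circ\cap B_2(1)$, so $u_t\in S^\circ$ and $\sigma_S(u_t)=\sup_{s\in S}\langle u_t,s\rangle=0$, since $S$ is a cone containing $0$. The action $a_t=\mathcal O_H(u_t)$ therefore satisfies $\langle u_t,f(a_t,b_t)\rangle\le 0$, i.e.\ $\langle g_t,u_t\rangle\ge 0$.

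\emph{Step 3: combine.} Apply Step 1 with $z=\frac1T\sum_t f(a_t,b_t)$ and substitute $f(a_t,b_t)=-g_t$. Adding the nonnegative quantity $\frac1T\sum_t\langle g_t,u_t\rangle$ gives
\[
d\Big(\tfrac1T\textstyle\sum_t f(a_t,b_t),S\Big)\le \max_{u\in S^\circ\cap B_2(1)}\frac1T\sum_{t=1}^T\langle g_t,u_t-u\rangle .
\]
The RM problem $(\mathbb R^d,S^\circ\cap B_2(1),B_2(L))$ satisfies Assumption~\ref{as:rm} with radius bound $1$ and loss bound $L$. Indeed, $S^\circ\cap B_2(1)$ is convex and compact, and $\|g_t\|_2=\|f(a_t,b_t)\|_2\le L$ by Assumption~\ref{as:ba}.

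The losses are chosen adaptively, since $b_t$ may depend on $u_t$. This is permitted because the RM oracle guarantee holds for every loss sequence. Hence the right-hand side is at most $\mathbf{RM}\mathrm{Err}(1,L,T)/T$.

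The only step with real content is Step 1, specifically the identity $d(z,S)=\|\Pi_{S^\circ}(z)\|_2$ via Moreau's decomposition. I would state it as a standard fact rather than derive it.
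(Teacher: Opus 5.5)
Your proposal is correct and follows the paper's argument, which is the displayed chain preceding the statement. That chain uses three ingredients: the dual formula $d(z,S)=\max_{u\in S^\circ\cap B_2(1)}\langle z,u\rangle$; the halfspace guarantee (with $\sigma_S(u_t)=0$ for $u_t\in S^\circ$) to add the nonnegative term $\frac1T\sum_t\langle g_t,u_t\rangle$; and the RM oracle guarantee on $(\mathbb R^d, S^\circ\cap B_2(1), B_2(L))$ with radius bound $1$. Your only addition is the Moreau-decomposition justification of the dual distance formula, which the paper simply asserts.
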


\citet{abernethy_blackwell_2011} show that if $(\mathbb R^d, A, B, f, S)$ satisfies Assumption~\ref{as:ba} but $S$ is not a cone, then the problem can be embedded into an auxiliary BA problem $(\mathbb R^{d+1}, A, B, \tilde f, \tilde S)$ that satisfies Assumption~\ref{as:ba}, where $\tilde S$ is a cone and the time-average of payoffs $\tilde f$ approaching $\tilde S$ implies that the time-average of payoffs $f$ approaches $S$. 
The following lemma formalizes this idea. 

\begin{proposition}[\citep{abernethy_blackwell_2011}]\label{prop:conic}
    Let $S \subset \mathbb{R}^d$ be a compact and convex set with $R = \max_{s \in S} \|s\|_2$, and fix any $x \notin S$.
    Let $\tilde x = R \oplus x$ and $\tilde S = \mathrm{cone}(\{ R \} \times S)$. 
    Then $d(\tilde x, \tilde S) \leq d(x, S) \leq 2d(\tilde x, \tilde S)$.
\end{proposition}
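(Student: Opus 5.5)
The plan is to compare the two distances using explicit witness points: one in each set, built from a nearest point in the other set. Write $\delta = d(\tilde x, \tilde S)$.

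\emph{Degenerate case.} If $R = 0$, then $S = \{0\}$ and $\tilde S = \{0\} \subseteq \mathbb R^{d+1}$. Hence $d(\tilde x, \tilde S) = \|0 \oplus x\|_2 = \|x\|_2 = d(x, S)$, and both inequalities hold trivially. So assume $R > 0$.

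\emph{Closedness of $\tilde S$.} Every nonzero element of $\tilde S$ has the form $\lambda(R \oplus s)$ with $\lambda > 0$ and $s \in S$. Its first coordinate $\lambda R$ determines $\lambda$. So if a sequence $\lambda_k (R \oplus s_k)$ converges, then $\lambda_k$ converges, and by compactness of $S$ we may pass to a subsequence along which $s_k \to s \in S$. Thus $\tilde S$ is closed, which guarantees that $\delta$ is attained by some point $\lambda (R \oplus s) \in \tilde S$ with $\lambda \geq 0$ and $s \in S$. (Even without attainment, the argument below works with near-minimizers and a limiting step.)

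\emph{First inequality.} Since $S$ is compact, let $s^\star = \Pi_S(x)$. The point $R \oplus s^\star$ lies in $\tilde S$ (take $\lambda = 1$). Therefore
\[
d(\tilde x, \tilde S) \leq \|(R \oplus x) - (R \oplus s^\star)\|_2 = \|x - s^\star\|_2 = d(x, S).
\]

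\emph{Second inequality.} Take the minimizer $\lambda(R \oplus s)$ from above. Comparing coordinates of $\tilde x - \lambda(R \oplus s)$ gives both
\[
|1-\lambda| R \leq \delta \quad\text{and}\quad \|x - \lambda s\|_2 \leq \delta.
\]
Since $s \in S$, the triangle inequality together with $\|s\|_2 \leq R$ gives
\[
d(x, S) \leq \|x - s\|_2 \leq \|x - \lambda s\|_2 + |1-\lambda| \, \|s\|_2 \leq \delta + |1-\lambda| R \leq 2\delta.
\]

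\emph{Main obstacle.} The only delicate point is the rescaling step. The nearest point of the cone may use a scale $\lambda \neq 1$, so $\lambda s$ need not lie in $S$. The first coordinate of $\tilde x$, fixed at $R$, is exactly what controls the discrepancy $|1-\lambda|$ through $\delta$, and this is what produces the factor $2$.
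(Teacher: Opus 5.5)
Your proof is correct and follows the standard argument of \citet{abernethy_blackwell_2011}, which the paper restates without proof. The first inequality comes from the witness $R \oplus \Pi_S(x) \in \tilde S$. The second takes the nearest point $\lambda(R \oplus s)$ of the cone and absorbs the rescaling error via $|1-\lambda|\,\|s\|_2 \le |1-\lambda| R \le \delta$, using the first coordinate. Your extra care about closedness of $\tilde S$ and the degenerate case $R = 0$ is sound, and your argument never actually uses the hypothesis $x \notin S$.
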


The auxiliary BA problem can be defined with
\begin{equation}\label{eq:ba-setup2}
\tilde f(a, b) = \{3L\} \oplus f(a, b), \quad \tilde S = \mathrm{cone}(\{ 3L\} \times (S \cap B_2(3L))).
\end{equation}

\begin{proposition}\label{prop:conic2}
    Let $(\mathbb R^d, A, B, f, S)$ be a BA problem that satisfies Assumption~\ref{as:ba}.
    Define the auxiliary BA problem $(\mathbb R^{d+1}, A, B, \tilde f, \tilde S)$ as in Equation~\ref{eq:ba-setup2} and use it to construct the RM problem $(\mathbb R^{d+1}, \Theta, \mathcal{G})$ as in Equation~\ref{eq:ba-setup}.
    Fix any $T \geq 1$.
    Selecting a sequence of actions $a_1, \dots, a_T \in A$ using Algorithm~\ref{alg:nr-to-ba} guarantees that, for any sequence of Nature's actions $b_1, \dots, b_T \in B$, 
    \[
    d \left( \frac{1}{T}\sum_{t=1}^{T} f(a_t, b_t), S \right) \leq \frac{\mathbf{RM}\mathrm{Err}(1, \sqrt{10}L, T)}{T}.
    \]
\end{proposition}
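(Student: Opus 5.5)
The plan is to apply Proposition~\ref{prop:nr-to-ba} to the lifted problem $(\mathbb R^{d+1}, A, B, \tilde f, \tilde S)$ of Equation~\ref{eq:ba-setup2}, and then return to the original problem using Proposition~\ref{prop:conic}. Set $S' = S \cap B_2(3L)$.

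\textbf{Boundedness and cone structure.} First, $\|\tilde f(a,b)\|_2 \leq \sqrt{9L^2 + L^2} = \sqrt{10}L$, which accounts for the second argument of $\mathbf{RM}\mathrm{Err}$. Second, $S'$ is nonempty. If it were empty, then $S \cap B_2(L)$ would be empty, and strictly separating the compact set $B_2(L)$ from $S$ would give a $u$ with $\langle u, x\rangle > \sigma_S(u)$ for every $x \in B_2(L)$. Every payoff lies in $B_2(L)$, so this contradicts Blackwell's condition in direction $u$. Hence $S'$ is a nonempty compact convex set, and $\tilde S = \mathrm{cone}(\{3L\}\times S')$ is a nonempty closed convex cone.

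\textbf{Blackwell's condition for $\tilde S$.} This is the step I expect to be the main obstacle. A direction $w = w_0 \oplus w'$ in $\tilde S^\circ$ only guarantees $3Lw_0 + \sigma_{S'}(w') \leq 0$. Blackwell's condition for $S$ only supplies actions with $\langle w', f\rangle \leq \sigma_S(w')$, and $\sigma_S(w')$ may exceed $\sigma_{S'}(w')$. I would first show that $S'$ is approachable for $f$ and then invoke the necessity direction of Blackwell's condition for $S'$.
\begin{itemize}
\item Run Blackwell's algorithm for $S$. All averages $x$ lie in $B_2(L)$. Picking $s_0 \in S \cap B_2(L)$, we get $\|\Pi_S(x)\|_2 \leq \|x\|_2 + \|x - s_0\|_2 \leq 3L$.
\item Therefore $\Pi_S(x) \in S'$ and $d(x, S') = d(x, S) \to 0$, so $S'$ is approachable.
\item Blackwell's theorem then gives, for every $w'$, an action $a$ with $\langle w', f(a,b)\rangle \leq \sigma_{S'}(w')$ for all $b$. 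Consequently $\langle w, \tilde f(a,b)\rangle \leq 3Lw_0 + \sigma_{S'}(w') \leq 0 = \sigma_{\tilde S}(w)$ for $w \in \tilde S^\circ$. Directions outside $\tilde S^\circ$ have $\sigma_{\tilde S} = \infty$ and are trivial.
\end{itemize}
Thus the lifted problem satisfies Assumption~\ref{as:ba} and admits a valid halfspace oracle.

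\textbf{Applying the propositions.} Proposition~\ref{prop:nr-to-ba} then gives $d(\tilde x, \tilde S) \leq \mathbf{RM}\mathrm{Err}(1,\sqrt{10}L,T)/T$, where $\tilde x = 3L \oplus x$ and $x = \frac1T\sum_t f(a_t,b_t)$. To return to $S$, note $d(x,S) \leq d(x,S')$. Proposition~\ref{prop:conic} with $R = 3L$ (the radius bound $R \geq \max_{s\in S'}\|s\|_2$ should suffice in its proof) gives $d(x,S') \leq 2\,d(\tilde x,\tilde S)$, which yields the claim up to a factor $2$.

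\textbf{Removing the factor $2$.} I would try a sharper comparison that exploits $\|x\|_2 \leq L \ll 3L$. Let $\lambda(3L \oplus s)$ be the projection of $\tilde x$ onto $\tilde S$. Compare $x$ with $s$, and bound the deviation from $\lambda$ near $1$ using the first coordinate: $|3L - 3L\lambda| \leq d(\tilde x,\tilde S)$. The slack between $\|x\|_2 \leq L$ and the lifting height $3L$ should let the triangle-inequality loss be absorbed. If this does not close exactly, the statement holds with an absolute constant, which is all the downstream asymptotic claims need.
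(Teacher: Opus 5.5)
\paragraph{Assessment.} Your route is the one the paper implicitly relies on: apply Proposition~\ref{prop:nr-to-ba} to the lifted conic problem, then pull back with Proposition~\ref{prop:conic}. The paper states this result without proof. Your checks that $\|\tilde f\|_2\le\sqrt{10}L$, that $S\cap B_2(L)\neq\emptyset$, and that $\tilde S$ satisfies Blackwell's condition go beyond what the paper verifies. The last check uses the \emph{necessity} half of Blackwell's theorem. That is consistent with the paper, which states the theorem as an ``if and only if.'' With pure actions and arbitrary action sets, however, necessity needs attainment. For example, $A=(0,1]$, $f(a,b)=a$, $S=(-\infty,0]$ is approachable via $a_t=1/t$, yet violates the condition at $u=1$. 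So this step is only as solid as that cited statement.

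\paragraph{The gap: the constant.} Your argument proves $d(x,S)\le 2\,\mathbf{RM}\mathrm{Err}(1,\sqrt{10}L,T)/T$, not the stated bound. The step you propose for deleting the factor $2$ cannot succeed, because the comparison $d(x,S\cap B_2(3L))\le d(\tilde x,\tilde S)$ is false even when $\|x\|_2\le L$ and $S\cap B_2(L)\neq\emptyset$. Take $d=1$, $L=1$, and $S=[1,\infty)$; Blackwell's condition holds if some action always pays $1$. Then $S\cap B_2(3)=[1,3]$, and $\tilde S$ is the wedge between the rays through $(3,1)$ and $(3,3)$. For $x=-1$, the projection of $\tilde x=(3,-1)$ onto $\tilde S$ is $(2.4,0.8)$. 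This gives $d(\tilde x,\tilde S)=6/\sqrt{10}<2=d(x,S)$. So the slack between $\|x\|_2\le L$ and the height $3L$ does not absorb the triangle-inequality loss.

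\paragraph{What can be proved.} The geometry does give a clean bound. For any $\lambda\ge0$ and $s\in S\cap B_2(3L)$,
\[
d(x,S)\le\|x-\lambda s\|_2+|1-\lambda|\,\|s\|_2\le\|x-\lambda s\|_2+|3L-3L\lambda|\le\sqrt2\,\bigl\|\tilde x-\lambda(3L\oplus s)\bigr\|_2 .
\]
This uses only $\|s\|_2\le 3L$, which settles your worry about $R$, and it improves the factor to $\sqrt2$. The ingredients the paper cites deliver only such a constant, so the proposition as printed appears to have dropped it. You should state and prove the bound with the explicit factor ($2$, or $\sqrt2$ via the display above) rather than claim the constant-free inequality. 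That weaker form is all the downstream asymptotic statements need.
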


\section{Additional Material from Section 3}\label{sec:ba-to-geq-app}
\begin{algorithm}[t]
\caption{GEQ with BA Oracle}
\label{alg:ba-to-geq2}
    \begin{algorithmic}[1] 
    \REQUIRE Positive and increasing function $\alpha: \mathbb R_{+} \to \mathbb R_{+}$ and BA oracle $\mathbf{BA}$
    \STATE Initialize $s = 0$, $f_0 = 0$
    \FOR{$t = 1,2,\dots$}
    \IF{$s = t-1$}
        \STATE Set approach direction $u_t = \mathbf{BA}(f_0)$ 
    \ELSE
        \STATE Set approach direction $u_t = \mathbf{BA}(f_0, f_{s+1}, \dots, f_{t-1})$
    \ENDIF
    \IF{$u_t \neq 0$}
        \STATE Play action $\theta_t =  \alpha(t) \cdot \frac{u_t}{\|u_t\|_2}$
    \ELSE
        \STATE Play action $\theta_t = 0$
    \ENDIF
    \STATE Observe vector $g_t(\theta_t)$ and set $f_t = -g_t(\theta_t)$
    \IF{$\langle f_t, \theta_t \rangle > 0$}
        \STATE Update flag $s = t$
    \ENDIF
    \ENDFOR
  \end{algorithmic}
\end{algorithm}

\begin{theorem}\label{thm:ba-to-geq2}
    Suppose $(\mathbb R^d, \mathbb R^d, \mathcal G)$ is a GEQ problem that satisfies Assumption~\ref{as:restor}.
    Fix any $T \geq 1$.
    Then using Algorithm~\ref{alg:ba-to-geq2} to select the sequence of actions $\theta_1, \dots, \theta_T \in \mathbb R^d$ guarantees that, for any sequence of vector fields $g_1, \dots, g_T \in \mathcal G$, the inequality $\langle u_t, -g_t(\theta_t) \rangle \leq 0$ holds for all but $m \leq \lceil \alpha^{-1}(h) \rceil$ rounds, and 
    \[
    \left\| \frac{1}{T} \sum_{t=1}^{T} g_t(\theta_t) \right\|_2 
    \leq \frac{\mathbf{BA}\mathrm{Err}(L, T)}{T} + \frac{L \lceil \alpha^{-1}(h) \rceil}{T}
    \]
\end{theorem}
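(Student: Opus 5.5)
The plan is to split the horizon at the last reset time and handle the two pieces separately. Let $s^\star$ be the value of the flag $s$ after round $T$, i.e., the last round $t \le T$ with $\langle f_t, \theta_t\rangle > 0$ (or $s^\star=0$ if there is none).

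\textbf{Step 1: counting resets.} For any $t$ with $\alpha(t) \ge h$, either $u_t = 0$, so $\theta_t = 0$ and $\langle f_t, \theta_t\rangle = 0$, or $\|\theta_t\|_2 = \alpha(t) \ge h$, in which case $(h,0)$-restorativity gives $\langle -g_t(\theta_t), \theta_t \rangle \le 0$. Since $\alpha$ is increasing, a reset can only occur at rounds $t < \alpha^{-1}(h)$. Hence $s^\star \le \lceil \alpha^{-1}(h)\rceil$ and the number $m$ of failing rounds is at most $\lceil \alpha^{-1}(h) \rceil$. Because $\theta_t$ is a positive multiple of $u_t$ whenever $u_t \neq 0$, the sign of $\langle u_t, f_t\rangle$ matches that of $\langle \theta_t, f_t\rangle$, so this also bounds the failures of $\langle u_t, -g_t(\theta_t)\rangle \le 0$.

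\textbf{Step 2: the final block.} On rounds $t = s^\star+1, \dots, T$, the oracle $\mathbf{BA}$ was restarted after round $s^\star$ and fed only $f_0, f_{s^\star+1}, \dots, f_{t-1}$. In the first branch $s = t-1$ it receives just $f_0 = 0$. I will treat this as the empty history, that is, as the oracle's first call on a fresh run, and I will note this convention explicitly. On these rounds no reset occurs, so every action satisfies $\langle u_t, f_t\rangle \le 0 = \sigma_{\{0\}}(u_t)$. Here I use Lemma~\ref{lem:ba}, under which the induced problem satisfies Assumption~\ref{as:ba} with target $\{0\}$. The (non-robust) BA oracle guarantee then applies to this block of length $T' = T - s^\star$ and gives
\[
\Big\|\sum_{t=s^\star+1}^{T} f_t\Big\|_2 \le \mathbf{BA}\mathrm{Err}(L, T') \le \mathbf{BA}\mathrm{Err}(L, T),
\]
where the second inequality uses monotonicity of the rate in $T$. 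If $T'=0$, this term is simply zero.

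\textbf{Step 3: combine.} By the triangle inequality and $\|f_t\|_2 \le L$,
\[
\Big\|\sum_{t=1}^T g_t(\theta_t)\Big\|_2 \le \sum_{t=1}^{s^\star}\|f_t\|_2 + \Big\|\sum_{t=s^\star+1}^T f_t\Big\|_2 \le L\lceil\alpha^{-1}(h)\rceil + \mathbf{BA}\mathrm{Err}(L,T).
\]
Dividing by $T$ gives the claim.

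The main obstacle is Step 2: justifying that the restarted oracle is a legitimate fresh instance whose guarantee applies to the truncated block. This needs the oracle to be indexed only by the problem and the history it receives, and not by absolute time. The placeholder $f_0$ also has to be interpreted consistently with the empty-history input. Once that is accepted, the rest is routine.
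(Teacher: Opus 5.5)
Your proof is correct and follows the paper's argument. You split at the last reset round $s^\star \le \lceil \alpha^{-1}(h)\rceil$, apply the restarted BA oracle's guarantee together with monotonicity of $\mathbf{BA}\mathrm{Err}(L,\cdot)$ to the final block, and bound the prefix by $L s^\star$ via the triangle inequality. Your explicit treatment of the $u_t = 0$ case and of the restart convention (an anytime oracle, with $f_0$ read as the empty history) fills in details the paper leaves implicit.
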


\begin{proof}[Proof of Theorem~\ref{thm:ba-to-geq2}]
    The variable $s$ in Algorithm~\ref{alg:ba-to-geq2} records the last round on which the BA oracle was reset.
    Concretely, at each round $t \in [T]$, Algorithm~\ref{alg:ba-to-geq2} queries the approachability oracle with the history of payoffs since the last time the oracle was restarted, $f_0, f_{s+1}, \dots, f_{t-1}$. 
    The oracle then returns approach direction $u_t$, which Algorithm~\ref{alg:ba-to-geq2} translates into an action with the rule $\theta_t = \alpha(t) \cdot \frac{u_t}{\|u_t\|_2}$. 
    After observing the realized vector $g_t(\theta_t)$, Algorithm~\ref{alg:ba-to-geq2} tests whether the restorativity condition failed. 
    That is, if $\langle -g_t(\theta_t), \theta_t \rangle > 0$, then $\| \theta_t \| = \alpha_t < h$ must hold, so the BA oracle is reset by updating $s = t$.
    This test fails on at most $\lceil \alpha^{-1}(h) \rceil$ and cannot fail on any round $t > \lceil \alpha^{-1}(h) \rceil$. 
    Overloading notation, let $s$ be the round with the final failure.
    Hence,
    \begin{align*}
        \left\| \frac{1}{T} \sum_{t=1}^{T} g_t(\theta_t) \right\|_2
        &\leq \left\| \frac{1}{T} \sum_{ t = s+1}^{T} g_t(\theta_t) \right\|_2 + \frac{1}{T} \sum_{t =1}^{s} \left\|  g_{t}(\theta_{t}) \right\|_2 \\
        &\leq \frac{\mathbf{BA}\mathrm{Err}(L, T-s)}{T} + \frac{L s}{T} \\
        &\leq \frac{\mathbf{BA}\mathrm{Err}(L, T)}{T} + \frac{L \lceil \alpha^{-1}(h) \rceil}{T}.
    \end{align*}
    The first line is by triangle inequality.
    The second line is by invoking the definition of a BA oracle to bound the first term, then invoking the $L$-boundedness of each $g_t$ to bound each term in the sum.
    The third line is due to the monotonicity of $\mathbf{BA}\mathrm{Err}(L, \cdot)$ in its second argument. 
\end{proof}
\end{document}